\newcommand{\Aut}{{\mathrm{Aut}}}
\newcommand{\Orb}{{\mathrm{Orb}}}
\newcommand{\Stab}{{\mathrm{Stab}}}
\newcommand{\confsub}[1]{\textsubscript{\scriptsize$\pm$#1}}
\newcommand{\appref}[1]{Appendix~\hyperref[#1]{\ref*{#1}}}
\newcommand{\figref}[1]{Figure~\hyperref[#1]{\ref*{#1}}}
\newcommand{\secref}[1]{Section~\hyperref[#1]{\ref*{#1}}}
\newcommand{\thmref}[1]{Theorem~\hyperref[#1]{\ref*{#1}}}
\newcommand{\lemref}[1]{Lemma~\hyperref[#1]{\ref*{#1}}}
\newcommand{\corref}[1]{Corollary~\hyperref[#1]{\ref*{#1}}}
\newcommand{\eqnref}[1]{Equation~(\hyperref[#1]{\ref*{#1}})}
\newcommand{\tabref}[1]{Table~\hyperref[#1]{\ref*{#1}}}
\def\1{\bm{1}}
\DeclareMathAlphabet{\mathsfit}{\encodingdefault}{\sfdefault}{m}{sl}
\SetMathAlphabet{\mathsfit}{bold}{\encodingdefault}{\sfdefault}{bx}{n}
\def\gA{{\mathcal{A}}}
\def\gE{{\mathcal{E}}}
\def\gG{{\mathcal{G}}}
\def\gL{{\mathcal{L}}}
\def\gO{{\mathcal{O}}}
\def\gS{{\mathcal{S}}}
\def\gT{{\mathcal{T}}}
\def\gX{{\mathcal{X}}}
\newcommand{\E}{\mathbb{E}}
\newcommand{\R}{\mathbb{R}}
\definecolor{color1}{RGB}{174, 182, 241}
\definecolor{color2}{RGB}{194, 221, 213}
\definecolor{color3}{RGB}{246, 245, 241}
\definecolor{color4}{RGB}{255, 160, 186}
\definecolor{color5}{RGB}{255, 210, 192}
\newcommand{\tikzcircle}[1][red]{\tikz\draw[fill=#1] (0,0) circle (.5ex);}
\theoremstyle{plain}
\newtheorem{theorem}{Theorem}[section]
\newtheorem{lemma}[theorem]{Lemma}
\newtheorem{corollary}[theorem]{Corollary}
\theoremstyle{definition}
\newtheorem{definition}[theorem]{Definition}
\theoremstyle{remark}
\icmltitlerunning{Symmetry-Aware GFlowNets}
\begin{document}

\twocolumn[
\icmltitle{Symmetry-Aware GFlowNets}



\icmlsetsymbol{equal}{*}

\begin{icmlauthorlist}
\icmlauthor{Hohyun Kim}{sch}
\icmlauthor{Seunggeun Lee}{sch}
\icmlauthor{Min-hwan Oh}{sch}
\end{icmlauthorlist}

\icmlaffiliation{sch}{Graduate School of Data Science, Seoul National University, Seoul, Republic of Korea}

\icmlcorrespondingauthor{Seunggeun Lee}{lee7801@snu.ac.kr}
\icmlcorrespondingauthor{Min-hwan Oh}{minoh@snu.ac.kr}

\icmlkeywords{Machine Learning, ICML, GFlowNets, symmetry}

\vskip 0.3in
]



\printAffiliationsAndNotice{}  

\begin{abstract}
Generative Flow Networks (GFlowNets) offer a powerful framework for sampling graphs in proportion to their rewards. However, existing approaches suffer from systematic biases due to inaccuracies in state transition probability computations. These biases, rooted in the inherent symmetries of graphs, impact both atom-based and fragment-based generation schemes. To address this challenge, we introduce Symmetry-Aware GFlowNets (SA-GFN), a method that incorporates symmetry corrections into the learning process through reward scaling. By integrating bias correction directly into the reward structure, SA-GFN eliminates the need for explicit state transition computations. Empirical results show that SA-GFN enables unbiased sampling while enhancing diversity and consistently generating high-reward graphs that closely match the target distribution.
\end{abstract}

\section{Introduction}
GFlowNets have emerged as a powerful framework for learning generative models capable of sampling complex, compositional objects with probabilities proportional to a given reward. Inspired by reinforcement learning (RL), GFlowNets generate these objects through a sequence of actions that iteratively modify the structure of the object being built. This approach is particularly well-suited for generating compositional objects, such as graphs, where there are multiple paths for constructing an object. A prominent application of GFlowNets is molecule generation, where molecules are sequentially constructed as graphs \citep{bengio2021flow, jain2023gflownets}.

\begin{figure}[t]
\vskip 0.2in
\begin{center}
\centerline{\begin{tikzpicture}[
    roundnode/.style={circle, draw, minimum size=4mm, inner sep=0pt, font=\small}
]

\def\ACenterX{-4}
\def\ACenterY{0}

\def\BCenterX{1}
\def\BCenterY{1}

\def\CCenterX{0}
\def\CCenterY{-0.6}

\def\DCenterX{-1.4}
\def\DCenterY{-1.2}

\def\edgeLength{0.6}
\def\radius{60}
\def\coloralpha{40}

\node[roundnode, fill=yellow!\coloralpha] (ANode1) at (\ACenterX, \ACenterY) {$1$};
\node[roundnode, fill=yellow!\coloralpha] (ANode2) at (
    {\ACenterX - \edgeLength * cos(\radius)}, {\ACenterY + \edgeLength * sin(\radius)}
    ) {$2$};
\node[roundnode, fill=yellow!\coloralpha] (ANode3) at (
    {\ACenterX - \edgeLength * cos(\radius)}, {\ACenterY - \edgeLength * sin(\radius)}
    ) {$3$};
\node at (\ACenterX+0.4, \ACenterY-0.5) {$G_1$};
\node (AArrowAnchor) at (\ACenterX+0.4, \ACenterY+0.4) {};

\draw (ANode1) -- (ANode2);
\draw (ANode1) -- (ANode3);

\node[roundnode, fill=yellow!\coloralpha] (BNode1) at (\BCenterX, \BCenterY) {$1$};
\node[roundnode, fill=yellow!\coloralpha] (BNode2) at (
    {\BCenterX - \edgeLength * cos(\radius)}, {\BCenterY + \edgeLength * sin(\radius)}
    ) {$2$};
\node[roundnode, fill=yellow!\coloralpha] (BNode3) at (
    {\BCenterX - \edgeLength * cos(\radius)}, {\BCenterY - \edgeLength * sin(\radius)}
    ) {$3$};
\node[roundnode, fill=red!\coloralpha] (BNode4) at (
    {\BCenterX + \edgeLength}, {\BCenterY}
    ) {$4$};
\node at (\BCenterX+0.4, \BCenterY-0.5) {$G_4$};
\node (BArrowAnchor) at (\BCenterX-0.6, \BCenterY-0.3) {};

\draw (BNode1) -- (BNode2);
\draw (BNode1) -- (BNode3);
\draw (BNode1) -- (BNode4);

\node[roundnode, fill=yellow!\coloralpha] (CNode1) at (\CCenterX, \CCenterY) {$1$};
\node[roundnode, fill=yellow!\coloralpha] (CNode2) at (
    {\CCenterX - \edgeLength * cos(\radius)}, {\CCenterY + \edgeLength * sin(\radius)}
    ) {$2$};
\node[roundnode, fill=yellow!\coloralpha] (CNode3) at (
    {\CCenterX - \edgeLength * cos(\radius)}, {\CCenterY - \edgeLength * sin(\radius)}
    ) {$3$};
\node[roundnode, fill=red!\coloralpha] (CNode4) at (
    {\CCenterX - \edgeLength - \edgeLength * cos(\radius)}, {\CCenterY + \edgeLength * sin(\radius)}
    ) {$4$};
\node at (\CCenterX+0.4, \CCenterY-0.5) {$G_3$};
\node (CArrowAnchor) at (\CCenterX-1.2, \CCenterY+0.4) {};

\draw (CNode1) -- (CNode2);
\draw (CNode1) -- (CNode3);
\draw (CNode2) -- (CNode4);

\node[roundnode, fill=yellow!\coloralpha] (DNode1) at (\DCenterX, \DCenterY) {$1$};
\node[roundnode, fill=yellow!\coloralpha] (DNode2) at (
    {\DCenterX - \edgeLength * cos(\radius)}, {\DCenterY + \edgeLength * sin(\radius)}
    ) {$2$};
\node[roundnode, fill=yellow!\coloralpha] (DNode3) at (
    {\DCenterX - \edgeLength * cos(\radius)}, {\DCenterY - \edgeLength * sin(\radius)}
    ) {$3$};
\node[roundnode, fill=red!\coloralpha] (DNode4) at (
    {\DCenterX - \edgeLength - \edgeLength * cos(\radius)}, {\DCenterY - \edgeLength * sin(\radius)}
    ) {$4$};
\node at (\DCenterX+0.4, \DCenterY-0.5) {$G_2$};
\node (DArrowAnchor) at (\DCenterX-0.7, \DCenterY+0.1) {};

\draw (DNode1) -- (DNode2);
\draw (DNode1) -- (DNode3);
\draw (DNode3) -- (DNode4);

\draw[blue!40,thick] (\DCenterX+0.4, \DCenterY+0.1) ellipse (20mm and 14mm);

\draw[->, thick] (AArrowAnchor) to[out=-30, in=160, looseness=0.8] (DArrowAnchor);
\draw[->, thick] (AArrowAnchor) to[out=0, in=180, looseness=0.8] (CArrowAnchor);
\draw[->, thick] (AArrowAnchor) to[out=30, in=180, looseness=0.8] (BArrowAnchor);

\end{tikzpicture}}
\caption{Illustration of graph transitions from $G_1$ to various successor graphs. The blue oval highlights graphs $G_2$ and $G_3$ are isomorphic.}
\label{fig:intro}
\end{center}
\vskip -0.4in
\end{figure}
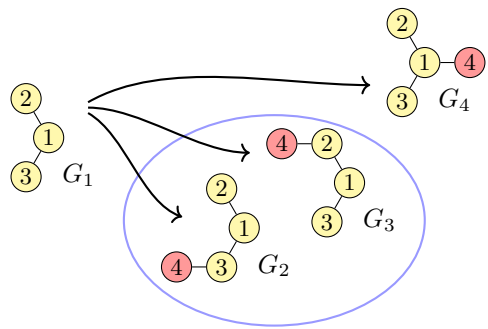

However, GFlowNet training objectives rely on the accurate computation of the transition probability of a policy, which becomes particularly challenging in graph-building environments due to the presence of \textit{equivalent actions}. These are actions that, although different, lead to the same graph structure. For instance, consider \figref{fig:intro}, where connecting a new node (node 4) to either of two existing nodes (nodes 2 or 3) results in isomorphic graphs. Although these actions are distinct, they lead to structurally identical graphs, meaning their transition probabilities must be summed. More generally, when multiple actions lead to the same state, the transition probability must account for all equivalent actions. This issue, referred to as the \textit{equivalent action problem}, arises because determining whether two actions result in the same state requires computationally expensive graph isomorphism tests \citep{ma2024baking}.

While GFlowNets were initially popularized for their reward-matching capabilities, experiments conducted by \citet{ma2024baking} demonstrate that neglecting to account for equivalent actions can introduce bias into GFlowNets. Our analysis further shows that the bias is systematic: it skews the model toward sampling graphs with fewer symmetries in node-by-node generation while favoring symmetric components in fragment-based generation. This bias is particularly problematic for tasks such as molecular generation, as molecules inherently possess natural symmetries. For instance, in the ZINC250k dataset, over 50\% of molecules exhibit more than one symmetry, with 18\% containing four or more symmetries. Ignoring symmetries results in incorrect modeling and inaccurate molecular structure generation, ultimately reducing the accuracy of the sampled molecules.

In this paper, we propose a simple yet effective modification to the GFlowNet training objectives to resolve the equivalent action problem. Our method adjusts the reward based on the number of symmetries in a graph, requiring only minimal changes to the existing training algorithms. Additionally, we introduce a new unbiased estimator for the model likelihood. Our key contributions are as follows:

\begin{itemize}
    \item We present a rigorous formulation of autoregressive graph generation within the GFlowNet framework, explicitly addressing the equivalent action problem. 
    \item We propose a simple yet effective method to address the equivalent action problem. Our approach scales the reward based on the automorphism group of the generated graph, enabling GFlowNets to accurately model and sample from the target distribution. Using a similar technique, we also derive an unbiased estimator for the model likelihood.
    \item Through experiments, we validate our theoretical results, and demonstrate the effectiveness of our method in generating diverse and high-reward samples.
\end{itemize}

\section{Related Work}

\paragraph{Autoregressive graph generation.} There are two primary formulations of autoregressive models: one based on adjacency matrices and the other based on graph sequences \citep{chen2021order}. Methods based on adjacency matrices \citep{you2018graphrnn, popova2019molecularrnn, liao2019efficient} are unlikely to suffer from the equivalent action problem because they preserve the node order information generated so far, making each pair of (graph, node order) a unique state. In contrast, equivalent actions arise in methods based on graph sequences \citep{you2018graph, li2018learning, shi2020graphaf}. This becomes problematic if a method requires state transition probabilities, as in GFlowNets. \citet{chen2021order} suggest that, for graph sequence-based methods, the size of a node’s orbit is related to the number of equivalent transitions, which inspired our work.

\paragraph{GFlowNets.} Several learning objectives have been proposed for GFlowNets, including flow matching \citep{bengio2021flow}, detailed balance \citep{bengio2023gflownet}, trajectory balance \citep{malkin2022trajectory}, sub-trajectory balance \citep{madan2023learning}, as well as their variants to improve training efficiency \citep{pan2023better, shen2023towards}. Recently, GFlowNets have been found to be equivalent to maximum entropy reinforcement learning \citep{tiapkin2024generative, mohammadpour2024maximum}, which was previously known to be inadequate for directed acyclic graph (DAG) environments \citep{bengio2021flow}. However, none of these objectives can avoid the equivalent action problem, as they are formalized based on state transitions, where multiple isomorphic graphs can represent the next state.

The work most closely related to ours is \citet{ma2024baking}, which highlighted the importance of accounting for equivalent actions to compute exact transition probabilities. Their approach involved an approximate test using positional encoding (PE) to detect equivalent actions at each transition. However, the bias in GFlowNets was validated only through experiments on synthetic datasets, without any theoretical analysis. Furthermore, their method relies on approximate tests that must be applied at every transition, making it computationally expensive. In contrast, our work provides an exact and efficient solution, requiring corrections only once at the end of trajectories rather than at each transition. This simplification not only reduces computational overhead but also makes our method straightforward to implement. Additionally, we present a comprehensive analysis showing that this bias arises in general settings, affecting both atom- and fragment-based generation schemes, and significantly impacts learning, particularly for highly symmetric graphs. Additional comparisons can be found in \appref{app:comparison}.

\section{Preliminaries}


\subsection{Graph Theory}

Let $G=(V, E)$ denote a graph, where $V = \{v_1, \dots, v_n\}$ is the set of $n$ vertices, and $E \subseteq V \times V$ is the set of edges. For heterogeneous graphs, we also define labeling functions $l_{n}$, $l_{e}$, and $l_{g}$, which map nodes, edges, and graphs to their respective attributes. We denote $\gG$ as the set of all such graphs under consideration. A permutation $\pi$ is a bijective mapping defined on the vertex set. We extend the permutation to sets as $\pi(V) = \{\pi(v) : v \in V\}$ and $\pi(E) = \{(\pi(v_i), \pi(v_j)) : (v_i, v_j) \in E\}$, as well as to the graph as $\pi(G) = (\pi(V), \pi(E))$. Since any permutation simply relabels node indices, it maps to a structurally identical graph. This notion is formalized as graph isomorphism.

\begin{definition}[Isomorphism]
    Two graphs $G = (V, E)$ and $G' = (V', E')$ are isomorphic, denoted $G \cong G'$, if there exists a permutation $\pi: V \rightarrow V'$ such that $\pi(E) = E'$. For heterogeneous graphs, the permutation must also preserve labels: for every $v \in V$, $l_n(v) = l_n'(\pi(v))$, for every $(u, v) \in E$, $l_e(u, v) = l_e'(\pi(u), \pi(v))$, and $l_g(G) = l_g'(G')$. 
\end{definition}

An automorphism is a special case of an isomorphism where the graph is mapped to itself.

\begin{definition}[Automorphism]
    An automorphism of a graph $G = (V, E)$ is a permutation $\pi$ on the vertex set $V$ that preserves the edge set, meaning $\pi(E) = E$. If labels are present, they must also be preserved under the permutation. The set of all automorphisms of a graph $G$ is called the automorphism group of $G$, denoted by $\Aut(G)$.
\end{definition}

In \figref{fig:intro}, graph $G_1$ has two automorphisms: the identity mapping and one that permutes nodes $2$ and $3$. We denote the order (or size) of the automorphism group as $|\Aut(G)|$, which represents the number of symmetries in the graph.

\begin{definition}[Orbit]
    The orbit of a node $u \in V$ in graph $G$ is defined as $\Orb(G, u) = \{v \in V : \exists \pi \in \Aut(G), \pi(u) = v\}$. Similarly, the orbit of an edge $(u, v) \in E$ in graph $G$ is defined as $\Orb(G, u, v) = \{(h, k) \in E : \exists \pi \in \Aut(G), (\pi(u), \pi(v)) = (h, k)\}$. More generally, the orbit of a node set $U \subseteq V$ in graph $G$ is defined as $\Orb(G, U) = \{U' : \exists \pi \in \Aut(G), \pi(U) = U'\}$.
\end{definition}

An orbit is a set of nodes or edges that are structurally identical. In \figref{fig:intro}, the orbit of node $2$ in graph $G_1$ is $\{2, 3\}$. Equivalent actions occur because they act on nodes in the same orbit; since nodes $2$ and $3$ are in the same orbit, adding a new node to either one is equivalent. This point will be further discussed in \secref{sec:the-equivalent-action-problem}.

\subsection{Generative Flow Networks}

The generation process of GFlowNets is defined by a finite directed acyclic graph (DAG) $(\gS, \gA)$, where $\gS = \{s_i\}$ is the set of states, and $\gA \subseteq \gS \times \gS$ is the set of state transitions. Let $s_0 \in \gS$ denote the special starting point of the process, called the initial state, with no incoming edges in the transition graph. Let $\gX \subseteq \gS$ be the set of terminal states, for which rewards are given. From the initial state $s_0$, objects are constructed sequentially by the forward transition policy $p_\gA(\cdot|s)$ until reaching terminal states. A set of complete trajectories, denoted as $\gT$, consists of sequences of transitions $\tau=(s_0, \dots, s_n)$ starting from the initial state $s_0$ and terminating at $s_n \in \gX$, such that $(s_t, s_{t+1}) \in \gA$. Let $\bar p_\gA(s)$ be the probability of reaching $s$ by following the policy $p_\gA$. The goal of GFlowNets is to train a policy $p_\gA$ that generates objects with a probability proportional to their reward. Specifically, the policy satisfies $\bar p_\gA(x) = R(x)/Z$ for all $x \in \gX$, where $Z$ is a normalizing constant. This is achieved by training $p_\gA$ using the following objectives.

\paragraph{Trajectory Balance \citep{malkin2022trajectory}.} The Trajectory Balance (TB) objective is based on the flow consistency constraint at the trajectory level. Given a complete trajectory $\tau$, the TB objective is defined as follows:

\vspace{-\baselineskip}
\begin{equation*}
    \vspace{0pt}
    \gL_{\mathrm{TB}}(\tau) 
        = \left( \log \frac{Z \prod_{t=0}^{n-1} p_\gA(s_{t+1}|s_t)}{R(s_n) \prod_{t=0}^{n-1} q_\gA(s_t|s_{t+1})} \right)^2.
\end{equation*}

It introduces a backward policy $q_\gA$ that reverses the process. Given $q_\gA$, the forward policy $p_\gA$ and the normalizing constant $Z$ are trained to match the backward flow induced by the reward function and the backward policy.

\paragraph{Detailed Balance \citep{bengio2023gflownet}.} The Detailed Balance (DB) objective is based on the flow consistency constraint at the state transition level. The objective is defined for each transition $(s, s')$ as:

\vspace{-\baselineskip}
\begin{equation*}
    \gL_{\mathrm{DB}}(s, s') 
        = \left( \log \frac{F(s) p_\gA(s'|s)}{F(s') q_\gA(s|s')} \right)^2.
\end{equation*}

The DB objective requires learning the state flow function $F: \gS \rightarrow \R^+$, which represents the unnormalized probability that the policy visits state $s$.

\section{The Equivalent Action Problem}
\label{sec:the-equivalent-action-problem}
In this section, we formalize the graph generation process in the context of GFlowNets and investigates its properties.

\subsection{Problem Definition}
\label{subsec:problem-definition}

Consider a sequential graph generation process $(\gG, \gE)$ that constructs graphs by modifying the nodes and edges of existing partial graphs, where $\gE \subseteq \gG \times \gG$ represents the set of transitions between graphs. In this section, we clarify the relationship between the two processes, $(\gS, \gA)$ and $(\gG, \gE)$.


Since isomorphism is an equivalence relation, it partitions the space $\gG$ into classes, where each graph in a class is structurally identical to the others. Let $[G] = \{G' \in \gG : G' \cong G\}$ denote the equivalence class of $G$ induced by graph isomorphism. The state space $\gS$ is defined as the set of equivalence classes of graphs, $\gS = \{[G] : G \in \gG \}$, rather than the graph space $\gG$ itself. This is because our goal in using GFlowNets is to sample \textit{any} graph within the equivalence class $s = [G]$ in proportion to $R(s)$. State transitions $\gA$ can also be defined by partitioning the graph transitions $\gE$ by the following equivalence relation:

\begin{definition}[Transition equivalence]
\label{dfn:transition-equivalence}
Graph transitions $(G_1, G_1')$ and $(G_2, G_2')$ are transition-equivalent if $G_1\cong G_2$ and $G_1'\cong G_2'$.
\end{definition}
\vspace{-0.5\baselineskip}

In practice, a graph generation process $(\gG, \gE)$ is constructed by first designing a set of allowable actions in a given graph. For example, \texttt{AddEdge}$(G, u, v)$ adds an edge $(u, v)$ to the existing graph $G$, and $\texttt{AddNode}(G, u)$ adds a new node to an existing node $u$. The $\texttt{Stop}(G)$ terminates the process, in which case the graph-level attribute is flagged as terminated. Through this construction, graph transitions $\gE$ are ensured to be structured so that any pair of isomorphic graphs have isomorphic successors.

In this setup, the state transition probability can be expressed in terms of graph transitions as follows: Let $p_\gE$ denote a forward policy over the graph space, and let $\gE(G) = \{G':(G, G')\in\gE\}$ denote the set of next graphs reachable from $G$. If $p_\gE$ is parameterized by permutation-equivariant network, then

\vspace{-\baselineskip}
\begin{equation}
\label{eqn:state transition probability}
    p_\gA(s'|s) = \sum_{G' \in \gE(G) \cap s'} p_\gE(G'|G),
\end{equation}
\vspace{-\baselineskip}

for any $G\in s$, where $\gE(G) \cap s'$ is the set of next graphs that are in the same equivalence class $s'$ (see \appref{app:state transition probability} for the derivation). This is an exact formula for computing state transition probabilities, which we use as a reference for comparison. However, it requires looking one step ahead and comparing the resulting graphs to identify $\gE(G) \cap s'$. This process involves multiple computationally expensive graph isomorphism tests for each transition. 

As an alternative, \citet{ma2024baking} suggested using orbits to identify transition-equivalent actions, which we formalize next. Define \textit{graph actions} as triples, $\bar\gE = \{(G, t, u)\}$, where $t$ denotes the action type and $u$ specifies the nodes or edges affected by the action. Since graph transitions $\gE$ are constructed by predefined set of graph actions, there is a one-to-one correspondence between $\gE$ and $\bar\gE$. However, we explicitly distinguish between them to introduce another equivalence relation.

\begin{definition}[Orbit equivalence]
\label{dfn:orbit-equivalence}
Graph actions $(G_1, t_1, u_1)$ and $(G_2, t_2, u_2)$ are orbit-equivalent if $t_1 = t_2$ and there exists a permutation $\pi$ such that $\pi(G_1)=G_2$ and $\pi(u_1)=u_2$. When $G_1 = G_2$, orbit equivalence indicates that $u_1$ and $u_2$ belong to the same orbit. 
\end{definition}

In \figref{fig:intro}, the transitions $(G_1, G_2)$ and $(G_1, G_3)$ are transition-equivalent because the resulting graphs are isomorphic, $G_2 \cong G_3$. These transitions are induced by the actions $\texttt{AddNode}(G_1, 2)$ and $\texttt{AddNode}(G_1, 3)$, which are orbit-equivalent since nodes $2$ and $3$ belong to the same orbit in graph $G_1$. Similarly to $\gA$, which is induced by transition equivalence relation, we define $\bar\gA$ as the set of equivalence classes induced by the orbit equivalence relation on $\bar\gE$. The notion of orbit equivalence is particularly useful because it serves as a replacement for transition equivalence. The next theorem establishes that orbit-equivalent actions induce equivalent transitions.

\begin{theorem}
    \label{thm:equivalence implication}
    Let $(G_1, t_1, u_1, G_1')$ and $(G_2, t_2, u_2, G_2')$ be two graph transitions induced by actions $e_1 = (G_1, t_1, u_1)$ and $e_2 = (G_2, t_2, u_2)$. If $e_1$ and $e_2$ are orbit-equivalent, then $(G_1, G_1')$ and $(G_2, G_2')$ are transition-equivalent.
\end{theorem}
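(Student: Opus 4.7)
The plan is to reduce transition equivalence to two isomorphisms, one of which is immediate and the other of which follows from the permutation-equivariance of the predefined graph action primitives (\texttt{AddNode}, \texttt{AddEdge}, \texttt{Stop}). Let $\pi$ be the permutation given by orbit equivalence, so that $\pi(G_1)=G_2$, $\pi(u_1)=u_2$, and $t_1=t_2=:t$. The first half of transition equivalence, $G_1 \cong G_2$, is then immediate from the existence of $\pi$, together with label preservation if the graphs are heterogeneous, which is built into the definition of $\pi(G_1)=G_2$ as graphs (not just as vertex-edge sets).

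The heart of the argument is showing $G_1'\cong G_2'$. For this I would formalize and invoke the equivariance of each action type under vertex relabeling. Concretely, if we let $\Phi_t$ denote the operator that, given $(G,u)$, returns the successor graph obtained by performing an action of type $t$ at $u$, then for every permutation $\pi$ of the vertex set of $G$,
\begin{equation*}
    \Phi_t\bigl(\pi(G),\pi(u)\bigr) \;=\; \pi\bigl(\Phi_t(G,u)\bigr).
\end{equation*}
This is a structural fact about the three action primitives: \texttt{AddEdge}$(G,u,v)$ simply inserts the edge $(u,v)$, so relabeling vertices before or after commutes; \texttt{AddNode}$(G,u)$ attaches a fresh vertex to $u$, which is unaffected by how the other vertices are named; and \texttt{Stop}$(G)$ only toggles a graph-level attribute and therefore trivially commutes with $\pi$. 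In heterogeneous settings the same check applies to the attached labels, since $\pi$ preserves them by assumption. This is precisely the equivariance property alluded to in the excerpt's remark that ``graph transitions $\gE$ are ensured to be structured so that any pair of isomorphic graphs have isomorphic successors,'' but made explicit at the action level.

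Applying this identity to $t,\pi,G_1,u_1$ gives
\begin{equation*}
    G_2' \;=\; \Phi_t(G_2,u_2) \;=\; \Phi_t\bigl(\pi(G_1),\pi(u_1)\bigr) \;=\; \pi\bigl(\Phi_t(G_1,u_1)\bigr) \;=\; \pi(G_1'),
\end{equation*}
so $\pi$ itself is a graph isomorphism from $G_1'$ to $G_2'$, whence $G_1'\cong G_2'$. Combined with $G_1\cong G_2$, this yields transition equivalence of $(G_1,G_1')$ and $(G_2,G_2')$ by \hyperref[dfn:transition-equivalence]{Definition~\ref*{dfn:transition-equivalence}}.

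The main obstacle, such as it is, is bookkeeping rather than mathematics: one must verify the equivariance identity case-by-case for every action primitive in the generation scheme (and for both atom-based and fragment-based variants), and in the heterogeneous case confirm that all node, edge, and graph-level labels are preserved under $\pi$. Once that routine check is in place the theorem follows in one line, as shown above.
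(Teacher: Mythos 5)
Your proposal is correct and follows essentially the same route as the paper: the paper's proof also establishes, action type by action type, that the permutation furnished by orbit equivalence (extended to the fresh vertex in the \texttt{AddNode} case) is itself an isomorphism between the successor graphs, e.g.\ via $\pi(E \cup \{(u,v)\}) = \pi(E) \cup \{(\pi(u),\pi(v))\}$, which is exactly your equivariance identity $\Phi_t(\pi(G),\pi(u)) = \pi(\Phi_t(G,u))$ instantiated per primitive. The only difference is presentational: you state the equivariance once abstractly and defer the case analysis as bookkeeping, whereas the paper carries out that case analysis explicitly (including the extension $\tilde\pi$ with $\tilde\pi(w)=w$ for \texttt{AddNode}), which is the part your write-up would need to spell out to be complete.
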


In other words, graph actions operating on the same orbit lead to isomorphic graphs. This is because orbits, rather than individual nodes or edges, are structurally important in determining actions. The theorem implies that, in a given graph $G$, transition-equivalent actions---actions that lead to isomorphic graphs---can be identified by computing orbits. However, transition-equivalent actions are not always orbit-equivalent (see \appref{app:Example of Equivalent Actions}), meaning that the orbit equivalence relation provides a finer partition of $\gE$ than the transition equivalence. 

Given the distinction between transition-equivalent actions $\gA$ and orbit-equivalent actions $\bar\gA$, we define the state-action probability $p_{\bar\gA}(a|s)$ differently from the state transition probability $p_\gA(s'|s)$: while $p_\gA(s'|s)$ accounts for all transition-equivalent actions, $p_{\bar\gA}(a|s)$ aggregates only over orbit-equivalent actions. Formally, for a given graph $G$, let $\bar\gE(G)$ denote the set of graph actions available from $G$. Then, for a state-action pair $s \in \gS$ and $a \in \bar\gA$, the state-action probability is defined as:

\vspace{-\baselineskip}
\begin{equation}
\label{eqn:state action probability}
    p_{\bar\gA}(a|s) = \sum_{e \in \bar\gE(G) \cap a} p_{\bar \gE}(e|G)
\end{equation}
\vspace{-\baselineskip}

where $p_{\bar \gE}(e|G)$ denotes $p_\gE(G'|G)$ for $G'$ being the next graph. Here, $\bar\gE(G) \cap a$ represents the set of orbit-equivalent actions from $G$. By \thmref{thm:equivalence implication}, it follows that $p_{\bar\gA}(a|s) \leq p_\gA(s'|s)$ in general.

Computing exact state transition probabilities includes expensive graph isomorphism tests as stated. Instead, \citet{ma2024baking} proposed using \eqnref{eqn:state action probability}, observing that $p_{\bar\gA}(a|s)=p_\gA(s'|s)$ in most cases. However, computing orbits for every transition remains computationally intensive, which led them to develop approximate solutions. In the next section, we present an exact and efficient solution to address this challenge.

\subsection{Properties of Equivalent Actions}
\label{subsec:equivalent-actions}

\begin{figure*}[t]
\centering
    \begin{tikzpicture}[
    roundnode/.style={circle, draw, minimum size=3mm, inner sep=0pt}
]

\def\radius{0.6}

\def\angle{360/5}
\def\coloralpha{40}

\foreach \i in {1, 2, 3, 4, 5} {
    \coordinate (n\i) at ({\radius * cos(\angle*\i)}, {\radius * sin(\angle*\i)});
}


\foreach \i in {2, 3, 4, 5} {
    \node[roundnode] (g1n\i) at (n\i) {};
}

\node[roundnode, fill=color2] (g1n2) at (n2) {};
\node[roundnode, fill=color3] (g1n3) at (n3) {};
\node[roundnode, fill=color1] (g1n4) at (n4) {};
\node[roundnode, fill=color4] (g1n5) at (n5) {};

\draw (g1n2) -- (g1n3);
\draw (g1n3) -- (g1n4);
\draw (g1n4) -- (g1n5);

\def\length{\radius * 1.17557} 
\def\centerx{\radius*cos(216)}
\def\centery{\radius*sin(216)}
\def\nx{\centerx+\length*cos(198)}
\def\ny{\centery+\length*sin(198)}

\def\centeryarrow{\radius*sin(216) + 1}

\node[roundnode, fill=color1] (tail1) at ({\nx}, {\ny}) {};
\node[roundnode, fill=color4] (tail2) at (
    {\nx+\length*cos(108+18)}, {\ny+\length*sin(108+18)}) {};

\draw (g1n3) -- (tail1);
\draw (tail1) -- (tail2);

\node (g1t) at ({\centerx}, {\centery - 0.8}) {$G_1$};
\node[font=\small] at ({\centerx}, {\centery + 1.3}) {$|\Aut(G_1)| = 2$};

\node (g1p1r) at ({\centerx + 1.5}, {\centeryarrow - 0.6}) {};
\node (g1p2r) at ({\centerx + 1.5}, {\centeryarrow - 0.8}) {};
\node (g1p3r) at ({\centerx + 1.5}, {\centeryarrow - 1}) {};
\node (g1p4r) at ({\centerx + 1.5}, {\centeryarrow - 1.2}) {};

\begin{scope}[xshift=6.2cm]
\foreach \i in {1, 2, 3, 4, 5} {
    \coordinate (n\i) at ({\radius * cos(\angle*\i)}, {\radius * sin(\angle*\i)});
}

\node[roundnode, fill=color4] (g2n1) at (n1) {};
\node[roundnode, fill=color1] (g2n2) at (n2) {};
\node[roundnode, fill=color3] (g2n3) at (n3) {};
\node[roundnode, fill=color1] (g2n4) at (n4) {};
\node[roundnode, fill=color4] (g2n5) at (n5) {};


\draw (g2n1) -- (g2n2);
\draw (g2n2) -- (g2n3);
\draw (g2n3) -- (g2n4);
\draw (g2n4) -- (g2n5);

\node[roundnode, fill=color1] (tail1) at ({\nx}, {\ny}) {};
\node[roundnode, fill=color4] (tail2) at (
    {\nx+\length*cos(108+18)}, {\ny+\length*sin(108+18)}) {};
\draw (g2n3) -- (tail1);
\draw (tail1) -- (tail2);

\node (g2t) at ({\centerx}, {\centery - 0.8}) {$G_2$};
\node[font=\small] at ({\centerx}, {\centery + 1.3}) {$|\Aut(G_2)| = 6$};

\node (g2p1l) at ({\centerx - 1.4}, {\centeryarrow - 0.6}) {};
\node (g2p2l) at ({\centerx - 1.4}, {\centeryarrow - 0.8}) {};
\node (g2p3l) at ({\centerx - 1.4}, {\centeryarrow - 1}) {};
\node (g2p4l) at ({\centerx - 1.4}, {\centeryarrow - 1.2}) {};

\node (g2p1r) at ({\centerx + 1.4}, {\centeryarrow - 0.6}) {};
\node (g2p2r) at ({\centerx + 1.4}, {\centeryarrow - 0.8}) {};
\node (g2p3r) at ({\centerx + 1.4}, {\centeryarrow - 1}) {};
\node (g2p4r) at ({\centerx + 1.4}, {\centeryarrow - 1.2}) {};

\end{scope}


\begin{scope}[xshift=12.4cm]
\foreach \i in {1, 2, 3, 4, 5} {
    \coordinate (n\i) at ({\radius * cos(\angle*\i)}, {\radius * sin(\angle*\i)});
}

\node[roundnode, fill=color5] (g3n1) at (n1) {};
\node[roundnode, fill=color2] (g3n2) at (n2) {};
\node[roundnode, fill=color3] (g3n3) at (n3) {};
\node[roundnode, fill=color2] (g3n4) at (n4) {};
\node[roundnode, fill=color5] (g3n5) at (n5) {};

\draw (g3n1) -- (g3n2);
\draw (g3n2) -- (g3n3);
\draw (g3n3) -- (g3n4);
\draw (g3n4) -- (g3n5);
\draw (g3n5) -- (g3n1);

\node[roundnode, fill=color1] (tail1) at ({\nx}, {\ny}) {};
\node[roundnode, fill=color4] (tail2) at (
    {\nx+\length*cos(108+18)}, {\ny+\length*sin(108+18)}) {};

\draw (g3n3) -- (tail1);
\draw (tail1) -- (tail2);
\node (g3t) at ({\centerx}, {\centery - 0.8}) {$G_3$};
\node[font=\small] at ({\centerx}, {\centery + 1.3}) {$|\Aut(G_3)| = 2$};

\node (g3p1l) at ({\centerx - 1.4}, {\centeryarrow - 0.6}) {};
\node (g3p2l) at ({\centerx - 1.4}, {\centeryarrow - 0.8}) {};
\node (g3p3l) at ({\centerx - 1.4}, {\centeryarrow - 1}) {};
\node (g3p4l) at ({\centerx - 1.4}, {\centeryarrow - 1.2}) {};
\end{scope}

\def\yellowcirc{\tikz\draw[fill=color2] (0,0) circle (.7ex);}
\def\bluecirc{\tikz\draw[fill=color4] (0,0) circle (.7ex);}

\def\blueedge{
\tikz{
    \node[circle, draw, minimum size=1mm, inner sep=0pt, fill=color4] (a) {};
    \node[circle, draw, minimum size=1mm, inner sep=0pt, fill=color4, right=1.2mm] (b) {};
    \draw[-] (a) -- (b);
}
}
\def\magentaedge{
\tikz{
    \node[circle, draw, minimum size=1mm, inner sep=0pt, fill=color5] (a) {};
    \node[circle, draw, minimum size=1mm, inner sep=0pt, fill=color5, right=1.2mm] (b) {};
    \draw[-] (a) -- (b);
}
}

\draw[->, thick] (g1p1r) to[looseness=0.8] (g2p1l);
\draw[{Latex}-, thick] (g1p2r) [looseness=0]to (g2p2l);
\draw[{Latex}-, thick] (g1p3r) [looseness=0]to (g2p3l);
\draw[{Latex}-, thick] (g1p4r) [looseness=0]to (g2p4l);

\draw[->, thick] (g2p1r) to[looseness=0.8] (g3p1l);
\draw[->, thick] (g2p2r) to[looseness=0.6] (g3p2l);
\draw[->, thick] (g2p3r) to[looseness=0.4] (g3p3l);
\draw[{Latex}-, thick] (g2p4r) [looseness=0]to (g3p4l);

\end{tikzpicture}
    \vspace{-\baselineskip}
    \caption{Graphs representing transitions $(G_1, G_2, G_3)$, where the first transition is performed by \texttt{AddNode} and the second by \texttt{AddEdge}. The number of forward/backward actions are represented as the number of arrows. Symmetries in each graph is related to orbit-equivalent actions, as seen in the ratio $|\Aut(G_1)|/|\Aut(G_2)| = |\Orb(G_1,$ \tikzcircle[color2]$)|/|\Orb(G_2,$\tikzcircle[color4]$)|$. Nodes in the same orbit are given the same color.}
    \label{fig:transition}
    \vskip -0.2in
\end{figure*}
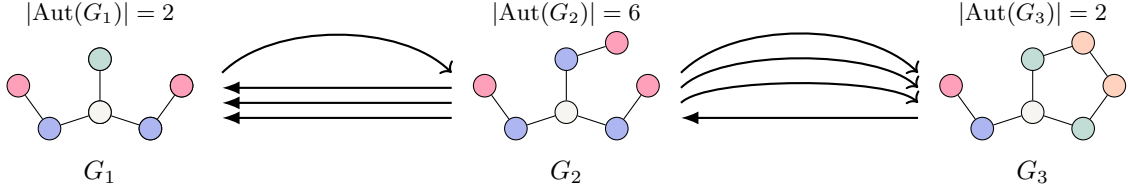

In fact, accounting for orbit equivalence is sufficient for GFlowNets, as demonstrated by the following theorem.

\begin{theorem}[Sufficiency of orbit equivalence]
    \label{thm:orbit equivalence sufficiency}
    Let $q_{\bar\gA}$ denote the backward state-action policy. State-action flow constraints are defined as $F(s)p_{\bar\gA}(a|s) = F(s')q_{\bar\gA}(a|s')$. If state-action flow constraints are satisfied for all possible state-action pairs, then the state transition flow constraints, $F(s)p_{\gA}(s'|s) = F(s')q_{\gA}(s|s')$, are also satisfied.
\end{theorem}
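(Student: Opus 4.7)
The plan is to derive the state transition flow constraint by decomposing each state transition into a sum over its constituent orbit-equivalence classes of actions and then applying the state-action flow constraint term by term.

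First, I would fix two states $s, s' \in \gS$ connected by a transition, together with a representative graph $G \in s$. By \eqnref{eqn:state transition probability}, $p_\gA(s'|s) = \sum_{G' \in \gE(G) \cap s'} p_\gE(G'|G)$. Using the one-to-one correspondence between $\gE$ and $\bar\gE$, every graph transition $G \to G'$ in $\gE(G) \cap s'$ arises from a unique graph action in $\bar\gE(G)$, which in turn belongs to a unique orbit-equivalence class $a \in \bar\gA$. By \thmref{thm:equivalence implication}, all graph actions in the same class $a$ lead to transition-equivalent graphs, so either every action in $\bar\gE(G) \cap a$ maps $G$ into $s'$ or none does. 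Letting $A \subseteq \bar\gA$ denote the set of orbit-equivalence classes whose actions at $G$ lead to $s'$, we obtain the decomposition
\[
    p_\gA(s'|s) = \sum_{a \in A} p_{\bar\gA}(a|s).
\]

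Next, I would derive an analogous decomposition for the backward transition. Each forward graph action from $G$ to $G'$ has a natural inverse graph action from $G'$ to $G$, and this correspondence descends to a canonical bijection between orbit-equivalence classes of forward actions from $s$ to $s'$ and orbit-equivalence classes of backward actions from $s'$ to $s$. Reindexing the backward sum by the same set $A$, the same argument yields
\[
    q_\gA(s|s') = \sum_{a \in A} q_{\bar\gA}(a|s').
\]

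Finally, summing the hypothesized state-action flow constraint $F(s) p_{\bar\gA}(a|s) = F(s') q_{\bar\gA}(a|s')$ over $a \in A$ and substituting the two decompositions yields
\[
    F(s) p_\gA(s'|s) = \sum_{a \in A} F(s) p_{\bar\gA}(a|s) = \sum_{a \in A} F(s') q_{\bar\gA}(a|s') = F(s') q_\gA(s|s'),
\]
which is exactly the state transition flow constraint. The main obstacle will be justifying the decomposition rigorously: showing via \thmref{thm:equivalence implication} that the orbit-equivalence partition of $\bar\gE(G)$ induces a valid partition of $\gE(G) \cap s'$ with no double counting or omissions, and verifying that the set $A$ is well defined independently of the chosen representative $G \in s$ (which follows from the permutation-equivariance of $p_\gE$ underlying \eqnref{eqn:state transition probability}). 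Formalizing the bijection between forward and backward orbit-equivalence classes is routine once the inverse-action structure is unwound, after which the final summation step is immediate.
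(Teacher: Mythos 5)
Your proposal is correct and follows essentially the same route as the paper's proof: the paper likewise decomposes $p_\gA(s'|s)$ and $q_\gA(s|s')$ as sums of $p_{\bar\gA}(a|s)$ and $q_{\bar\gA}(a|s')$ over the set of orbit-equivalence classes of actions leading from $s$ to $s'$ (your set $A$ is its $\bar\gA(s,s')$), justified by \thmref{thm:equivalence implication}, and then sums the state-action flow constraints over that set. Your additional remarks on the well-definedness of the decomposition and the forward/backward correspondence only make explicit what the paper leaves implicit.
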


GFlowNets are typically formulated such that each pair of states are connected at most once in a DAG $(\gS, \gA)$, ensuring that every action leads to a unique next state. In contrast, in $(\gS, \bar\gA)$, distinct actions can lead to the same next state. This can be interpreted as multiple pathways connecting the same pair of states, enabling parallel flows. \thmref{thm:orbit equivalence sufficiency} essentially states that edge flows can be subdivided into multiple flows within a transition.

Our next goal is to simplify the computation of state-action probabilities. First note that $p_{\bar\gA}(a|s)$ can simply be expressed as:

\vspace{-\baselineskip}
\begin{align*}
    p_{\bar\gA}(a|s) = |\bar\gE(G) \cap a| \cdot p_\gE(G'|G)
\end{align*}
\vspace{-\baselineskip}

where $|\bar\gE(G) \cap a|$ represents the number of orbit-equivalent actions. This is because when actions are parameterized using graph neural networks (GNNs), orbit-equivalent actions are assigned equal probabilities. In general, permutation-equivariant functions produce identical representations for nodes within the same orbit (as detailed in \appref{app:permutation equivariance}). When node representations are aggregated to compute edge representations using invariant aggregators such as \texttt{SUM} or \texttt{MEAN}, edges within the same orbit also receive identical representations. Alternative parameterizations, such as the relative edge parameterization proposed by \citet{shen2023towards}, also assign equal probabilities to orbit-equivalent actions, while enhancing representational power. 

Note that the number of orbit-equivalent actions, $|\bar\gE(G) \cap a|$, corresponds to the size of the orbit of the associated nodes or edges. The following lemma shows that, when considering ratios, counting orbit-equivalent actions simplifies to counting automorphisms.

\begin{lemma}
    \label{lem:orb-aut}
    Let $G'=G[E\cup(u, v)]$ be a graph induced by adding an edge $(u, v)$ to graph $G$. Then, the following relationship holds:
    \begin{equation*}
        \frac{|\Orb(G, u, v)|}{|\Orb(G', u, v)|}= \frac{|\Aut(G)|}{|\Aut(G')|}
    \end{equation*}
    \vspace{-\baselineskip}
\end{lemma}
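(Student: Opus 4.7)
The natural tool here is the \emph{orbit--stabilizer theorem}, applied to the action of each automorphism group on vertex pairs. The plan is to view $\Orb(G,u,v)$ as the orbit of the (not-yet-adjacent) pair $\{u,v\}$ under $\Aut(G)$ acting by $\pi\cdot\{a,b\} = \{\pi(a),\pi(b)\}$, and $\Orb(G',u,v)$ as the orbit of the same pair---now an edge of $G'$---under $\Aut(G')$ acting in the same way. Once both orbits are expressed through the same action on the same object, dividing two orbit--stabilizer identities reduces the claim to a statement about stabilizers.

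\textbf{Key reduction.} Applying orbit--stabilizer to each action, I would obtain
\[|\Aut(G)| = |\Orb(G,u,v)| \cdot |\Stab_{\Aut(G)}(\{u,v\})|,\]
\[|\Aut(G')| = |\Orb(G',u,v)| \cdot |\Stab_{\Aut(G')}(\{u,v\})|.\]
Dividing the first identity by the second immediately rearranges to the target ratio, provided one can establish
\[|\Stab_{\Aut(G)}(\{u,v\})| = |\Stab_{\Aut(G')}(\{u,v\})|.\]
Thus the whole lemma collapses to an equality of stabilizer sizes.

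\textbf{Stabilizer equality and main obstacle.} I would prove the stronger set-theoretic equality $\Stab_{\Aut(G)}(\{u,v\}) = \Stab_{\Aut(G')}(\{u,v\})$ as subgroups of $\mathrm{Sym}(V)$. Take any permutation $\pi$ of $V$ with $\pi(\{u,v\})=\{u,v\}$. Because $\pi$ fixes this pair, the edge $(u,v)$ is sent to itself, so
\[\pi\bigl(E\cup\{(u,v)\}\bigr) = \pi(E)\cup\{(u,v)\}.\]
The one subtle point---and the main obstacle---is verifying that $(u,v)\notin\pi(E)$, which follows because if $\pi(e)=(u,v)$ for some $e\in E$ then $e=\pi^{-1}(u,v)=(u,v)$, contradicting the hypothesis that $(u,v)\notin E$ (the edge is genuinely new). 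Combined with $(u,v)\notin E$, this makes $\pi(E)=E$ logically equivalent to $\pi(E\cup\{(u,v)\})=E\cup\{(u,v)\}$, i.e.\ $\pi\in\Aut(G)\Leftrightarrow\pi\in\Aut(G')$, so the two stabilizers coincide. In the heterogeneous case the argument is unchanged, since node labels are intrinsic to $V$ and the new edge's label is preserved whenever $\pi$ fixes $\{u,v\}$. Plugging the stabilizer equality back into the ratio of orbit--stabilizer identities yields the claimed relation.
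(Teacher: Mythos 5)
Your proof is correct and follows essentially the same route as the paper's: both apply the orbit--stabilizer theorem to reduce the claim to the set-theoretic equality of the stabilizers of the pair in $\Aut(G)$ and $\Aut(G')$, and both establish that equality from the identity $\pi(E\cup\{(u,v)\}) = \pi(E)\cup\{(u,v)\}$ together with the fact that $(u,v)\notin E$. The only cosmetic differences are that you use the setwise stabilizer of $\{u,v\}$ where the paper uses the pointwise stabilizer fixing $u$ and $v$ individually (the ratio is unaffected since the stabilizers coincide as sets in either convention), and your direct verification that $(u,v)\notin\pi(E)$ is slightly cleaner than the paper's argument by contradiction.
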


We presented \lemref{lem:orb-aut} in the context of \texttt{AddEdge} for simplicity, but similar lemma holds for other action types such as \texttt{AddNode}, \texttt{AddNodeAttributes}, and others used in node-by-node generation (See \appref{app:definitions of actions} for the full list of considered actions). We also consider \texttt{AddFragment} in our experiments, but discuss its properties separately in \appref{app:fragment-based}, as the corresponding formula differs in fragment-based schemes.

In \figref{fig:transition}, we observe that the number of equivalent actions changes as the graph evolves. For instance, from $G_1$, there is only one forward equivalent action, while from $G_2$, there are three. The number of backward actions also varies with each transition, making it seem daunting to account for all equivalent actions step-by-step. However, the ratio of forward and backward orbit-equivalent actions can be simply expressed as the ratio of the sizes of their automorphism groups. This is the basis for the next theorem.

\begin{theorem}[Automorphism correction]
    \label{thm:Automorphism correction}
    Let $q_\gE$ denote a graph-level policy defined for the backward process. Let $(G, G')$ be the graph transition, and $(s, a, s')$ denote a corresponding state transition. If permutation-equivariant functions are used for $p_\gE$ and $q_\gE$, then the following holds:
    \begin{equation*}
    \frac{p_{\bar\gA}(a|s)}{q_{\bar\gA}(a|s')} 
    = \frac{|\Aut(G)|}{|\Aut(G')|}\cdot\frac{p_\gE(G'|G)}{q_\gE(G|G')}.
    \end{equation*}
    \vspace{-\baselineskip}
\end{theorem}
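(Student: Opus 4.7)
The plan is to unfold both sides of the desired identity using the aggregation formula in \eqnref{eqn:state action probability} and its backward analogue, then invoke \lemref{lem:orb-aut} to convert the resulting orbit-count ratio into an automorphism-count ratio. The argument rests on three ingredients: permutation equivariance of $p_\gE$ and $q_\gE$ (which forces orbit-equivalent actions to share a probability), identification of the number of orbit-equivalent actions with an orbit size in the appropriate graph, and the orb-aut identity itself.

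First I would record the consequence of permutation equivariance already noted immediately after \thmref{thm:orbit equivalence sufficiency}: since permutation-equivariant networks assign identical probabilities to orbit-equivalent actions, the sum in \eqnref{eqn:state action probability} collapses to $p_{\bar\gA}(a|s) = |\bar\gE(G)\cap a|\cdot p_\gE(G'|G)$. Applying the same reasoning to $q_\gE$ evaluated at the successor graph $G'$ gives $q_{\bar\gA}(a|s') = |\bar\gE(G')\cap a|\cdot q_\gE(G|G')$. Taking the ratio immediately eliminates everything except the graph-level ratio $p_\gE(G'|G)/q_\gE(G|G')$ and the orbit-size ratio $|\bar\gE(G)\cap a| / |\bar\gE(G')\cap a|$.

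Next I would identify the two cardinalities with orbit sizes. For concreteness, consider the \texttt{AddEdge} case where $G'=G[E\cup(u,v)]$: the orbit-equivalent forward actions from $G$ are in bijection with $\Orb(G,u,v)$ under $\Aut(G)$, and the orbit-equivalent backward actions from $G'$ that delete the newly added edge are in bijection with $\Orb(G',u,v)$ under $\Aut(G')$. Substituting and applying \lemref{lem:orb-aut} then yields
$$\frac{p_{\bar\gA}(a|s)}{q_{\bar\gA}(a|s')} = \frac{|\Orb(G,u,v)|}{|\Orb(G',u,v)|}\cdot\frac{p_\gE(G'|G)}{q_\gE(G|G')} = \frac{|\Aut(G)|}{|\Aut(G')|}\cdot\frac{p_\gE(G'|G)}{q_\gE(G|G')},$$
which is exactly the claimed identity.

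The main obstacle I anticipate is the uniformity of the argument across action types, since the theorem is stated for a generic transition $(G,G')$ rather than only \texttt{AddEdge}. For \texttt{AddNode}, \texttt{AddNodeAttributes}, and the other action types used in node-by-node generation one needs the analogous orb-aut identity, which \lemref{lem:orb-aut} only covers directly for edge additions. Once these variants are established (deferred to the appendix), the collapse-then-substitute step above is insensitive to the action type, so the theorem follows uniformly. A secondary point worth checking is that the backward policy really does assign equal probability to orbit-equivalent deletion actions on $G'$; this follows from permutation equivariance of $q_\gE$ applied at $G'$ exactly as for the forward direction, so no new idea is required.
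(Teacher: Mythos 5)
Your proposal is correct and follows essentially the same route as the paper's proof: collapse the sum in the state-action probability via permutation equivariance, identify the count of orbit-equivalent forward and backward actions with orbit sizes in $G$ and $G'$ respectively, and convert the orbit-size ratio to the automorphism ratio via \lemref{lem:orb-aut} and its generalization to the other action types (which the paper likewise defers to the appendix as \lemref{lem:generalized-orb-aut}). No gaps.
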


The theorem suggests a simple adjustment method when considering the ratio. This simplification leads to the straightforward reward-scaling method presented in the next section.

\section{Symmetry-Aware GFlowNets}

\label{section:ac-gfn}
In this section, we analyze GFlowNet objectives using our previous results. The following theorem shows that a naive implementation of the TB objective, which does not account for equivalent actions, will train a model biased toward graphs with fewer symmetries.

\begin{corollary}[TB correction]
\label{cor:TB-correction}
Assume that $G_0$ is the empty graph or a single node, so that $|\Aut(G_0)| = 1$. Given the complete graph trajectory $\tau=(G_0, G_1, \dots, G_n)$, constructed using a node-by-node generation scheme, the trajectory balance loss is given by:

\vspace{-\baselineskip}
\begin{equation*}
    \gL_{\mathrm{TB}}(\tau)
        = \left( \log \frac{
        Z \prod_{t=0}^{n-1} p_\gE(G_{t+1}|G_t)}
        {|\Aut(G_n)| R(G_n) \prod_{t=0}^{n-1} q_\gE(G_t|G_{t+1})} 
        \right)^2.
\end{equation*}
\end{corollary}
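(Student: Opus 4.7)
The plan is to start from the TB loss written in terms of orbit-equivalent state-action probabilities (rather than the state-transition probabilities of the original definition) and then apply \thmref{thm:Automorphism correction} transition-by-transition, exploiting the resulting telescoping of the automorphism terms.

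First I would justify rewriting the TB loss with $p_{\bar\gA}$ and $q_{\bar\gA}$ in place of $p_\gA$ and $q_\gA$. This is exactly what \thmref{thm:orbit equivalence sufficiency} buys us: the state-action flow constraints imply the state-transition flow constraints, so enforcing trajectory balance at the state-action level is sufficient for reward matching, and the per-trajectory loss becomes
\begin{equation*}
\gL_{\mathrm{TB}}(\tau) = \left(\log \frac{Z \prod_{t=0}^{n-1} p_{\bar\gA}(a_t|s_t)}{R(s_n) \prod_{t=0}^{n-1} q_{\bar\gA}(a_t|s_{t+1})}\right)^2.
\end{equation*}

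Next I would substitute the identity from \thmref{thm:Automorphism correction} into each factor of the product. For the $t$-th transition, with $(G_t, G_{t+1})$ the underlying graph transition,
\begin{equation*}
\frac{p_{\bar\gA}(a_t|s_t)}{q_{\bar\gA}(a_t|s_{t+1})} = \frac{|\Aut(G_t)|}{|\Aut(G_{t+1})|}\cdot\frac{p_\gE(G_{t+1}|G_t)}{q_\gE(G_t|G_{t+1})}.
\end{equation*}
Taking the product over $t=0,\dots,n-1$, the automorphism factors form a telescoping product, leaving only $|\Aut(G_0)|/|\Aut(G_n)|$. Using the assumption $|\Aut(G_0)|=1$, this collapses to $1/|\Aut(G_n)|$, which moves into the denominator of the TB ratio and gives exactly the stated expression.

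The only delicate point I anticipate is making the substitution from $p_\gA$ to $p_{\bar\gA}$ rigorous: strictly speaking, the original TB loss is defined with state-transition probabilities, so one must appeal to \thmref{thm:orbit equivalence sufficiency} to argue that matching the state-action version is equivalent for the purposes of reward-proportional sampling. Once that substitution is accepted, the remainder is a routine telescoping computation; no further graph-theoretic machinery is needed.
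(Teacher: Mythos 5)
Your proposal is correct and follows essentially the same route as the paper, which justifies the corollary by invoking \thmref{thm:Automorphism correction} transition-by-transition and telescoping the automorphism ratios, with $|\Aut(G_0)|=1$ collapsing the leftover factor to $1/|\Aut(G_n)|$. Your explicit appeal to \thmref{thm:orbit equivalence sufficiency} to license the passage from state-transition to state-action probabilities is a point the paper leaves implicit, but it is the same argument.
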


The equation follows from \thmref{thm:Automorphism correction} and the application of a telescoping sum.

\paragraph{Implication.} \corref{cor:TB-correction} shows that we need to multiply the reward by the order of the automorphism group of the terminal state to properly account for equivalent actions. If we do not scale the reward, we are effectively reducing the rewards for highly symmetric graphs by a factor of $1/|\Aut(G_n)|$. As a result, even if a model is fully trained, the likelihood of reaching the terminal state will not align with the desired distribution; instead, the model is penalized for generating symmetric graphs, following $\bar p_\gA([G_n]) \propto R(G_n)/|\Aut(G_n)|$. This bias can be easily corrected by evaluating $|\Aut(G_n)|$ and scaling the reward accordingly.

We can also adjust the DB objective by multiplying the symmetry ratio $|\Aut(G')/|\Aut(G)|$ to the backward probability for each transition, though this requires multiple evaluations of automorphisms per trajectory. The next theorem states that, as in the TB correction, we can simply scale the rewards by $|\Aut(G)|$ without needing to count automorphisms at each transition. 

\begin{theorem}[DB correction]
    \label{thm:DB-correction}
    Consider a node-by-node graph generation scheme. We define the graph-level detailed balance condition, as opposed to the standard state-level condition, as follows:
    \begin{equation*}
    \tilde{F}(G)p_\gE(G'|G) = \tilde{F}(G')q_\gE(G|G'),
    \end{equation*}
    where $\tilde{F}$ denotes the graph-level flow function. If rewards are given by $\tilde{R}(G) =|\Aut(G)|R(G)$ and the graph-level detailed balance condition is satisfied for all transitions, then the forward policy samples terminal states proportionally to the given reward $R$.
\end{theorem}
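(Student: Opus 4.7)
The plan is to reduce the graph-level detailed balance (DB) condition to the standard state-level DB condition on the equivalence-class DAG $(\gS, \gA)$, after which the conclusion follows from the standard correctness guarantee of DB GFlowNets. The bridge between graph-level and state-level quantities is the automorphism correction of \thmref{thm:Automorphism correction}, while the final step from state-action DB to state transition DB is supplied by \thmref{thm:orbit equivalence sufficiency}.

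First, I would define a state-level flow $F: \gS \to \R^+$ by $F([G]) := \tilde{F}(G)/|\Aut(G)|$. A short preliminary observation is needed to check this is well-defined: since $\tilde{F}$ is a permutation-invariant graph-level scalar (the invariant specialization of the equivariance already assumed for $p_\gE$ and $q_\gE$) and $|\Aut(G)|$ depends only on the isomorphism class, the quotient is a function of $[G]$ alone. Next, rearranging the graph-level DB hypothesis gives $p_\gE(G'|G)/q_\gE(G|G') = \tilde{F}(G')/\tilde{F}(G)$, and combining this with \thmref{thm:Automorphism correction} yields
\[
\frac{p_{\bar\gA}(a|s)}{q_{\bar\gA}(a|s')} = \frac{|\Aut(G)|}{|\Aut(G')|}\cdot\frac{\tilde{F}(G')}{\tilde{F}(G)} = \frac{F(s')}{F(s)},
\]
which is precisely the state-action DB condition $F(s)p_{\bar\gA}(a|s) = F(s')q_{\bar\gA}(a|s')$. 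Applying \thmref{thm:orbit equivalence sufficiency} then promotes this to the full state transition DB condition on $(\gS, \gA)$.

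For terminal graphs, I would invoke the standard DB boundary condition $\tilde{F}(G) = \tilde{R}(G)$ for $G \in \gX$. Under the scaling $\tilde{R}(G) = |\Aut(G)|R(G)$, this gives $F(x) = \tilde{F}(G)/|\Aut(G)| = R(G)$, which is exactly the terminal boundary condition required by the standard DB correctness theorem on $(\gS, \gA)$. Combined with the state transition DB condition obtained above, that theorem guarantees $\bar p_\gA(x) \propto R(x)$, as desired.

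The main obstacle is the well-definedness step for $F$, which tacitly requires $\tilde{F}$ to be isomorphism-invariant; this is a mild regularity condition consistent with the paper's other equivariance hypotheses, but worth noting explicitly so that $F([G])$ is a sound state-level object. Once this is settled, everything else is a short chain of identities that piggybacks on \thmref{thm:orbit equivalence sufficiency} and \thmref{thm:Automorphism correction}, in the same spirit as the telescoping mechanism behind \corref{cor:TB-correction} but applied locally at each transition instead of globally across a trajectory.
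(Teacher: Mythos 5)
Your proof is correct, but it takes a different route from the paper's. The paper proves \thmref{thm:DB-correction} by writing the graph-level DB condition along an entire complete trajectory $G_0,\dots,G_n$, multiplying all the equations so that the intermediate $\tilde F(G_t)$ telescope away, and observing that the resulting identity is exactly the corrected trajectory-balance condition of \corref{cor:TB-correction}; correctness then follows from Proposition~1 of \citet{malkin2022trajectory}. You instead argue locally: you exhibit the state-level flow $F([G]) = \tilde F(G)/|\Aut(G)|$ explicitly, convert the graph-level DB hypothesis into the state-action DB condition via \thmref{thm:Automorphism correction}, promote it to state-transition DB via \thmref{thm:orbit equivalence sufficiency}, and close with the standard DB correctness theorem and the boundary condition $F(x)=R(x)$. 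Interestingly, the identity $\tilde F(G) = |\Aut(G)|F(s)$ at the heart of your argument is precisely what the paper uses in its preliminary existence lemma (and illustrates in its flow-matching figure), but the paper does not reuse it in the main correctness argument. Your route buys a per-transition reduction that stays entirely within the DB framework and makes the state flow explicit; the paper's route buys economy of hypotheses, since it needs neither \thmref{thm:orbit equivalence sufficiency} nor a separate DB correctness theorem, only the TB result. Your flagged well-definedness concern for $F$ is handled correctly: permutation invariance of the graph-level scalar $\tilde F$ together with the isomorphism invariance of $|\Aut(\cdot)|$ makes $F$ a function of $[G]$ alone, and this is no stronger than the equivariance assumptions the paper already imposes on $p_\gE$ and $q_\gE$.
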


\paragraph{Implication.} Together with \corref{cor:TB-correction}, we see that scaling the reward alone is sufficient for both TB and DB objectives. This suggests that other GFlowNet objectives, such as subtrajectory balance \citep{madan2023learning} and flow-matching \citep{bengio2021flow}, can also be used with reward scaling (see the discussion on the flow-matching \appref{app:Discussion on the Flow-Matching Objective}). This provides a straightforward approach to implementing GFlowNet objectives while reducing the computational burden of counting automorphisms at each transition. 

Finally, we provide the adjustment formula for fragment-based generation and defer the detailed discussion to \appref{app:fragment-based}.

\begin{theorem}[Fragment correction]
    \label{thm:Fragment correction}
    Let $G$ represents a terminal state ($[G] \in \gX$) generated by connecting $k$ fragments $\{C_1, \dots, C_k\}$. Then, the scaled rewards to offset the effects of equivalent actions are given by:
    \begin{align}
    \Tilde{R}(G) = \frac{|\Aut(G)|R(G)}{\prod_{i=1}^k |\Aut(C_i)|}    
    \label{eqn:frag-correction}
    \end{align}
    \vspace{-\baselineskip}
\end{theorem}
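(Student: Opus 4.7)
My plan is to adapt the proof strategy of \corref{cor:TB-correction} to the fragment setting. The key difference from the atomic case is that a single \texttt{AddFragment} action is parameterized by \emph{both} an attachment site in the partial graph $G$ and an attachment site inside the fragment $C$, so the internal symmetries of the fragment contribute extra orbit-equivalent actions. This must be reflected in a fragment analog of \lemref{lem:orb-aut}, which I would derive first, and then telescope the per-step identity along a trajectory that adds $k$ fragments.

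The first step is a fragment orbit lemma. Let $G' = G \cup C$ denote the graph obtained by attaching $C$ to $G$ via an edge between $u \in V(G)$ and $v \in V(C)$. I would prove
\begin{equation*}
    \frac{|\Orb(G,u)|\cdot |\Orb(C,v)|}{|\Orb(G',u,v)|} = \frac{|\Aut(G)|\cdot |\Aut(C)|}{|\Aut(G')|}.
\end{equation*}
By orbit--stabilizer applied to $\Aut(G)$, $\Aut(C)$, and $\Aut(G')$ respectively, this reduces to showing $|\Stab_{G'}(u,v)| = |\Stab_{G}(u)| \cdot |\Stab_{C}(v)|$: any automorphism of $G'$ fixing the attachment edge must preserve the two connected pieces of $G'$ obtained by deleting that edge, hence decomposes uniquely into a pair in $\Aut(G) \times \Aut(C)$ that fixes $u$ and $v$ respectively.

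Substituting this lemma into \eqnref{eqn:state action probability}, and using that permutation-equivariant parameterizations assign equal probability to all orbit-equivalent actions, I obtain a fragment analog of \thmref{thm:Automorphism correction},
\begin{equation*}
    \frac{p_{\bar\gA}(a|s)}{q_{\bar\gA}(a|s')}
    = \frac{|\Aut(G)|\cdot |\Aut(C)|}{|\Aut(G')|}\cdot \frac{p_\gE(G'|G)}{q_\gE(G|G')},
\end{equation*}
where the extra $|\Aut(C)|$ factor encodes the fragment's internal symmetries. For a trajectory $\tau = (G_0,\ldots,G_n = G)$ that builds $G$ by sequentially attaching $C_1,\ldots,C_k$, I would plug this identity into the trajectory-balance (or chained graph-level detailed-balance) condition and telescope the $|\Aut(G_t)|$ factors to obtain
\begin{equation*}
    \prod_{t=0}^{n-1} \frac{p_{\bar\gA}(a_t|s_t)}{q_{\bar\gA}(a_t|s_{t+1})}
    = \frac{\prod_{i=1}^{k} |\Aut(C_i)|}{|\Aut(G)|}\cdot \prod_{t=0}^{n-1}\frac{p_\gE(G_{t+1}|G_t)}{q_\gE(G_t|G_{t+1})},
\end{equation*}
using $|\Aut(G_0)| = 1$. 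Comparing this with the reward-matching condition $\bar p_{\gA}([G]) \propto R(G)$ shows that training the graph-level policies $p_\gE, q_\gE$ directly with the scaled reward $\tilde R(G) = |\Aut(G)|\, R(G) / \prod_{i=1}^{k} |\Aut(C_i)|$ is equivalent to training the orbit-corrected state-action policies with the original reward, which establishes \eqnref{eqn:frag-correction}.

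The main obstacle is Step 1: one must rule out or correctly account for ``accidental'' automorphisms of $G'$ that mix $V(G)$ with $V(C)$, for instance when $C$ happens to be isomorphic to an already-attached fragment so that $\Aut(G')$ contains a global swap. The stabilizer argument sketched above is clean when $G$ and $C$ are connected and joined by a single bridge, but in general one needs a more careful double-counting: the forward count should really be taken over pairs of embeddings modulo $\Aut(G) \times \Aut(C)$, and the backward count over embeddings modulo $\Aut(G')$, with the symmetries that exchange isomorphic copies of $C$ inside $G'$ accounted for on both sides. Once Step 1 is set up precisely, Steps 2 and 3 are essentially routine telescoping, mirroring \corref{cor:TB-correction} and \thmref{thm:DB-correction}.
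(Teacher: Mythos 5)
Your proposal is correct in substance and arrives at the same per-fragment ratio $|\Aut(G)|\,|\Aut(C)|/|\Aut(G')|$ that drives the telescoping, but it packages the key lemma differently from the paper. The paper defines \texttt{AddFragment}$(G,C)$ as forming the \emph{disjoint} union $G\cup C$ and defers the connection to a separate \texttt{AddEdge} step: the fragment lemma then only has to handle the disjoint union, where the forward orbit count is trivially $1$ and the stabilizer $\Stab(G\cup C, C)$ visibly factors as $\Aut(G)\times\Aut(C)$, while the connecting edge is covered by the already-proven \lemref{lem:orb-aut}. Multiplying the two per-step ratios recovers exactly your one-step identity, so the two routes agree. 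Your combined lemma $|\Stab(G',u,v)| = |\Stab(G,u)|\cdot|\Stab(C,v)|$ is also correct in the setting at hand --- if $G$ and $C$ are connected, the new edge $(u,v)$ is a bridge, and any automorphism of $G'$ fixing $u$ and $v$ pointwise permutes the components of $G'$ minus that edge while fixing a vertex in each, hence restricts to an element of $\Stab(G,u)\times\Stab(C,v)$ --- but it does require exactly the connectivity argument you flag as the main obstacle. Your worry about ``accidental'' automorphisms swapping isomorphic copies of $C$ is not actually a problem for the pointwise stabilizer (such a swap cannot fix $v\in V(C)$); those symmetries are absorbed into the backward orbit $\Orb(G',u,v)$, and the orbit--stabilizer theorem does the bookkeeping automatically. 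In short, the paper's two-step decomposition buys a trivial stabilizer computation and reuse of an existing lemma, whereas yours buys a single lemma per fragment at the cost of the bridge argument. One practical point the paper addresses that you omit: attachment points must be treated as node attributes when counting $|\Aut(C)|$ and $|\Aut(G_t)|$, and their removal at the terminal state can introduce extra backward actions unless the fragment vocabulary is designed so that distinct orbits do not merge once attachment points are dropped (see \appref{app:fragment-based}); this caveat is needed for \eqnref{eqn:frag-correction} to apply as stated to the actual generation scheme.
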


Intuitively, highly symmetric fragments contain many symmetric nodes available for connection, resulting in multiple forward equivalent actions, even though these actions do not lead to distinct outcomes. As a result, without correction, symmetric fragments are more likely to be sampled by the model. \eqnref{eqn:frag-correction} corrects this bias by penalizing symmetric fragments.


\paragraph{Estimating model likelihood.} To address the intractability of marginalizing over all trajectories terminating at $x \in \gX$, \citet{zhang2022generative} proposed approximating the model likelihood using importance sampling with $q_\gE$ as a variational distribution: $\bar p_\gA(x) = \E_{\tau \sim q_\gE(\tau|G_n)} \frac{p_\gE(\tau)}{q_\gE(\tau|G_n)}$, where $\tau = (G_0, \dots, G_n)$. However, \citet{zhang2022generative} worked with a restricted class of decision process where the equivalent action problem is not present. Instead, we estimate the probability of the terminal state as follows:

\vspace{-\baselineskip}
\begin{align}
\begin{split}
    \bar p_{\gA}(x) 
    &= \E_{\tau\sim q_\gE(\tau|G_n)} \left [ \frac{p_\gE(\tau)}{|\Aut(G_n)| q_\gE(\tau|G_n)} \right ] \\
    &\approx \frac{1}{M|\Aut(G_n)|}\sum_{i=1}^M \frac{p_\gE(\tau_i)}{ q_\gE(\tau_i|G_n)}.    
    \label{eqn:model-likelihood}
\end{split}
\end{align}
\vspace{-\baselineskip}

If we do not account for equivalent actions during both training and model likelihood estimation, the estimated model likelihood may still correlate with the rewards, but the actual sampling distribution will be biased. This happens because the policy is already biased towards generating samples with low $|\Aut(G_n)|$, leading to a spurious correlation.

\paragraph{Impact of GNN expressive power.}
Another source of inexact learning comes from the limited expressive power of GNNs used to parameterize the policy \citep{silva2025gflownets}. The correction formula relies on the parameter-sharing property of GNNs for nodes within the same orbit. While this property is desirable, actions from different orbits may also collapse into identical representations, thereby reducing the network's representational power. As a result, the policy might assign equal probabilities to actions that lead to different rewards. Although this issue is not the primary focus of this paper, we provide additional analysis of its impact in \appref{app:Expressive Power of GNNs}.

\paragraph{Computation.} The main additional computation for reward scaling comes from evaluating $|\Aut(G)|$, which is necessary for each trajectory in both the TB and DB objectives. For fragment correction, we can pre-compute $|\Aut(C)|$ in our vocabulary set. While the fastest proven time complexity for computing $|\Aut(G)|$ has remained $\exp(\gO(\sqrt{n\log n}))$ for decades \citep{babai1983computational}, graphs with bounded degrees can be handled in polynomial time \citep{luks1982isomorphism}. In our experiments, we used the \textit{bliss} algorithm \citep{junttila2007engineering}, included in the \texttt{igraph} package \citep{igraph}, and did not observe any significant delays in computation. In contrast, computing transition equivalent actions and summing their probabilities at each step involves several graph isomorphism tests. This process requires $K \times H$ more computations compared to the reward scaling, where $K$ is the average number of actions per state, and $H$ is the average trajectory length. We provide further analysis and comparisons on the computation time for each method in \appref{app:Computational Cost}.

\paragraph{Relation to distribution learning}
While our work focuses on improving the reward-matching capabilities of GFlowNets, many graph generation methods instead aim to learn the data distribution by maximizing a variational lower bound (VLB):

\vspace{-\baselineskip}
\begin{align}
\begin{split}
    \log \bar p_{\gA}(x) 
    \ge \E_{\tau\sim q(\tau|x)} \left [ \log p_\gA(\tau)-\log q(\tau|x) \right ].
    \label{eqn:VLB}
\end{split}
\end{align}
\vspace{-\baselineskip}

Here, $q(\tau|x)$ is a variational distribution that samples trajectories $\tau=(s_0, \dots, s_n=x)$ conditioned on the final state $x$. There are two primary ways to parameterize $q$ and sample trajectories $\tau$: (1) $q$ can be implemented as a backward policy $q(s_t|s_{t+1})$, as in GFlowNets; (2) $q(\pi|x)$ can first sample a node or edge ordering $\pi$, which then deterministically defines a trajectory $\tau$. Our method suggest that when the backward policy is used to define $q(s_t|s_{t+1})$, the VLB can be computed via automorphism counting. In contrast, when $q(\pi|x)$ is used to sample orderings, no further correction is required. This is because forward equivalent actions split probabilities across different orderings that induce the same trajectory. Since this result differs from that of \citet{chen2021order}, we elaborate on this point in \appref{appendix:node-ordering}, where we interpret equivalent actions as those that correspond to different orderings but ultimately lead to the same state sequence.

\begin{figure*}[t]
\begin{subfigure}{0.245\textwidth}
  \centering
  \includegraphics[width=\linewidth]{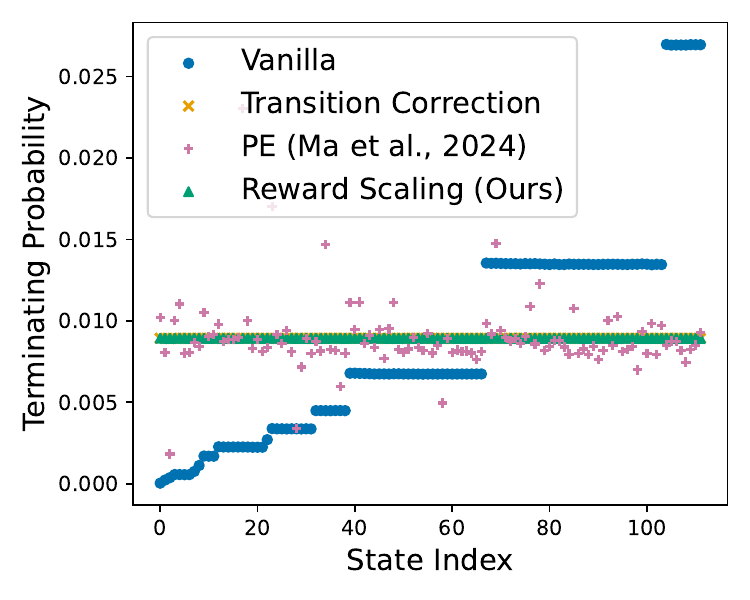}
  \caption{Illustrative}
  \label{fig:sfig1}
\end{subfigure}
\hfill
\begin{subfigure}{0.245\textwidth}
  \centering
  \includegraphics[width=\linewidth]{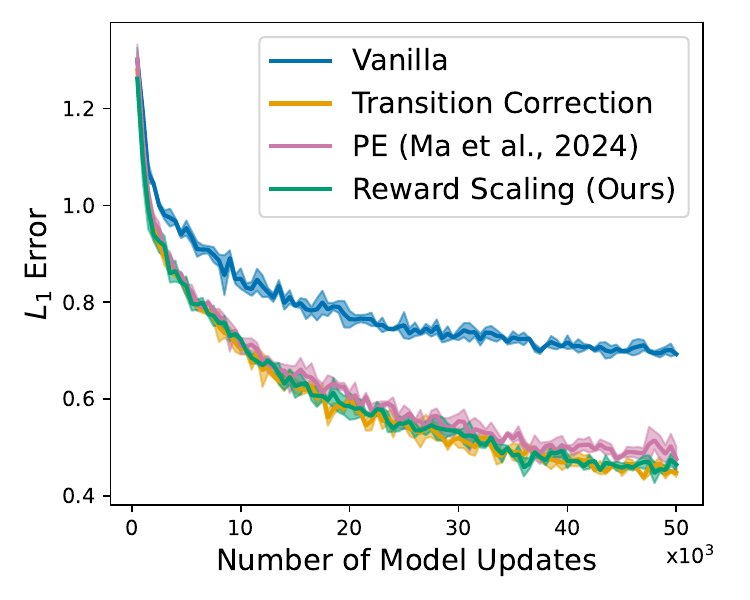}
  \caption{Synthetic (TB)}
  \label{fig:sfig2}
\end{subfigure}
\hfill
\begin{subfigure}{0.245\textwidth}
  \centering
  \includegraphics[width=\linewidth]{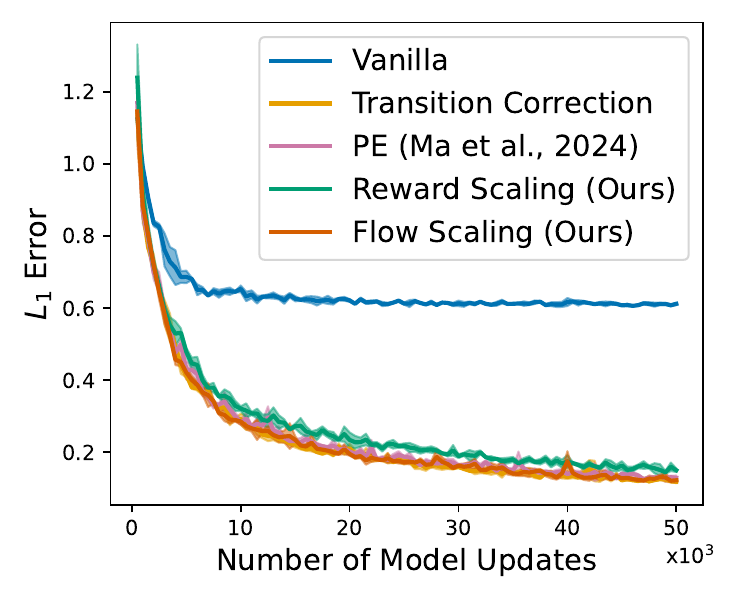}
  \caption{Synthetic (DB)}
  \label{fig:sfig3}
\end{subfigure}
\hfill
\begin{subfigure}{0.245\textwidth}
  \centering
  \includegraphics[width=\linewidth]{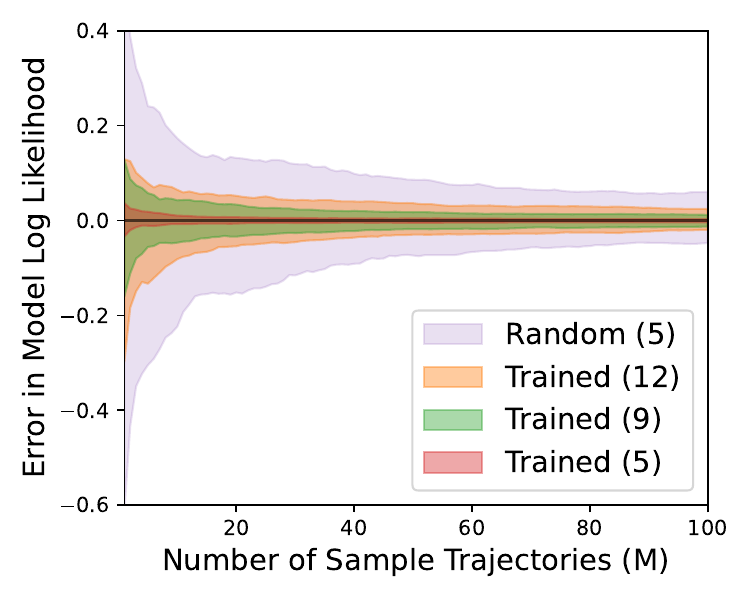}
  \caption{Likelihood estimation}
  \label{fig:sfig4}
\end{subfigure}
\caption{\textbf{(a)} Terminating probabilities of trained models in the uniform-reward environment. States are sorted according to the number of graphs in the state, $|x|$. \textbf{(b), (c)} $L_1$ errors between the target probabilities and the model’s terminating probabilities during training in the synthetic environment. \textbf{(d)} Errors in the estimated model log-likelihood, defined as the difference between estimated and exact log-likelihood. ``Random" denotes the errors of an initial random model, while ``Trained" refers that of a trained model. Numbers in brackets indicate the number of edges in the terminal states used for estimation.}
\label{fig:synthetic results}
\vspace{-10pt}
\end{figure*}

\section{Experiments}

In this section, we conduct experiments to validate our theoretical results and demonstrate the effectiveness of our method. We use a uniform backward policy across all experiments. Details on hyperparameters and model configurations can be found in \appref{app:experimental-details}. The experiments compare the following methods: (1) \textbf{Vanilla} GFlowNets, which do not incorporate graph symmetries. (2) \textbf{Transition Correction}, which identifies transition-equivalent actions by performing multiple isomorphism tests and sums their probabilities accordingly. (3) \textbf{PE}, the method proposed by \citet{ma2024baking}, which approximately identifies orbit-equivalent actions using positional encoding. (4) \textbf{Reward Scaling}, which achieves correction by modifying only the reward signals. (5) \textbf{Flow Scaling}, which multiplies symmetry ratio $|\Aut(G')|/|\Aut(G)|$ to backward probability at each transition. Note that methods 3-5 are orbit correction methods. Since Reward Scaling and Flow Scaling have the same effect under the TB objective, we only consider Flow Scaling when using the DB objective.\footnote{Source code available at: \url{https://github.com/hohyun312/sagfn}}

\subsection{Illustrative Example}

We first conducted an illustrative experiment where the initial state consisted of six disconnected nodes, and only \texttt{AddEdge} and \texttt{Stop} actions were allowed. Terminal states correspond to connected graphs, with a uniform reward of 1 assigned to each. \thmref{thm:Automorphism correction} predicts that the terminating probability of the vanilla GFlowNet will exhibit a bias proportional to $|\Aut(G_0)|/|\Aut(G_n)|$, where $|\Aut(G_0)|=6!$. This corresponds to the number of graphs isomorphic to the terminal state $x$, meaning $\bar p_\gA(x) \propto |x|$. Our method corrects this bias by multiplying the rewards by $1/|x|$.

We trained three policies using the TB objective and computed the exact terminating probabilities for all states ($|\gX| = 112$). As shown in \figref{fig:synthetic results} (a), the terminating probabilities of the vanilla model are clustered according to $|x|$. In contrast, when state transition probabilities $p_\gA(s'|s)$ are computed exactly by summing over transition-equivalent actions, the terminating probabilities are uniform as desired. Notably, Reward Scaling achieves the same effect, validating \thmref{thm:orbit equivalence sufficiency}. The PE method exhibits approximate bias correction, as indicated by the scatter in its results.

Upon inspecting the trained normalizing constant, we observed that with Reward Scaling, the estimated $Z$ is $112$, matching the true value. Without correction, however, $Z$ is trained to be significantly larger, reaching $26706$. This discrepancy arises because, in this example, the correction works by scaling down the rewards by $1/|x|$. Alternatively, since the $6!$ term in $|x|$ is constant, it can be absorbed into the normalizing constant $Z$. Therefore, the rewards could instead be scaled up by $|\Aut(x)|$ to achieve the same effect.

\subsection{Synthetic Graphs}

Following \citet{ma2024baking}, we set up a graph-building environment where nodes can be one of two types, and graphs can contain up to 7 nodes ($|\gX|=72296$). Rewards are assigned based on the number of 4-cliques that contain at least three nodes of the same type. For this relatively small environment, we compute exact terminating probabilities for all states without approximations for evaluation. 

The results for the TB objective are presented in \figref{fig:synthetic results} (b). The vanilla GFlowNet exhibits limited performance, as measured by $L_1$ errors between the target probabilities and the model’s terminating probabilities. In contrast, our method (Reward Scaling) has substantially lower errors, producing results similar to those obtained by explicitly computing state transition probabilities at each step (Transition Correction). Although the PE method is an approximate solution and underperforms compared to ours, it still significantly outperforms the vanilla baseline, underscoring the importance of applying a correction.

For the DB objective shown in \figref{fig:synthetic results} (c), we observe that Reward Scaling requires more training steps to converge than other correction methods. This is because the Reward Scaling corrects only at the end of trajectories, leaving intermediate probabilities remain inaccurate. This hinders training under the DB objective, which relies on intermediate probabilities. On the other hand, the per-transition correction can be interpreted as providing intermediate signals for the adjustment, similar to the idea of providing intermediate reward signals, as suggested by \citet{pan2023better}. Reward Scaling achieves the same goal, but defers the adjustment signal to the end of the trajectory. 

To evaluate the effectiveness of the proposed model likelihood estimator, we sampled 100 terminal states for each category (5, 9, and 12 edges), resulting in a total of 300 states, and estimated their model likelihood using \eqnref{eqn:model-likelihood}. The \figref{fig:synthetic results} (d) displays the estimation errors (computed as estimates minus exact values) for each category, with shaded bands representing one standard deviation. The estimates converge to the exact likelihood as $M$ increases, with notably rapid convergence for small values of $M$. 

However, the estimation error varies significantly depending on the task. For instance, terminal states with 5 edges can be estimated more accurately than those with 12 edges, as 12-edge states have substantially more trajectories leading to them in this environment, making the estimation problem more challenging. Additionally, a trained model's likelihood can be estimated more accurately than that of a random policy; in fact, for a fully trained model, a single sampled trajectory would be sufficient for an accurate estimation.

\subsection{Molecule Generation}

\begin{table*}[t]
\caption{Results for molecule generation task. Highest scores are highlighted.}
\label{tab:mol-results}
\begin{center}
\begin{tabular}{llccccc}
\hline
Task & Method & Diversity & Top $K$ div.  & Top $K$ reward & Div. Top $K$ & Uniq. Frac. \\
\hline
\multirow{2}{3.5em}{Atom}
    & Vanilla & 0.929\confsub{0.024} & \colorbox{orange!10}{0.077\confsub{0.022}} & 1.09\confsub{0.02} & 1.09\confsub{0.02} & 0.93\confsub{0.077} \\
    & Reward Scaling (Exact) & \colorbox{orange!10}{0.959\confsub{0.01}} & 0.046\confsub{0.006} & \colorbox{orange!10}{1.091\confsub{0.013}} & \colorbox{orange!10}{1.091\confsub{0.013}} & \colorbox{orange!10}{1.0\confsub{0.0}} \\
\hline
\multirow{3}{3.5em}{Fragment}
    & Vanilla & 0.877\confsub{0.001} & 0.153\confsub{0.003} & 0.941\confsub{0.002} & 0.941\confsub{0.002}  & 1.0\confsub{0.0}\\
    & Reward Scaling (Approx.) & \colorbox{orange!10}{0.88\confsub{0.001}} & \colorbox{orange!10}{0.164\confsub{0.008}} & 0.949\confsub{0.006} & 0.949\confsub{0.006} & 1.0\confsub{0.0} \\
    & Reward Scaling (Exact) & 0.879\confsub{0.0} & 0.151\confsub{0.002} & \colorbox{orange!10}{0.952\confsub{0.003}} & \colorbox{orange!10}{0.952\confsub{0.003}} & 1.0\confsub{0.0} \\
\hline
\end{tabular}
\end{center}
\vskip -0.2in
\end{table*}

\paragraph{Task description.} We investigate whether accurately modeling a given target distribution helps generate diverse and high-reward samples in practice. We examine the atom-based generation task from \citet{jain2023multi} and the fragment-based generation task from \citet{bengio2021flow}. In the atom-based task, the goal is to generate molecules by sequentially adding new atoms, edges, or setting their attributes. Rewards are provided by a proxy model, which predicts the HOMO-LUMO gap. In the fragment-based task, we use a predefined set of fragments, each with a predefined set of attachment points—nodes on the fragment where edges can connect. The task involves building a tree graph, where each node represents a fragment, and edges specify the attachment points on the two connected fragments. Rewards are determined by a proxy model that predicts the binding energy of a molecule to the sEH target. 

For the atom-based task, we simply scale the final rewards by the order of the automorphism group. For the fragment-based task, we additionally correct for fragment automorphisms as described in \eqnref{eqn:frag-correction}. We also explore an approximate correction scheme that offers computational benefits, as detailed in \appref{app:Approximate Correction Method}. We sampled 5,000 molecules from each method and evaluated them using common metrics. The definitions of these metrics are provided in \appref{app:experimental-details:Molecule Generation}.

\begin{figure}[th]
\begin{center}
\centerline{\includegraphics[width=0.4\textwidth]{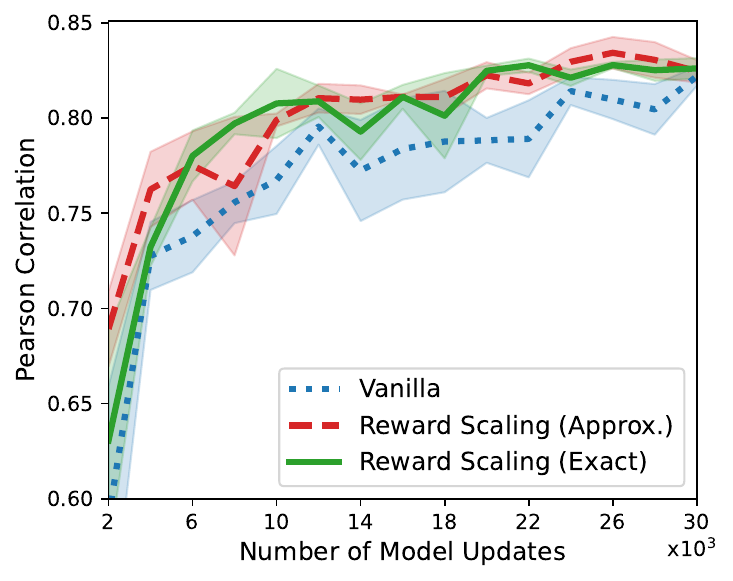}}
\caption{Correlations between log rewards and model log-likelihoods during training in fragment experiment.}
\label{fig:mol-pearson}
\end{center}
\vskip -0.42in
\end{figure}

\paragraph{Results.} The results summarized in \tabref{tab:mol-results} show that accurately modeling the target distribution yields generally the better results in terms of generating diverse and high-reward samples for both molecule tasks. This result is noteworthy, as the vanilla model effectively optimizes for two objectives---the proxy and non-symmetries---which could enhance diversity. However, atom-based task exhibited limited improvement, possibly due to the reward structure of the task, where rewards are negatively correlated with the number of symmetries when measured in the QM9 test set.

For the fragment-based task, the sampled molecules show higher rewards with our method. We also observe that the approximate correction already enables the generation of high-reward samples, underscoring the effectiveness and importance of the correction. Without correction, the trained model tends to excessively favor components that incur multiple forward equivalent actions during generation. For example, among 5000 sampled molecules, the vanilla GFlowNet produced 5220 instances of cyclohexane (\texttt{C1CCCCC1}) as its fragments, whereas the corrected method produced only 1042.

In addition, we measured the Pearson correlation between the estimated model log-likelihood, $\log \bar p_\gA(x)$, and the log rewards, $\log R(x)$, on the test set to validate the proposed fragment correction method. \figref{fig:mol-pearson} shows an overall high correlation for both exact and approximate corrections, emphasizing the impact of our methods.

\section{Discussion and Conclusion}

GFlowNets were first proposed as an alternative to previous methods, such as MaxEnt RL \citep{haarnoja2017reinforcement}, which are biased toward states with multiple action sequences leading to them. However, incorrect modeling of state transition probabilities introduces another type of bias in graph generation. Although we believe that the previous experimental results remain valid if interpreted carefully with the problem in mind, we recommend being explicit about the correction method used in all future work.

In this paper, we analyzed the properties of equivalent actions and proposed a simple correction method that allows for unbiased sampling from the target distribution. Our analysis shows that, without correction, highly symmetric graphs are less likely to be sampled, while symmetric fragments are more likely to be sampled, which is crucial for molecule discovery. We demonstrated that the reward-scaling technique works for both TB and DB objectives. Experimental results suggest that reward scaling and flow scaling effectively removes bias, allowing for accurate modeling of the target distribution, which is essential for sampling high-reward molecules. The exact effect, however, depends on the reward structure of the given task.

While our method is general and applicable to both node-by-node and fragment-based generation schemes, our theoretical guarantees rely on a specific set of predefined graph actions. Therefore, when designing a new set of graph actions, it is important to ensure that they share a similar structure, so that the theorems remain applicable. In most cases, however, the graph actions we introduced can be readily extended to incorporate additional actions. See \appref{app:definitions of actions} for further discussion.

A potential limitation of this paper is that the proposed correction method is demonstrated primarily on specific objectives (TB and DB) and datasets relevant to molecule discovery. Future work could explore applying the method to tasks with different symmetry patterns and reward structures.




\section*{Acknowledgements}
This work was supported by the National Research Foundation of Korea~(NRF) grant funded by the Korea government~(MSIT) (No.  RS-2022-NR071853 and RS-2023-00222663), the Global-LAMP Program of the NRF grant funded by the Ministry of Education (No. RS-2023-00301976), Brain Pool Plus~(BP+,~Brain Pool+) Program through the National Research Foundation of Korea~(NRF) funded by the Ministry of Science and ICT (2020H1D3A2A03100666), Korea Government Grant Program for Education and Research in Medical AI through the Korea Health Industry Development Institude~(KHIDI) funded by the Korea government~(MOE,~MOHW), and AI-Bio Research Grant through Seoul National University.

\section*{Impact Statement}

This paper presents work whose goal is to advance the field of 
Machine Learning. There are many potential societal consequences 
of our work, none which we feel must be specifically highlighted here.

\bibliography{references}
\bibliographystyle{icml2025}

\newpage
\appendix
\onecolumn
\section{Notations}
\label{app:notation}

\begin{table}[h]
\caption{Notation}
\label{tab:notations}
\begin{center}
\begin{tabular}{lll}
\hline
\multirow{18}{4em}{Graph} & 
    Set of graphs & $\gG$ \\
    & Set of graph transitions & $\gE$ \\
    & Set of graph actions & $\bar\gE$ \\
    & Set of vertices & $V$ \\
    & Set of edges & $E$ \\
    & Node labeling function & $l_n$ \\
    & Edge labeling function & $l_e$ \\
    & Graph labeling function & $l_g$ \\
    & Permutation & $\pi$ \\
    & Set of automorphisms of graph $G$ & $\Aut(G)$ \\
    & Set of graphs isomorphic to graph $G$ & $[G]$ \\
    & Orbit of a node $u$ & $\Orb(G, u)$ \\
    & Stabilizer of a node $u$ & $\Stab(G, u)$ \\
    & Forward policy over graphs & $p_\gE$ \\
    & Backward policy over graphs & $q_\gE$ \\
    & Set of next graphs from graph $G$ & $\gE(G)$ \\
    & Set of actions from graph $G$ & $\bar \gE(G)$ \\
    & Graph-action probability & $p_{\bar\gE}$ \\
    & Graph-level flow function & $\tilde F$ \\
\hline
\multirow{10}{4em}{State} & 
    Set of states & $\gS$ \\
    & Set of state transitions & $\gA$ \\
    & Set of actions & $\bar\gA$ \\
    & Set of terminal states & $\gX$ \\
    & Set of complete trajectories & $\gT$ \\
    & Reward function & $R$ \\
    & Forward policy over states & $p_\gA$ \\
    & Backward policy over states & $q_\gA$ \\
    & Terminating probability induced by following $p_\gA$ & $\bar p_\gA$ \\
    & State-action probability & $ p_{\bar\gA}$ \\
    & State-level flow function  & $F$ \\
\hline
\end{tabular}
\end{center}
\end{table}

\section{Additional Comparison to Prior Work}
\label{app:comparison}

To the best of our knowledge, \citet{ma2024baking} is the only prior work addressing the equivalent action problem in GFlowNets. Their approach relies on approximate tests using positional encoding (PE) of nodes to identify nodes or edges within the same orbit. Once an orbit is identified, the probabilities of orbit-equivalent actions are summed. While they identified and partially addressed this issue, their discussion was limited to experimental validation. The primary motivation of \citet{ma2024baking} was to highlight the existence of the problem and propose a partial solution. To this end, they conducted experiments in an offline, atom-based environment.

In contrast, our work provides the first rigorous theoretical foundation for the correction, demonstrating that this issue is not merely an experimental artifact but a fundamental and systematic challenge arising from graph symmetries in both atom-based and fragment-based generation. This insight is particularly significant given that GFlowNets were initially popularized for their reward-matching capabilities.

Our approach, based on reward/flow scaling, offers an exact and efficient solution. Unlike PE-based methods, which require adaptation for different action types (e.g., incorporating edge types and fragments), our method is straightforward to implement and easily generalizable across various action types. Our motivation is to thoroughly analyze the problem and present an efficient, scalable solution applicable to real-world setups. To validate this, we conducted experiments with online training for both atom- and fragment-based generation.

\section{Definitions of Graph Actions}
\label{app:definitions of actions}

Here we provide a list of action types considered in the paper.

\begin{itemize}
    \item \texttt{AddNode}$(G, u)$ adds a new node to the existing node $u$.
    \item \texttt{AddEdge}$(G, u, v)$ adds a new edge $(u, v)$.
    \item \texttt{AddFragment}$(G, C)$ adds a fragment $C$.
    \item \texttt{RemoveNode}$(G, v)$ removes node $v$ and its connecting edges.
    \item \texttt{RemoveEdge}$(G, u, v)$ removes the edge $(u, v)$.
    \item \texttt{RemoveFragment}$(G, C)$ removes the subgraph $C$.
    \item \texttt{SetNodeAttribute}$(G, u, t)$ sets the node-level attribute $t$ for node $u$.
    \item \texttt{SetEdgeAttribute}$(G, u, v, t)$ sets the edge-level attribute $t$ for the edge $(u, v)$.
    \item \texttt{SetGraphAttribute}$(G, t)$ sets a graph-level attribute $t$.
\end{itemize}

The \texttt{Stop} action can be interpreted as setting a terminal flag, making \texttt{SetGraphAttribute} a viable replacement. Some actions may overlap in functionality. For instance, \texttt{AddNode} is equivalent to a sequence of two actions: adding a new node and connecting it to an existing node $u$. This can be achieved using a combination of \texttt{AddFragment} and \texttt{AddEdge}.

Likewise, the above graph actions can be easily extended to incorporate additional actions. For example, we can define $\texttt{AddColoredNode}(G, u, t)$ as an action that adds a new node with node type $t$ to the existing node $u$. In fact, we used \texttt{AddColoredNode}, instead of using \texttt{AddNode} and \texttt{SetNodeAttribute} for the Synthetic Graphs experiment.

From a practical perspective, defining \texttt{AddColoredNode} as a separate action type reduces the number of transitions per trajectory and improves convenience. For theorems, however, proving \texttt{AddColoredNode} as a distinct case may be redundant.

\section{Example of Equivalent Actions}
\label{app:Example of Equivalent Actions}

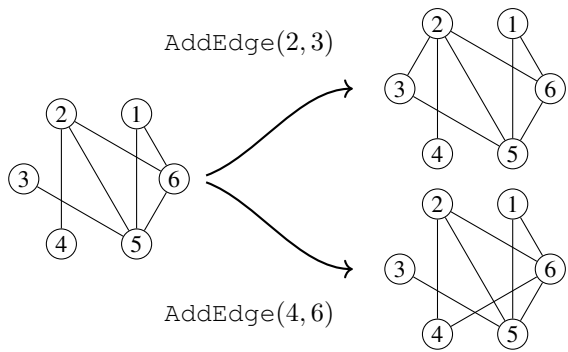
\begin{wrapfigure}{tr}{0.5\textwidth}
    \vspace{-30pt}
    \centering
    \begin{tikzpicture}[
    roundnode/.style={circle, draw, minimum size=4mm, inner sep=0pt, font=\small}
]

    \def\ACenterX{1.2}
    \def\radius{1.0}

    \def\angle{360/6}
    
    \foreach \i in {1, 2, 3, 4, 5, 6} {
        \node[roundnode] (n\i) at ({\ACenterX + \radius * cos(\angle*\i)}, {\radius * sin(\angle*\i)}) {\i};
    }
    
    \draw (n1) -- (n6);
    \draw (n1) -- (n5);
    \draw (n2) -- (n4);
    \draw (n2) -- (n5);
    \draw (n2) -- (n6);
    \draw (n3) -- (n5);
    \draw (n5) -- (n6);

    \node (nv) at ({0.3 + \ACenterX + \radius * cos(\angle*6)}, {\radius * sin(\angle*6)}) {};

    \foreach \i in {1, 2, 3, 4, 5, 6} {
        \node[roundnode] (a\i) at ({5 + \ACenterX + \radius * cos(\angle*\i)}, {\radius * sin(\angle*\i) + 1.2}) {\i};
    }

    \draw (a1) -- (a6);
    \draw (a1) -- (a5);
    \draw (a2) -- (a3);
    \draw (a2) -- (a4);
    \draw (a2) -- (a5);
    \draw (a2) -- (a6);
    \draw (a3) -- (a5);
    \draw (a5) -- (a6);

    \node (av) at ({4.5 + \ACenterX + \radius * cos(\angle*3)}, {\radius * sin(\angle*3) + 1.2}) {};


    \foreach \i in {1, 2, 3, 4, 5, 6} {
        \node[roundnode] (b\i) at ({5 + \ACenterX + \radius * cos(\angle*\i)}, {\radius * sin(\angle*\i) - 1.2}) {\i};
    }

    \draw (b1) -- (b6);
    \draw (b1) -- (b5);
    \draw (b2) -- (b4);
    \draw (b2) -- (b5);
    \draw (b2) -- (b6);
    \draw (b3) -- (b5);
    \draw (b4) -- (b6);
    \draw (b5) -- (b6);

    \node (bv) at ({4.5 + \ACenterX + \radius * cos(\angle*3)}, {\radius * sin(\angle*3) - 1.2}) {};

    \node at ({3 + \ACenterX + \radius * cos(\angle*3)}, {\radius * sin(\angle*3) + 1.8}) {$\texttt{AddEdge}(2, 3)$};
    \node at ({3 + \ACenterX + \radius * cos(\angle*3)}, {\radius * sin(\angle*3) - 1.8}) {$\texttt{AddEdge}(4, 6)$};

    \draw[->, thick] (nv) to[out=20, in=180, looseness=0.8] (av);
    \draw[->, thick] (nv) to[out=-20, in=180, looseness=0.8] (bv);
    
\end{tikzpicture}
    \caption{Two actions induce isomorphic graphs, making them transition-equivalent. However, they are not orbit-equivalent. This example was originally presented by \citep{ma2024baking}.}
    \label{fig:counter example}
    \vspace{-30pt}
\end{wrapfigure}

In \thmref{thm:equivalence implication}, we established that orbit equivalence implies transition equivalence. However, the converse is not generally true, though such cases are rare. A counterexample is illustrated in \figref{fig:counter example}. While the two resulting graphs are isomorphic (by the permutation 1→4, 2→5, 3→1, 4→3, 5→6, 6→2), they are induced by actions that modify nodes belonging to different orbits. In other words, two actions $\texttt{AddEdge}(2, 3)$ and $\texttt{AddEdge}(4, 6)$ are transition-equivalent, but not orbit-equivalent. However, accounting for orbit-equivalent actions is sufficient, as the corresponding backward actions belong to two distinct orbits as well.

\section{Properties of Graph Neural Networks}
\label{app:GNN-property}

\subsection{Permutation Equivariance}
\label{app:permutation equivariance}

The key design principle of GNNs is permutation equivariance, which ensures that the output remains consistent regardless of how the nodes in the input graph are ordered. 

\begin{definition}[Permutation Equivariance]
    A function $f$ is permutation-equivariant if it satisfies $f(\pi(x)) = \pi(f(x))$ for any permutation $\pi$. 
\end{definition}

Specifically, let $\mathbf{A} \in \R^{n\times n \times d}$ be the adjacency tensor of a graph $G$ with $n$ nodes. The $d$ dimensional node and edge features of $G$ are represented in $\mathbf{A}$, where diagonal elements encode the node features. Let $\mathbf{A}[i, j]$ represent the $(i, j)$-th element of the tensor. We define the permutation of the tensor as $\pi(\mathbf{A})[i, j] = \mathbf{A}[\pi^{-1}(i), \pi^{-1}(j)]$. Automorphisms are permutations that preserve adjacency tensor. That is, for $\pi \in \Aut(G)$, we have $\pi(\mathbf{A})=\mathbf{A}$. Since GNNs are permutation-equivariant, we can show that they produce identical node representations for nodes in the same orbit.

\begin{theorem}
    Let $f: \R^{n\times n \times d} \times \R^{n\times n \times d}$ be a permutation-equivariant function. Then, for any $u, v, h, k \in V$, if there exists a permutation $\pi \in \Aut(G)$ such that $\pi(u) = h$ and $\pi(v) = k$, it follows that $f(\mathbf{A})[u, v] = f(\mathbf{A})[h, k]$.
\end{theorem}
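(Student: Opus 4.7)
The plan is to apply the definition of permutation equivariance directly, using the fact that automorphisms leave the adjacency tensor invariant. First I would unpack the hypothesis: since $\pi \in \Aut(G)$ preserves both node and edge labels, the induced action on the adjacency tensor satisfies $\pi(\mathbf{A}) = \mathbf{A}$. Combining this with the equivariance property $f(\pi(\mathbf{A})) = \pi(f(\mathbf{A}))$ yields $f(\mathbf{A}) = \pi(f(\mathbf{A}))$, i.e.\ the output tensor is itself fixed by $\pi$.

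Next I would evaluate both sides of $f(\mathbf{A}) = \pi(f(\mathbf{A}))$ at the index $(h, k)$. Using the convention $\pi(\mathbf{B})[i,j] = \mathbf{B}[\pi^{-1}(i), \pi^{-1}(j)]$ stated in the paragraph preceding the theorem, the right-hand side becomes $f(\mathbf{A})[\pi^{-1}(h), \pi^{-1}(k)]$. By hypothesis $\pi(u) = h$ and $\pi(v) = k$, so $\pi^{-1}(h) = u$ and $\pi^{-1}(k) = v$. Therefore $f(\mathbf{A})[h, k] = f(\mathbf{A})[u, v]$, which is the desired conclusion.

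There is essentially no obstacle here; the statement is a one-line consequence of the definitions once the action of permutations on tensors and the automorphism-invariance of $\mathbf{A}$ are spelled out. The only thing to be careful about is bookkeeping of the $\pi$ versus $\pi^{-1}$ convention when passing from the equivariance identity to the component-wise equality, so I would state the indexing convention explicitly at the start to avoid ambiguity. No induction, no auxiliary lemmas, and no appeal to the earlier theorems of the paper are required.
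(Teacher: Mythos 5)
Your proof is correct and follows essentially the same route as the paper's: both arguments combine the equivariance identity $f(\pi(\mathbf{A}))=\pi(f(\mathbf{A}))$ with the invariance $\pi(\mathbf{A})=\mathbf{A}$ for $\pi\in\Aut(G)$ and then read off the entry at the permuted index. The only cosmetic difference is that you apply the invariance first and the paper applies it last, so the bookkeeping runs through $\pi^{-1}(h)=u$ rather than $\pi(u)=h$; the content is identical.
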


\begin{proof}
    \begin{align*}
    f(\mathbf{A})[u, v] 
        &= \pi^{-1}(f(\pi (\mathbf{A})))[u, v] \\
        &= f(\pi(\mathbf{A}))[\pi(u), \pi(v)] \\
        &= f(\mathbf{A})[h, k].
    \end{align*}
\end{proof}

The theorem implies that nodes and edges within the same orbit are represented identically by GNNs.

\subsection{Expressive Power}
\label{app:Expressive Power of GNNs}

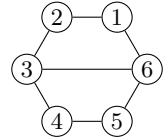
\begin{wrapfigure}{r}{0.3\textwidth}
    \vspace{-2pt}
    \centering
    \begin{tikzpicture}[
    roundnode/.style={circle, draw, minimum size=4mm, inner sep=0pt, font=\small}
]
    \def\radius{0.8}

    \def\angle{360/6}
    
    \foreach \i in {1, 2, 3, 4, 5, 6} {
        \coordinate (n\i) at ({1.2 + \radius * cos(\angle*\i)}, {\radius * sin(\angle*\i)});
    }

    \foreach \i in {1, 2, 3, 4, 5, 6} {
        \node[roundnode] (n\i) at (n\i) {$\i$};
    }

    \draw (n1) -- (n2);
    \draw (n2) -- (n3);
    \draw (n3) -- (n4);
    \draw (n4) -- (n5);
    \draw (n5) -- (n6);
    \draw (n6) -- (n1);
    \draw (n3) -- (n6);

\end{tikzpicture}
    \caption{Two edges $(2, 4)$ and $(2, 5)$ belong to different orbits and will result in non-isomorphic graphs if added to the graph. However, their edge representations will be identical if aggregated from node representations.}
    \label{fig:GNN-failure}
    \vspace{-4pt}
\end{wrapfigure}

Another source of bias comes from the expressiveness of GNNs used to parameterize the policy. If actions from different orbits collapse into identical representations, reducing the network's representational power, the policy may be forced to assign equal probabilities to actions that produce different rewards. In \figref{fig:GNN-failure}, the two actions $\texttt{AddEdge}(G, 2, 4)$ and $\texttt{AddEdge}(G, 2, 5)$ will have identical representations if they are aggregated from node representations. This occurs because nodes $2, 4$ and $5$ all belong to the same orbit and, therefore, share identical representations.

\begin{wrapfigure}{tr}{0.45\textwidth}
    \vspace{-30pt}
    \centering
    \includegraphics[width=0.45\textwidth]{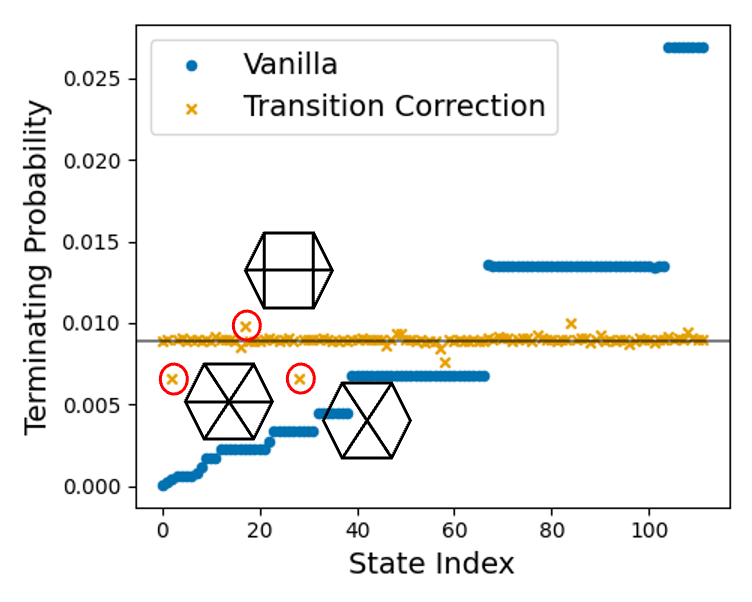}
    \caption{Terminating probabilities. States are sorted according to the number of graphs in the state, $|x|$. States with large errors are marked with red circles.}
    \label{fig:uniform-GNN-failure}
    \vspace{-30pt}
\end{wrapfigure}

Significant research has been conducted on techniques to enhance the expressive power of GNNs \citep{xu2018powerful, dwivedi2021graph}. While much of this work has focused on improving graph-level representations, some methods have been proposed to enhance multi-node representations, such as edges \citep{zhang2021labeling}. In graph generation tasks, actions are often parameterized using all levels of representations---node-level, edge-level, and graph-level. This complexity makes designing expressive GNNs more challenging, emphasizing the need for architectures with enhanced expressive power when the task requires it.

In our synthetic experiments in the main text, we augmented edge-level representations from the GNN with shortest path lengths to distinguish edges in different orbits. For example, in \figref{fig:GNN-failure}, shortest path length of the edge $(2, 4)$ is 2, while the length of $(2, 5)$ is 3. We conducted experiments without this feature augmentation, resulting in an inexact policy, as shown in \figref{fig:uniform-GNN-failure}. Compare this to \figref{fig:synthetic results} (a), where the terminating probabilities are more accurate. In \figref{fig:uniform-GNN-failure}, terminating probabilities for certain states exhibit significantly larger errors, with those states marked by red circles. These states correspond to successor states derived from the state shown in \figref{fig:GNN-failure}.

\section{Proofs}
\label{app:proofs}

\subsection{Proof of Equation \ref{eqn:state transition probability}}
\label{app:state transition probability}

While we did not present \eqnref{eqn:state transition probability} as a theorem, a formal derivation offers insights into the relationship between $(\gS, \gA)$ and $(\gG, \gE)$. Specifically, \eqnref{eqn:state transition probability} states that

\begin{equation}
\label{eqn:state transition probability}
    p_\gA(s'|s) = \sum_{G' \in \gE(G) \cap s'} p_\gE(G'|G).
\end{equation}

for any $G \in s$.

\begin{proof}
We expand state transition probability $p_\gA(s'|s)$ in terms of graph transitions as follows:

\begin{align*}
    p_\gA(s'|s) 
        &= \frac{
            p_\gA(s'|s)\bar p_\gA(s) 
        }{
            \bar p_\gA(s)
        } \\
        &= \frac{
            \sum_{G\in s}\sum_{G'\in\gE(G)\cap s'}
                p_\gE(G'|G)\bar p_\gE(G) 
        }{
            \sum_{G \in s} \bar p_\gE(G)
        } \\
        &= \frac{
            \sum_{G\in s}\bar p_\gE(G)
            \sum_{G'\in\gE(G)\cap s'} p_\gE(G'|G)
        }{
            \sum_{G \in s} \bar p_\gE(G)
        } 
\end{align*}

If \( \sum_{G'\in\gE(G)\cap s'} p_\gE(G'|G) \) is constant for all \( G \in s \), we can factor it out, obtaining:
\[
p_\gA(s'|s) 
= \sum_{G' \in \gE(G_0) \cap s'} p_\gE(G'|G_0), \quad \text{for any } G_0 \in s.
\]

This constantness follows from two assumptions: 1) $p_\gE$ is permutation equivalent, and 2) $\gE$ is structured, meaning for any isomorphic graphs $G, G' \in s$, the set of next graphs are matched by some permutation such that \( \gE(G) = \pi(\gE(G')) \). In other words,
\[
\sum_{G'\in\gE(G)\cap s'} p_\gE(G'|G) 
= \sum_{G'\in\gE(\pi(G))\cap s'}p_\gE(G'|\pi(G))
\]
holds for all \( \pi \in \Aut(G) \).
\end{proof}

\subsection{Proof of Theorem \ref{thm:equivalence implication}}
\label{app:proof of thm:equivalence implication}

The theorem simplifies to the assertion that, for a given graph $G$, graph actions of the same type applied within the same orbit are transition-equivalent. To establish this, we prove the theorem for each action type. The results are straightforward for attribute-level actions, and we provide a proof for the \texttt{SetNodeAttribute} action.

\begin{lemma}[SetNodeAttribute]
    Let $G[l_n(u) = t]$ denote the graph where the attribute of node $u$ in graph $G$ is changed to $t$. If $\Orb(G, u) = \Orb(G, v)$, then $G[l_n(u) = t] \cong G[l_n(v) = t]$.
\end{lemma}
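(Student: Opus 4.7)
The plan is to use the orbit assumption to produce a candidate isomorphism directly, and then verify it respects the updated attribute. Since $\Orb(G,u) = \Orb(G,v)$, by definition of the orbit there exists an automorphism $\pi \in \Aut(G)$ with $\pi(u) = v$. I will show that the \emph{same} $\pi$, viewed as a permutation of the vertex set, is an isomorphism between $G_1 := G[l_n(u) = t]$ and $G_2 := G[l_n(v) = t]$.

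First I would verify the structural conditions. Because changing a node attribute modifies neither $V$ nor $E$ nor the edge-labeling function $l_e$, the vertex and edge sets of $G_1$ and $G_2$ coincide with those of $G$, and $l_e$ is unchanged. Since $\pi \in \Aut(G)$, we already have $\pi(E) = E$ and $l_e(x,y) = l_e(\pi(x),\pi(y))$ for all $(x,y) \in E$, and similarly $l_g(G_1) = l_g(G_2)$ (graph-level labels being unaffected by attribute changes on a single node). So only the node-labeling condition requires care.

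For the node labels, let $l_n^{(1)}$ and $l_n^{(2)}$ denote the labelings of $G_1$ and $G_2$ respectively. For any $w \in V$ with $w \neq u$, the label $l_n^{(1)}(w) = l_n(w)$, and we need $l_n^{(2)}(\pi(w)) = l_n(w)$. If $\pi(w) \neq v$, this reduces to $l_n(\pi(w)) = l_n(w)$, which holds because $\pi \in \Aut(G)$ preserves $l_n$. If $\pi(w) = v$, then $l_n^{(2)}(\pi(w)) = t$, and I need $l_n(w) = t$; but since $\pi(u) = v$ and $\pi$ is a bijection, $\pi(w) = v$ forces $w = u$, contradicting $w \neq u$, so this subcase does not arise. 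Finally for $w = u$, we have $l_n^{(1)}(u) = t$ and $\pi(u) = v$, so $l_n^{(2)}(\pi(u)) = l_n^{(2)}(v) = t$, matching as required.

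The verification is essentially routine once the automorphism $\pi$ is chosen; the only potentially subtle point is handling the node whose label was modified, which the bijectivity of $\pi$ resolves cleanly. The main obstacle, if any, is purely bookkeeping, namely keeping track of which label function (original vs. modified) applies at each step; no deeper argument is needed. The analogous proofs for \texttt{SetEdgeAttribute} and \texttt{SetGraphAttribute} follow the same template by replacing ``orbit of $u$'' with ``orbit of the edge'' or using $\pi = \mathrm{id}$ respectively.
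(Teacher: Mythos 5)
Your proof is correct and follows essentially the same route as the paper's: take the automorphism $\pi \in \Aut(G)$ with $\pi(u)=v$ guaranteed by the orbit assumption and verify it preserves the modified node labeling by the case split $w=u$ versus $w\neq u$. The only difference is that you make explicit the subcase $\pi(w)=v$ for $w\neq u$ (ruled out by bijectivity), a point the paper's proof leaves implicit when it writes $l_n(\pi(w)) = l_{n[v=t]}(\pi(w))$.
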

\begin{proof}
    Let us denote the node labeling function of $G$, $G[l_n(u) = t]$ and $G[l_n(v) = t]$ as $l_n$, $l_{n[u=t]}$ and $l_{n[v=t]}$ respectively. Since $u$ and $v$ are in the same orbit, there exists $\pi \in \Aut(G)$ such that $\pi(u) = v$. Showing $l_{n[u=t]}(w) = l_{n[v=t]}(\pi(w))$ for all $w \in V$ is sufficient to establish the isomorphism. 
    We prove for two cases. First, let $w = u$. Then, $\pi$ satisfies $l_{n[u=t]}(w) = l_{n[u=t]}(u) = l_{n[v=t]}(v) = l_{n[v=t]}(\pi(u)) = l_{n[v=t]}(\pi(w))$. Secondly, let $w \neq u$. Then, $l_{n[u=t]}(w) = l_n(w) = l_n(\pi(w)) =  l_{n[v=t]}(\pi(w))$.
\end{proof}

The proof for the \texttt{SetEdgeAttribute} action is nearly identical to that for \texttt{SetNodeAttribute}, with nodes replaced by edges. We now proceed to prove the cases for the \texttt{AddEdge}, \texttt{AddNode}, and their corresponding backward actions. We use the following two properties in our proofs.

\begin{align*}
    \pi(E\cup E') 
    &= \{(\pi(u), \pi(v)):(u, v) \in E\cup E'\}   \\
    &= \{(\pi(u), \pi(v)):(u, v) \in E \text{ or } (u, v)\in E'\}   \\
    &= \{(\pi(u), \pi(v)):(u, v) \in E\} \cup\{(\pi(u), \pi(v)): (u, v)\in E'\}   \\
    &= \pi(E) \cup \pi(E'),
\end{align*}

and similarly,

\begin{align*}
    \pi(E\setminus E') 
    &= \{(\pi(u), \pi(v)):(u, v) \in E\setminus E'\}   \\
    &= \{(\pi(u), \pi(v)):(u, v) \in E \text{ and } (u, v) \notin E'\}   \\
    &= \{(\pi(u), \pi(v)):(u, v) \in E\} \setminus \{(\pi(u), \pi(v)): (u, v)\in E'\}   \\
    &= \pi(E) \setminus \pi(E'), \\
\end{align*}

where $E$ and $E'$ are edge sets. We assume homogeneous graphs for simplicity.

\begin{lemma}[AddEdge]
    \label{appendix:lemma:AddEdge}
    Let $G[E \cup (u, v)]$ and $G[E \cup (h, k)]$ denote the graphs induced by $E \cup \{(u, v)\}$ and $E \cup \{(h, k)\}$, respectively. If $(u, v)$ and $(h, k)$ are in the same orbit in $G$, then $G[E \cup (u, v)]$ and $G[E \cup (h, k)]$ are isomorphic. In other words, $\Orb(G, u, v) = \Orb(G, h, k)$ implies $G[E \cup (u, v)] \cong G[E \cup (h, k)]$.
\end{lemma}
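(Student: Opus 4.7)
The plan is to use the automorphism that witnesses the orbit equivalence directly as the isomorphism between the two augmented graphs. Since $(u,v)$ and $(h,k)$ lie in the same orbit $\Orb(G,u,v) = \Orb(G,h,k)$, by definition there exists $\pi \in \Aut(G)$ such that $(\pi(u), \pi(v)) = (h, k)$. Both graphs $G[E \cup (u,v)]$ and $G[E \cup (h,k)]$ share the same vertex set $V$ as $G$, so $\pi$ is already a bijection on the relevant vertex sets, and nothing new needs to be constructed.

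The key step is to verify that $\pi$ maps the edge set of $G[E \cup (u,v)]$ onto the edge set of $G[E \cup (h,k)]$. Using the distributive property $\pi(E \cup E') = \pi(E) \cup \pi(E')$ stated just before the lemma, I would write
\begin{equation*}
\pi(E \cup \{(u,v)\}) = \pi(E) \cup \pi(\{(u,v)\}) = \pi(E) \cup \{(\pi(u), \pi(v))\}.
\end{equation*}
Since $\pi \in \Aut(G)$, we have $\pi(E) = E$ by definition of automorphism, and by choice of $\pi$ we have $\{(\pi(u), \pi(v))\} = \{(h,k)\}$. Therefore $\pi(E \cup \{(u,v)\}) = E \cup \{(h,k)\}$, which is exactly the edge set of $G[E \cup (h,k)]$. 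This establishes $G[E \cup (u,v)] \cong G[E \cup (h,k)]$ via $\pi$.

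For the heterogeneous case, one further checks that $\pi$ preserves labels on the shared vertices and edges (which it does since $\pi \in \Aut(G)$) and that the new edge is assigned a consistent label in both graphs. This is essentially bookkeeping and should not present any obstacle. Overall, the argument is short and direct; I do not anticipate any real difficulty, since the heavy lifting is already done by the definition of the automorphism group and the elementary distributive property of $\pi$ over set union. The same template will then carry over, mutatis mutandis, to \texttt{AddNode} and the backward \texttt{Remove} actions, with the main adjustment being the need to extend $\pi$ by the identity on any freshly added vertex.
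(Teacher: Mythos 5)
Your proof is correct and is essentially identical to the paper's: both take the automorphism $\pi$ witnessing the orbit equivalence and verify $\pi(E \cup \{(u,v)\}) = \pi(E) \cup \{(\pi(u),\pi(v))\} = E \cup \{(h,k)\}$ using $\pi(E)=E$. The paper explicitly restricts to homogeneous graphs for simplicity, so your added remark on label preservation is a harmless extra.
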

\begin{proof}
    If $(u, v)$ and $(h, k)$ are in the same orbit, then there exists $\pi \in \Aut(G)$ such that $(\pi(u), \pi(v)) = (h, k)$. Since $\pi$ is an automorphism, it also satisfies $\pi(E) = E$. Thus, $\pi(E \cup \{(u, v)\}) = \pi(E) \cup \{(\pi(u), \pi(v))\} = E \cup \{(h, k)\}$, indicating that $\pi$ is an isomorphism between $G[E \cup (u, v)]$ and $G[E \cup (h, k)]$.
\end{proof}

\begin{lemma}[RemoveEdge]
    \label{appendix:corollary:RemoveEdge}
    Let $G[E\setminus(u, v)]$ and $G[E\setminus(h, k)]$ denote graphs induced by $E\setminus\{(u, v)\}$ and $E\setminus\{(h, k)\}$ respectively. Then, $G[E\setminus(u, v)]$ and $G[E\setminus(h, k)]$ are isomorphic if $(u, v)$ and $(h, k)$ are in the same orbit in graph $G$.
\end{lemma}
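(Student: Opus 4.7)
The plan is to mirror the proof of \lemref{appendix:lemma:AddEdge} (AddEdge) almost verbatim, replacing the set union $E \cup \{(u,v)\}$ with the set difference $E \setminus \{(u,v)\}$. Both proofs hinge on the same observation: if $(u,v)$ and $(h,k)$ lie in the same orbit of $G$, then there is an automorphism $\pi \in \Aut(G)$ sending one edge to the other, and this $\pi$ will serve as the desired isomorphism between the two modified graphs.

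First, I would invoke the hypothesis $\Orb(G, u, v) = \Orb(G, h, k)$ to extract a permutation $\pi \in \Aut(G)$ with $(\pi(u), \pi(v)) = (h, k)$. Being an automorphism, $\pi$ satisfies $\pi(E) = E$. Then I would apply the set-difference identity $\pi(E \setminus E') = \pi(E) \setminus \pi(E')$ that was established just above \lemref{appendix:lemma:AddEdge} in the preamble. This yields
\begin{equation*}
\pi\bigl(E \setminus \{(u,v)\}\bigr) = \pi(E) \setminus \pi\bigl(\{(u,v)\}\bigr) = E \setminus \{(h,k)\},
\end{equation*}
so $\pi$ maps the edge set of $G[E\setminus(u,v)]$ onto that of $G[E\setminus(h,k)]$ while preserving the vertex set, establishing the isomorphism.

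The main obstacle is essentially nonexistent: this is a routine adaptation of the AddEdge case, since the set-difference identity behaves symmetrically to the set-union identity under permutations, and the automorphism $\pi$ fixes $E$ setwise either way. The only point to be careful about is that in the heterogeneous-graph case, one would additionally need $\pi$ to preserve node and edge labels on the surviving edge set $E \setminus \{(u,v)\}$; but this is immediate, since $\pi$ already preserves labels on all of $E$ by virtue of being an automorphism of $G$. Hence the same argument extends to labeled graphs without modification.
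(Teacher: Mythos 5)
Your proposal is correct and matches the paper's proof essentially verbatim: both take the automorphism $\pi \in \Aut(G)$ sending $(u,v)$ to $(h,k)$ and apply the set-difference identity $\pi(E\setminus\{(u,v)\}) = \pi(E)\setminus\{(\pi(u),\pi(v))\} = E\setminus\{(h,k)\}$. Your added remark about label preservation in the heterogeneous case is a minor refinement the paper omits (it assumes homogeneous graphs for simplicity), but the argument is the same.
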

\begin{proof}
    Let $\pi \in \Orb(G, u, v)$ such that $(\pi(u), \pi(v))=(h, k)$. Then, $\pi(E\setminus\{(u,v)\}) = \pi(E)\setminus\{(\pi(u),\pi(v))\}) = E\setminus\{(h, k)\}$, completing the proof.
\end{proof}

\begin{lemma}[AddNode]
    \label{appendix:lemma:AddNode}
    Let $G[E \cup (u, w)]$ and $G[E \cup (v, w)]$ denote the graphs induced by attaching a new node $w$ to the existing nodes $u$ and $v$, respectively. Then, $\Orb(G, u) = \Orb(G, v)$ implies $G[E \cup (u, w)] \cong G[E \cup (v, w)]$.
\end{lemma}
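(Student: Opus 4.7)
The plan is to mirror the argument used for \lemref{appendix:lemma:AddEdge} (\texttt{AddEdge}), but with a small extension because the new node $w$ does not lie in the original vertex set $V$ on which the automorphism $\pi \in \Aut(G)$ acts. Since $u$ and $v$ lie in the same orbit of $G$, we are given some $\pi \in \Aut(G)$ with $\pi(u) = v$ and $\pi(E) = E$. I would extend $\pi$ to a permutation $\pi'$ of $V \cup \{w\}$ by fixing $w$, that is, setting $\pi'(w) = w$ and $\pi'(x) = \pi(x)$ for $x \in V$. This $\pi'$ is manifestly a bijection on the vertex set of both graphs.

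Next I would verify that $\pi'$ carries the edge set of $G[E \cup (u,w)]$ to that of $G[E \cup (v,w)]$. Using the two set-identities for $\pi$ acting on unions that were invoked in the \texttt{AddEdge} case, we get
\begin{equation*}
    \pi'(E \cup \{(u,w)\}) = \pi'(E) \cup \{(\pi'(u), \pi'(w))\} = E \cup \{(v, w)\},
\end{equation*}
where $\pi'(E) = \pi(E) = E$ because $w$ is not incident to any edge in $E$, so extending $\pi$ by the identity on $w$ does not affect its action on $E$. This shows $\pi'$ is an isomorphism between $G[E \cup (u,w)]$ and $G[E \cup (v,w)]$, completing the argument.

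If the graphs carry labels, one additionally needs the new node $w$ to be assigned the same attribute in both resulting graphs (which is the case by construction, since the \texttt{AddNode} action adds a node of a specified type) and that $\pi$ already preserves $l_n$ and $l_e$ on $V$ and $E$ by virtue of lying in $\Aut(G)$. I expect no real obstacle here; the only subtlety is formally extending the permutation across the two distinct copies of the new node in the two graphs and checking that the extension remains a valid automorphism-like map, which is handled cleanly by declaring $\pi'(w) = w$.
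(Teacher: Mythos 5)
Your proof is correct and matches the paper's argument exactly: both take an automorphism $\pi$ of $G$ with $\pi(u)=v$, extend it to $V\cup\{w\}$ by fixing $w$, and verify that the extended map sends $E\cup\{(u,w)\}$ to $E\cup\{(v,w)\}$. Your additional remark about labels is a harmless elaboration the paper omits (it assumes homogeneous graphs for simplicity in this section).
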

\begin{proof}
    Let $\pi \in \Orb(G, u)$ such that $\pi(u)=v$, and let $\Tilde{\pi}: V\cup\{w\} \rightarrow V\cup\{w\}$ be the extension of  $\pi$ such that $\Tilde{\pi}(w) = w$ and $\Tilde{\pi}(i)=\pi(i)$ for $i \neq w$. Then, $\Tilde{\pi}(E\cup (u, w)) = \Tilde{\pi}(E)\cup (\Tilde{\pi}(u), \Tilde{\pi}(w)) = \pi(E)\cup (\pi(u), \Tilde{\pi}(w))  = E\cup(v, w)$.
\end{proof}

\begin{corollary}[RemoveNode]
\label{appendix:corollary:RemoveNode}
    Let $G[V \setminus u]$ and $G[V \setminus v]$ denote the graphs induced by removing nodes $u$ and $v$, respectively, where edges connected to $u$ or $v$ are also removed. Then, $\Orb(G, u) = \Orb(G, v)$ implies $G[V \setminus u] \cong G[V \setminus v]$.
\end{corollary}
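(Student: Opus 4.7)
The plan is to mirror the construction used in \lemref{appendix:lemma:AddNode}, but in the opposite direction: rather than extending an automorphism of $G$ to include a newly added node, I would restrict an automorphism of $G$ to the vertex set obtained after deleting one. This is exactly why the statement is phrased as a corollary and not a fresh lemma.

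First, since $\Orb(G, u) = \Orb(G, v)$, pick $\pi \in \Aut(G)$ with $\pi(u) = v$, and define $\bar\pi : V \setminus \{u\} \to V \setminus \{v\}$ as the restriction of $\pi$. Because $\pi$ is a bijection on $V$ sending $u$ to $v$, the restriction $\bar\pi$ is automatically a bijection between the two reduced vertex sets; no extra work is needed here, unlike in \lemref{appendix:lemma:AddNode} where one had to extend $\pi$ by fixing the newly added node.

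Next, I would verify that $\bar\pi$ sends the edge set of $G[V \setminus u]$ onto that of $G[V \setminus v]$. Writing $E_u := \{(a, b) \in E : a \neq u,\ b \neq u\}$ and analogously $E_v$, I would apply the identities $\pi(E \setminus E') = \pi(E) \setminus \pi(E')$ and $\pi(E) = E$ (already used in \lemref{appendix:lemma:AddEdge}) to obtain $\bar\pi(E_u) = \pi(E) \setminus \pi(\{(a, b) \in E : a = u \text{ or } b = u\})$, and then use $\pi(u) = v$ together with $\pi(E) = E$ to identify the subtracted set as exactly $\{(a, b) \in E : a = v \text{ or } b = v\}$, yielding $\bar\pi(E_u) = E_v$. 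In the labeled setting, preservation of node, edge, and graph labels carries over for free from the fact that $\pi$ itself is an automorphism, so $\bar\pi$ qualifies as an isomorphism between $G[V \setminus u]$ and $G[V \setminus v]$.

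There is no serious obstacle; the only thing requiring care is the bookkeeping that the image under $\pi$ of the edges incident to $u$ is exactly the set of edges incident to $v$, which is immediate from $\pi(u) = v$ and $\pi(E) = E$. Apart from this routine set-theoretic identity, the proof is a direct dual of \lemref{appendix:lemma:AddNode}.
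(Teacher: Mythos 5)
Your proposal is correct and matches the paper's proof essentially verbatim: both take $\pi \in \Aut(G)$ with $\pi(u) = v$, apply the identities $\pi(E \setminus E') = \pi(E)\setminus\pi(E')$ and $\pi(E) = E$ to show the image of the edges incident to $u$ is exactly the set of edges incident to $v$, and conclude that the restriction of $\pi$ to $V \setminus \{u\}$ is the desired isomorphism. No differences worth noting.
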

\begin{proof}
    Let $\pi \in \Aut(G)$ be an automorphism of $G=(V, E)$ such that $\pi(u)=v$. Then:
    \begin{align*}
        \pi\big (E\setminus(\{(u, k): k\in V\}\cup\{(k, u): k\in V\})\big ) 
        &= \pi(E)\setminus(\{(\pi(u), \pi(k)): k\in V\}\cup\{(\pi(k), \pi(u)): k\in V\})\\
        &= E\setminus(\{(v, k'): k'\in V\}\cup\{(k', v): k'\in V\})
    \end{align*}
    where $E\setminus(\{(u, k): k\in V\}\cup\{(k, u): k\in V\})$ and $E\setminus(\{(v, k'): k'\in V\}\cup\{(k', v): k'\in V\})$ are edge sets of the induced subgraphs $G[V \setminus u]$ and $G[V \setminus v]$, respectively. Therefore, $\pi$, restricted to $V \setminus \{u\}$, is an isomorphism between $G[V \setminus u]$ and $G[V \setminus v]$. 
\end{proof}

\subsection{Proof of Theorem \ref{thm:orbit equivalence sufficiency}}

\begin{proof}
Note that orbit equivalence implies transition equivalence by \thmref{thm:equivalence implication}, meaning that each set of transition equivalent actions can be partitioned into subsets of orbit-equivalent actions. State transition probabilities can thus be computed by summing over state-action probabillities $p_{\bar\gA}(a|s)$, where each $p_{\bar\gA}$ is in turn defined by summing over orbit-equivalent graph actions in $a$.

Formally, let $\bar\gA(s)$ be the set of actions available from state $s$, and let $a(s)$ represent the next state obtained by applying action $a$ to $s$. Define $\bar\gA(s, s')=\{a \in \bar\gA(s):a(s)=s'\}$ as the set of forward actions from state $s$ that lead to state $s'$. Then, state transition probability can be expressed as:

\begin{align*}
    p_{\gA}(s'|s) 
        &= \sum_{a \in \bar\gA(s, s')} p_{\bar\gA}(a|s) \\
    q_{\gA}(s|s')
        &= \sum_{a \in \bar\gA(s, s')} q_{\bar\gA}(a|s').
\end{align*}

From the assumption, state-action flow constraints are satisfied, i.e., 

\begin{align*}
    F(s)p_{\bar\gA}(a|s) = F(s')q_{\bar\gA}(a|s').
\end{align*}

The assumption can be satisfied if the equivariant neural networks parameterizing $p_{\bar\gA}$ and $q_{\bar\gA}$ can distinguish between orbits differently, and the number of orbits (and hence the number of actions) is the same for both the forward and backward transitions. 

Summing both side of the equations over $\bar\gA(s, s')$, we have state transition flow constraints, $F(s)p_{\gA}(s'|s) = F(s')q_{\gA}(s|s')$.

\end{proof}

\subsection{Proof of Lemma \ref{lem:orb-aut} and Its Generalization}
\label{app:proof and generalization of orb-aut}

\lemref{lem:orb-aut} relates the order of orbits to the order of automorphism group. The proof relies on orbit-stabilizer theorem, hence we introduce the following definition.

\begin{definition}[Stabilizer]
    The stabilizer of a node $u \in V$ in graph $G$ is the set of automorphisms that fix node $u$: $\Stab(G, u) =  \{\pi \in \Aut(G): \pi(u) = u\}$. The stabilizer of an edge $(u, v)$ is defined as $\Stab(G, u, v) =  \{\pi \in \Aut(G): \pi(u) = u, \pi(v) = v\}$. Similarly, the stabilizer of a node set $S$ is defined as $\Stab(G, S) =  \{\pi \in \Aut(G): \pi(S)=S\}$.
\end{definition}

We restate \lemref{lem:orb-aut} and provide its proof.

\begin{lemma}[AddEdge]
\label{lem:orb-aut:AddEdge}
    Let $G = G'[E' \setminus (u, v)]$ and $G' = G[E \cup (u, v)]$ be two successive graphs induced be $E' \setminus (u, v)$ and $E \cup (u, v)$. Then the following equation holds:
    $$
    \frac{|\Orb(G, u, v)|}{|\Orb(G', u, v)|}
    = \frac{|\Aut(G)|}{|\Aut(G')|}.
    $$
\end{lemma}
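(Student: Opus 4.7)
The main tool will be the orbit--stabilizer theorem applied to the action of the relevant automorphism groups on the ordered pair $(u,v)$. Applied to $G$ and $G'$ respectively, it gives
\begin{equation*}
|\Orb(G,u,v)|\cdot|\Stab(G,u,v)| = |\Aut(G)|, \qquad |\Orb(G',u,v)|\cdot|\Stab(G',u,v)| = |\Aut(G')|.
\end{equation*}
Dividing these two identities yields
\begin{equation*}
\frac{|\Orb(G,u,v)|}{|\Orb(G',u,v)|} \;=\; \frac{|\Aut(G)|}{|\Aut(G')|}\cdot\frac{|\Stab(G',u,v)|}{|\Stab(G,u,v)|},
\end{equation*}
so the lemma reduces to proving the single equality $|\Stab(G,u,v)|=|\Stab(G',u,v)|$.

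\textbf{Key step: the stabilizers coincide.} I will in fact show the stronger set equality $\Stab(G,u,v)=\Stab(G',u,v)$ by two containments. For the forward direction, take $\pi\in\Stab(G,u,v)$, so $\pi\in\Aut(G)$ with $\pi(u)=u$ and $\pi(v)=v$. Using the identity $\pi(E\cup E'')=\pi(E)\cup\pi(E'')$ established in Appendix~\ref{app:proofs}, I compute $\pi(E\cup\{(u,v)\}) = \pi(E)\cup\{(\pi(u),\pi(v))\} = E\cup\{(u,v)\}$, hence $\pi\in\Aut(G')$ and therefore $\pi\in\Stab(G',u,v)$. For the reverse direction, take $\pi\in\Stab(G',u,v)$; then $\pi(E)\cup\{(u,v)\}=E\cup\{(u,v)\}$. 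Because $\pi$ fixes $u$ and $v$ pointwise, so does $\pi^{-1}$, so the preimage of $(u,v)$ under $\pi$ is $(u,v)$ itself; since $(u,v)\notin E$, this forces $(u,v)\notin\pi(E)$ as well, and the equation above collapses to $\pi(E)=E$, giving $\pi\in\Aut(G)$ and hence $\pi\in\Stab(G,u,v)$.

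\textbf{Conclusion.} Combining the stabilizer equality with the orbit--stabilizer identities above immediately yields the claimed ratio, completing the proof. The step I expect to require the most care is the reverse inclusion of stabilizers, specifically the argument that $(u,v)\notin\pi(E)$. This relies crucially on the hypothesis that $(u,v)\notin E$ in the pre-edge-addition graph $G$ (i.e., \texttt{AddEdge} genuinely introduces a new edge); without it, the set equation would allow cancellations that do not force $\pi(E)=E$. Everything else is routine bookkeeping with the definitions of orbit and stabilizer, and the proof naturally generalizes to the other action types listed in Appendix~\ref{app:definitions of actions} by replacing the ordered pair $(u,v)$ with whichever node, edge, or node-set tuple the action operates on.
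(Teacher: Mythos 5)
Your proposal is correct and follows essentially the same route as the paper's proof: reduce via the orbit--stabilizer theorem to showing $\Stab(G,u,v)=\Stab(G',u,v)$, with an identical forward inclusion. Your reverse inclusion is in fact slightly cleaner than the paper's (which argues by contradiction about a forced duplication in $\pi'(E)\cup\{(u,v)\}$): you observe directly that $\pi^{-1}$ fixes $(u,v)$ and $(u,v)\notin E$, so $(u,v)\notin\pi(E)$ and the edge can be stripped from both sides -- and you correctly flag that $(u,v)\notin E$ is the hypothesis doing the work.
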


\begin{proof}
    Using the orbit-stabilizer theorem, we have
    \begin{equation*}
    \begin{aligned}
    \frac{|\Orb(G, u, v)|}{|\Orb(G', u, v)|} = \frac{|\Aut(G)|}{|\Aut(G')|}
    &\iff
    \frac{|\Aut(G)|}{|\Orb(G, u, v)|} = \frac{|\Aut(G')|}{|\Orb(G', u, v)|}
    \\
    &\iff
    |\Stab(G, u, v)| = |\Stab(G', u, v)|.
    \end{aligned}
    \end{equation*}

    Hence, we prove the lemma by showing $|\Stab(G, u, v)| = |\Stab(G', u, v)|$. It suffices to prove that $\Stab(G, u, v) = \Stab(G', u, v)$. 
    
    First, we show that $\Stab(G, u, v) \subseteq \Stab(G', u, v)$. Let $\pi \in \Stab(G, u, v)$. Then $\pi(u)=u$, $\pi(v)=v$, and $\pi(E \cup \{(u, v)\}) = \pi(E) \cup \{(\pi(u), \pi(v))\} = E \cup \{(u, v)\}$, which implies that $\pi \in \Stab(G', u, v)$.

    Conversely, let $\pi' \in \Stab(G', u, v)$, so $\pi'(u)=u$, $\pi'(v)=v$, and $\pi'(E\cup\{(u, v)\}) = \pi'(E) \cup \{(u, v)\} = E \cup(u, v)$. Suppose for contradiction that $\pi'(E) \neq E$. Then some edge in $E$ must be mapped to a different edge not in $E$, and to satisfy the equality $\pi'(E) \cup \{(u, v)\} = E \cup(u, v)$, the only possibility is that $\pi'(E)\setminus E=\{(u, v)\}$. But since $\pi'(\{(u,v)\})=\{(u,v)\}$, this implies a duplication, contradicting the assumption that $\pi'$ is a permutation. Therefore, $\pi'(E)=E$, and hence $\pi'\in\Stab(G, u, v)$.

    Thus, $\Stab(G, u, v) = \Stab(G', u, v)$, completing the proof.
\end{proof}

For general statement of \lemref{lem:orb-aut}, we define the orbit of a graph action as the orbit of set of nodes or edges affected by the action. For example, the orbit of $e= \texttt{AddEdge}(G, u, v)$, denoted as $\Orb(G, e)$, corresponds to $\Orb(G, u, v)$. The backward action associated with $e$ is $\texttt{RemoveEdge}(G', u, v)$, and the orbit of this action when applied to the next graph $G'$ is denoted as $\Orb(G', e)$, which corresponds to $\Orb(G', u, v)$. Definition of orbits for each action type is provided in \tabref{tab:action-orbit}. 

\begin{table}[h]
\caption{Orbit of graph actions}
\label{tab:action-orbit}
\begin{center}
\begin{tabular}{ll}
\hline
    \texttt{AddNode}$(G, u)$ & $\Orb(G, u)$ \\
    \texttt{RemoveNode}$(G, v)$ & $\Orb(G, v)$ \\
    \texttt{AddEdge}$(G, u, v)$ & $\Orb(G, u, v)$ \\
    \texttt{RemoveEdge}$(G, u, v)$ & $\Orb(G, u, v)$ \\
    \texttt{SetNodeAttribute}$(G, u, t)$ & $\Orb(G, u)$ \\
    \texttt{SetEdgeAttribute}$(G, u, v, t)$ & $\Orb(G, u, v)$ \\
    \texttt{SetGraphAttribute}$(G, t)$ & $\Orb(G, V)$ \\
\hline
\end{tabular}
\end{center}
\end{table}

Using these definitions, we generalize \lemref{lem:orb-aut} as follows.

\begin{lemma}
    \label{lem:generalized-orb-aut}
    Let $(G, G') \in \gE$ be a graph transition induced by an action $e \in \bar\gE$ that either adds a node, an edge, or modifies an attribute in the graph $G$. Then, the following relationship holds:
    \begin{equation*}
    \frac{\text{Number of forward orbit-equivalent actions}}{\text{Number of backward orbit-equivalent actions}}
    =\frac{|\Orb(G, e)|}{|\Orb(G', e)|}
    = \frac{|\Aut(G)|}{|\Aut(G')|}.
    \end{equation*}
\end{lemma}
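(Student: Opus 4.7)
The plan is to reduce the generalized lemma to an equality (or bijection) of stabilizer groups across the transition, and then verify this stabilizer statement one action type at a time. The first equality in the lemma is essentially definitional: by Definition~\ref{dfn:orbit-equivalence}, two graph actions $(G,t,u_1)$ and $(G,t,u_2)$ of the same type at the same graph are orbit-equivalent iff the elements $u_1,u_2$ lie in the same orbit of $\Aut(G)$, so the number of forward orbit-equivalent actions available from $G$ equals $|\Orb(G,e)|$, and the same reasoning applied to the reverse action at $G'$ yields $|\Orb(G',e)|$ in the denominator.

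For the second (and nontrivial) equality, I would apply the orbit-stabilizer theorem to both numerator and denominator, writing
\[
\frac{|\Orb(G,e)|}{|\Orb(G',e)|} \;=\; \frac{|\Aut(G)|/|\Stab(G,e)|}{|\Aut(G')|/|\Stab(G',e)|},
\]
which reduces the claim to proving $|\Stab(G,e)|=|\Stab(G',e)|$. I would then dispatch the cases: the AddEdge case is exactly Lemma~\ref{lem:orb-aut:AddEdge}, where $\Stab(G,u,v)=\Stab(G',u,v)$ holds as literal set equality on the common vertex set $V$. For SetNodeAttribute, SetEdgeAttribute, and SetGraphAttribute, the vertex set does not change and only the label of the affected element is altered; any $\pi$ fixing that element preserves the modified label trivially, and labels elsewhere are unchanged, so $\Stab(G,e)=\Stab(G',e)$ as sets. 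SetGraphAttribute is even simpler since $\Orb(G,V)=\{V\}=\Orb(G',V)$ and $|\Aut(G)|=|\Aut(G')|$.

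The AddNode case is where the argument actually requires work, because now $V(G')=V(G)\cup\{w\}$ and the two stabilizers live in permutation groups of different sizes, so the equality must be replaced by an explicit bijection. I would construct $\phi:\Stab(G,u)\to\Stab(G',w)$ by extending $\pi$ via $\phi(\pi)(w)=w$ and $\phi(\pi)(i)=\pi(i)$ on $V$: since $\pi(u)=u$ the new edge $(u,w)$ is preserved, and $\pi\in\Aut(G)$ handles the remaining edges, so $\phi(\pi)\in\Stab(G',w)$. For the inverse $\psi:\Stab(G',w)\to\Stab(G,u)$, I take $\sigma\mapsto\sigma|_V$; the restriction is well defined because $w$ has $u$ as its unique neighbor in $G'$, and $\sigma(w)=w$ together with edge-preservation forces $\sigma(u)=u$, so $\sigma|_V$ is indeed an automorphism of $G=G'[V]$ fixing $u$. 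The maps $\phi,\psi$ are patently mutual inverses. The backward action types (RemoveNode, RemoveEdge, and the set-attribute inverses) then follow immediately by swapping the roles of $G$ and $G'$ in the transition, since the lemma statement is symmetric in the pair.

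The main obstacle is precisely the AddNode case: it is the only one where the stabilizer comparison crosses vertex sets of different cardinality, so the direct subset-equality template used for AddEdge breaks down. The crucial technical ingredient that rescues the bijection is the structural fact that the newly attached node $w$ has $u$ as its \emph{unique} neighbor in $G'$; if one replaced \texttt{AddNode} by a more general action that could attach $w$ to multiple existing nodes, the restriction $\sigma|_V$ would no longer be forced to fix $u$, and the identity would fail. Every other case reduces cleanly to either Lemma~\ref{lem:orb-aut:AddEdge} or a label-preservation check on the unchanged vertex set.
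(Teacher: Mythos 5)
Your proposal is correct and follows essentially the same route as the paper: the first equality is treated as definitional via the orbit of the action, the second is reduced by the orbit--stabilizer theorem to showing $|\Stab(G,e)|=|\Stab(G',e)|$, and the cases are dispatched exactly as in \appref{app:proof and generalization of orb-aut} --- literal stabilizer equality for \texttt{AddEdge} and the attribute actions, and an extension/restriction bijection for \texttt{AddNode}. Your only deviation is cosmetic: you justify surjectivity of the \texttt{AddNode} bijection by noting that the new node's unique neighbor must be fixed, whereas the paper argues via the edge-set identity $\pi'(E)\cup\{(\pi'(u),v)\}=E\cup\{(u,v)\}$; both arguments are valid and equivalent in substance.
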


We already proved \lemref{lem:generalized-orb-aut} for \texttt{AddEdge}.

For \texttt{AddNodeAttribute}$(G, u, t)$, the proof is straightforward: the only difference between $G$ and $G'$ is the attribute assigned to node $u$, so it is immediate that $\Stab(G, u) = \Stab(G', u)$. The argument for \texttt{AddEdgeAttribute} is nearly identical. Next, we prove the corresponding result for the \texttt{AddNode}.

\begin{lemma}[AddNode]
Let $G'=(V', E')$ be the graph resulted by adding node $v$ to node $u$ in graph $G=(V, E)$. Then, 

$$
\frac{|\Orb(G, u)|}{|\Orb(G', v)|}
= \frac{|\Aut(G)|}{|\Aut(G')|}.
$$ 
\end{lemma}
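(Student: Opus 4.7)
The plan is to reduce the identity to an equality of stabilizer sizes via the orbit-stabilizer theorem, and then to establish that equality by constructing an explicit bijection between the two stabilizer groups, following the template of the proof of \lemref{lem:orb-aut:AddEdge}.

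First, I would apply the orbit-stabilizer theorem to the actions of $\Aut(G)$ on $V$ and of $\Aut(G')$ on $V'$, obtaining $|\Aut(G)| = |\Orb(G, u)| \cdot |\Stab(G, u)|$ and $|\Aut(G')| = |\Orb(G', v)| \cdot |\Stab(G', v)|$. Substituting into the target identity, the claim reduces to the purely group-theoretic statement $|\Stab(G, u)| = |\Stab(G', v)|$.

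To prove this equality, I would construct two mutually inverse maps. In the forward direction, define $\Phi \colon \Stab(G, u) \to \Stab(G', v)$ by extending each $\pi$ to the permutation $\pi'$ of $V' = V \cup \{v\}$ that agrees with $\pi$ on $V$ and satisfies $\pi'(v) = v$. A short edge-set computation, $\pi'(E \cup \{(u, v)\}) = \pi(E) \cup \{(\pi(u), v)\} = E \cup \{(u, v)\} = E'$, confirms $\pi' \in \Aut(G')$. In the reverse direction, define $\Psi \colon \Stab(G', v) \to \Stab(G, u)$ by restriction, $\Psi(\pi') = \pi'|_V$. Because $\pi'(v) = v$ and $\pi'$ is a permutation, the set $V = V' \setminus \{v\}$ is preserved, so $\pi'|_V$ is a permutation of $V$; and removing the edge $(u, v)$ from the identity $\pi'(E') = E'$ yields $\pi'|_V(E) = E$, so $\pi'|_V \in \Aut(G)$.

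The main obstacle is verifying that $\Psi$ actually lands in $\Stab(G, u)$---that is, showing any automorphism of $G'$ fixing $v$ must also fix $u$. This is where the structure of the \texttt{AddNode} action enters crucially: because $v$ is a freshly added node, it has degree exactly one in $G'$ and its unique neighbor is $u$. Any automorphism fixing $v$ must therefore map the unique edge incident to $v$ to itself, forcing $\pi'(u) = u$. Once this is in hand, a direct check shows that $\Phi \circ \Psi$ and $\Psi \circ \Phi$ are both the identity, establishing $|\Stab(G, u)| = |\Stab(G', v)|$ and completing the proof.
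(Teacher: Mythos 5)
Your proposal is correct and follows essentially the same route as the paper's proof: reduce to $|\Stab(G,u)|=|\Stab(G',v)|$ via the orbit-stabilizer theorem, then exhibit the extension map $\pi\mapsto\tilde\pi$ as a bijection between the two stabilizers. The only cosmetic difference is that you justify $\pi'(u)=u$ by noting $v$ has a unique incident edge in $G'$, whereas the paper extracts the same fact from the edge-set identity $\pi'(E)\cup\{(\pi'(u),v)\}=E\cup\{(u,v)\}$; the two arguments are equivalent.
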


\begin{proof}
Using the orbit-stabilizer theorem, it suffices to show $|\Stab(G, u)| = |\Stab(G', v)|$. We do this by constructing a bijective map $f: \Stab(G, u) \rightarrow \Stab(G', v)$. Define $f(\pi)=\tilde\pi$ for $\pi\in \Stab(G, u)$, where $\tilde\pi$ is the extension of $\pi$ defined on $V\cup\{v\}$ given by 

\[ \tilde\pi(w) =
  \begin{cases}
    \pi(w)       & \quad \text{if } w \in V\\
    v  & \quad \text{if } w=v.
  \end{cases}
\]

We claim that $\tilde\pi \in \Stab(G', v)$. Since $\pi \in \Aut(G)$ and $\pi(u)=u$, it follows that $\tilde\pi\in\Aut(G')$ and $\tilde\pi(v)=v$. Thus, $f$ maps into $\Stab(G', v)$ and is injective by construction.

To show that $f$ is surjective, let $\pi'\in \Stab(G', v)$. Then $\pi'\in\Aut(G')$ and $\pi'(v)=v$, and we have

\[
    \pi'(E\cup\{(u, v)\})=\pi'(E)\cup\{(\pi'(u), v)\}=E\cup\{(u, v)\}.
\]

This implies that $\pi'(E)=E$ and $(\pi'(u), v)=(u, v)$, so $\pi'(u)=u$, since $\pi'(E)$ cannot contain $(u, v)$. Therefore, the restriction of $\pi'$ on $V$, denoted $\pi$, belongs to $\Stab(G, u)$, and satisfies $f(\pi)=\pi'$.

Hence, $f$ is bijective, and $|\Stab(G, u)| = |\Stab(G', v)|$, as claimed.
\end{proof}

\subsection{Proof of Theorem \ref{thm:Automorphism correction}}

\begin{proof}
We first note that state-action probability can be computed by multiplying the number of orbit-equivalent actions when $p_\gE$ is permutation-equivariant:

\begin{align*}
    p_{\bar\gA}(a|s)
        &= \sum_{e \in \bar\gE(G) \cap a} p_{\bar\gE}(e|G) \\
        &= |\bar\gE(G) \cap a| \cdot p_{\bar\gE}(e|G) 
\end{align*}

for any $G \in s$. Since $|\bar\gE(G) \cap a|$ represents the number of orbit-equivalent actions from graph $G$, it is equal to $|\Orb(G, e)|$ for any $e \in \bar\gE(G) \cap a$, where we defined the orbit of graph actions in \appref{app:proof and generalization of orb-aut}. Thus,

\begin{align*}
    p_{\bar\gA}(a|s) = |\Orb(G, e)| \cdot p_{\bar\gE}(e|G).
\end{align*}

Similarly, for the backward policy, we have 

\begin{align*}
    q_{\bar\gA}(a|s') = |\Orb(G', e)| \cdot q_{\bar\gE}(e|G').
\end{align*}

We prove \thmref{thm:Automorphism correction} using the following sequence of equations.

\begin{align*}
\frac{p_{\bar\gA}(a|s)}{q_{\bar\gA}(a|s')} 
    &= \frac{
        |\Orb(G, e)| \cdot p_{\bar\gE}(e|G)
    }{
        |\Orb(G', e)| \cdot q_{\bar\gE}(e|G')
    }\\
    &= \frac{|\Aut(G)|}{|\Aut(G')|}
    \cdot
    \frac{p_\gE(e|G)}{q_\gE(e|G')},
\end{align*}

where we used \lemref{lem:generalized-orb-aut} for the last equation.
\end{proof}

\subsection{Proof of Theorem \ref{thm:DB-correction}}
\label{app:proof:DB-fixed}

Before proving \thmref{thm:DB-correction}, we first prove the existence of a policy that satisfies graph-level DB constraints.

\begin{lemma}
    For any given reward function $R$, there exist $p_\gE$, $q_\gE$, and $\Tilde{F}$ that satisfy the graph-level detailed balance constraints for all transitions $ (G, G') \in \gE$, defined as follows:
    \begin{align}
        \Tilde{F}(G)p_\gE(G'|G) = \Tilde{F}(G')q_\gE(G|G')
    \end{align}
    Note that this differs from the usual state-level detailed balance condition:
    \begin{align}
        F(s)p_{\bar\gA}(a|s) = F(s')q_{\bar\gA}(a|s').
    \end{align}
\end{lemma}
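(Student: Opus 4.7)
The plan is to construct the triple $(p_\gE, q_\gE, \tilde F)$ by lifting a state-level GFlowNet to the graph level, and then verify the graph-level detailed balance constraint using \thmref{thm:Automorphism correction}. The key observation is that since $|\Aut(G)|$ is an isomorphism invariant, it can be transferred consistently between the state description and the graph description.

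First, I would invoke a standard existence result for GFlowNets on the DAG $(\gS, \bar\gA)$: for any nonnegative terminal reward $\tilde R$ on $\gX$, there exist a forward policy $p_{\bar\gA}$, a backward policy $q_{\bar\gA}$, and a state-level flow function $F:\gS\to\R^+$ satisfying the state-action detailed balance condition $F(s)\,p_{\bar\gA}(a|s)=F(s')\,q_{\bar\gA}(a|s')$ for every state-action triple $(s,a,s')$. This may be obtained by first constructing the canonical edge flow $F(s,a,s')$ from $\tilde R$ via backward propagation and then normalizing into policies, exactly as in the standard DB derivation.

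Next, I would parameterize $p_\gE$ and $q_\gE$ using permutation-equivariant functions so that orbit-equivalent graph actions receive equal probability, as discussed in \secref{subsec:equivalent-actions}. This allows me to pass from state-action probabilities to graph-action probabilities through the identity $p_{\bar\gA}(a|s)=|\Orb(G,e)|\cdot p_\gE(G'|G)$ established in the proof of \thmref{thm:Automorphism correction}, and analogously for $q_{\bar\gA}$. The correct graph-level flow to define is then
\begin{equation*}
\tilde F(G) \;=\; |\Aut(G)|\cdot F([G]),
\end{equation*}
which is well-defined on $\gG$ precisely because both factors are constant on isomorphism classes. Substituting the state-level DB identity and applying \thmref{thm:Automorphism correction} directly yields $\tilde F(G)\,p_\gE(G'|G)=\tilde F(G')\,q_\gE(G|G')$ for every transition $(G,G')\in\gE$.

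The main obstacle is verifying that the lifted graph-level objects are genuinely consistent across all representatives of each equivalence class. Concretely, one must check that $p_\gE(G'|G)$ depends only on the isomorphism classes of $G$ and $G'$ together with the orbit of the chosen action, which is exactly the permutation-equivariance property formalized in \appref{app:permutation equivariance}; and that the orbit-size ratio matches the automorphism-size ratio when translating between $p_{\bar\gA}$ and $p_\gE$, which is precisely \lemref{lem:orb-aut}. Once these consistency properties are in place, the existence claim reduces to a direct substitution, so the proof collapses into bookkeeping. I would end by noting that any alternative definition $\tilde F(G)=c(G)\cdot F([G])$ with $c$ isomorphism-invariant works as long as Theorem 4.5 is applied with the matching scaling, justifying the specific choice used in \thmref{thm:DB-correction}.
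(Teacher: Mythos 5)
Your proposal is correct and follows essentially the same route as the paper: start from a state-level detailed-balance solution, apply \thmref{thm:Automorphism correction} to translate the state-action probabilities into graph-level ones, and define $\tilde F(G) = |\Aut(G)|\,F([G])$. The paper's own proof is just a terser version of this (leaving the existence of the state-level solution and the well-definedness checks implicit), so your additional bookkeeping is elaboration rather than a different argument.
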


\begin{proof}
    By \thmref{thm:Automorphism correction}, state-level detailed balance constraints can be rewritten as graph transition probabilities as follows:
    \begin{align}
    |\Aut(G)|F(G)p_{\gE}(G'|G)=|\Aut(G')|F(G')q_{\gE}(G|G').
    \end{align}
    Defining $\Tilde{F}(G) = |\Aut(G)|F(G)$, the functions $\Tilde{F}$, $p_{\gE}$, and $q_{\gE}$ satisfy the graph-level detailed balance constraints for a given $R$.
\end{proof}

\begin{figure}[t]
\centering
    \includegraphics[width=0.9\textwidth]{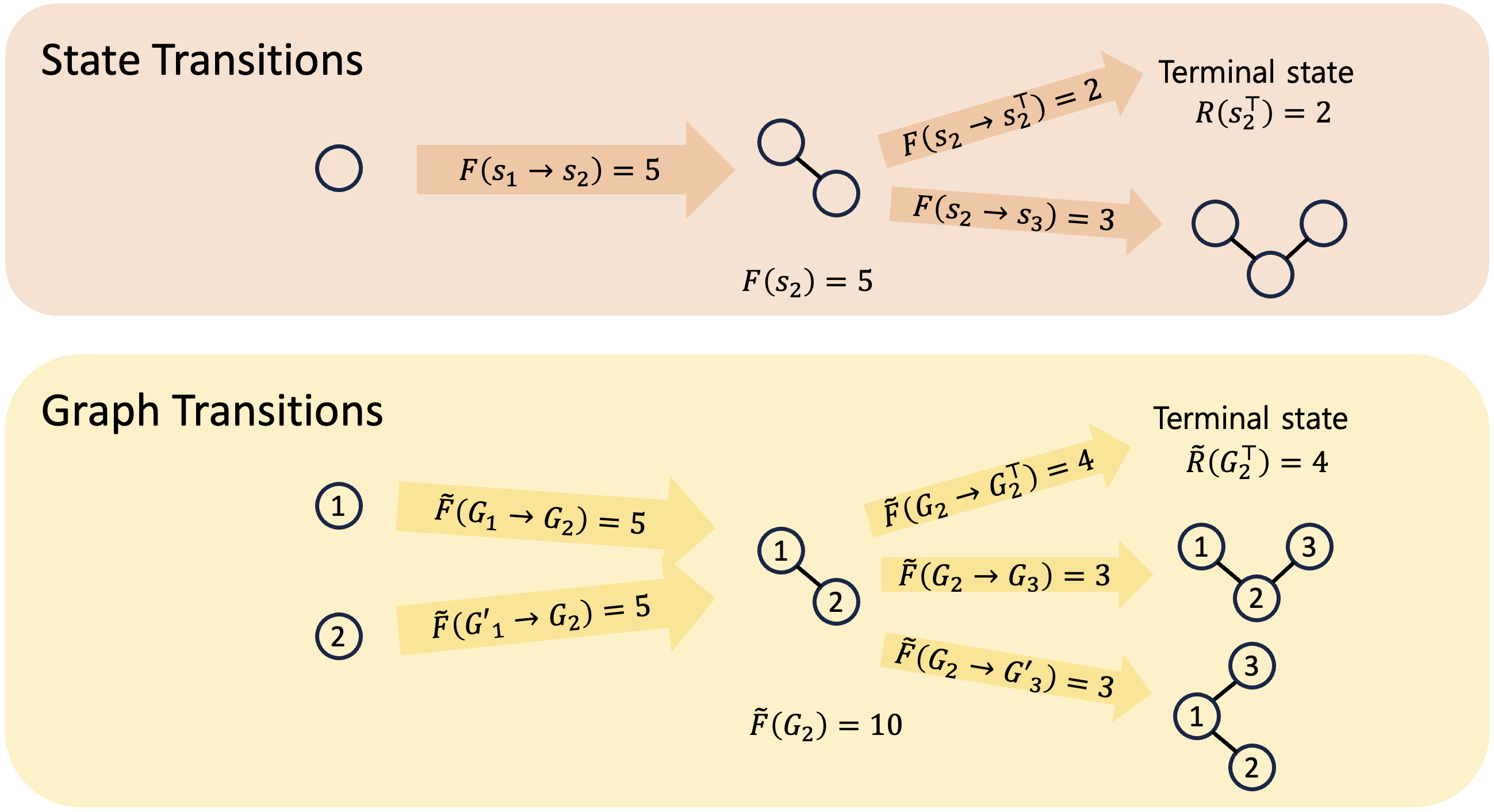}
    \caption{
    Illustration of the effect of reward adjustment. \textbf{Above}: State transitions from and to $s_2$. \textbf{Below}: Graph transitions from and to $G_2$. Due to the effect of the scaled reward $\Tilde{R}$, state flows are also scaled by $|\Aut(G)|$, leading to $\Tilde{F}(G) = F(s) |\Aut(G)|$. The edge flows remain unchanged in this figure. Note that the graph-level detailed balance condition holds, while the termination probability is proportional to $R(s)$.
    }
    \label{fig:graph-space}
\end{figure}

\begin{theorem}[Restatement of \thmref{thm:DB-correction}]
    If the rewards are scaled by $|\Aut(G)|$ and the graph-level detailed balance constraints are satisfied for $p_\gE$, $q_\gE$, and $\Tilde{F}$, then the corresponding forward policy will sample proportionally to the reward.
\end{theorem}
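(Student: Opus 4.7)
The plan is to reduce the graph-level detailed balance assumption to the standard state-level detailed balance condition, and then invoke the well-known GFlowNet result that state-level DB together with a reward-consistent terminal flow yields a forward policy that samples terminal states proportionally to the reward. Concretely, I would define a candidate state-level flow by $F(s) := \tilde{F}(G)/|\Aut(G)|$ for any $G \in s$. The first step is to verify that this is well-defined, i.e.\ that $\tilde F$ is constant on isomorphism classes; this follows because $p_\gE, q_\gE$ are parameterized by permutation-equivariant networks, so $\tilde F$ (which is derived from the same shared graph-level representation) must also be permutation-invariant, and $|\Aut(G)|$ is an isomorphism invariant.

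Next, I would combine the graph-level DB hypothesis $\tilde{F}(G)p_\gE(G'|G) = \tilde{F}(G')q_\gE(G|G')$ with \thmref{thm:Automorphism correction}, which gives
\begin{equation*}
\frac{p_{\bar\gA}(a|s)}{q_{\bar\gA}(a|s')} = \frac{|\Aut(G)|}{|\Aut(G')|}\cdot\frac{p_\gE(G'|G)}{q_\gE(G|G')} = \frac{|\Aut(G)|}{|\Aut(G')|}\cdot \frac{\tilde F(G)}{\tilde F(G')} = \frac{F(s)^{-1}}{F(s')^{-1}}.
\end{equation*}
Rearranging yields exactly the state-level DB constraint $F(s)p_{\bar\gA}(a|s) = F(s')q_{\bar\gA}(a|s')$, which, by \thmref{thm:orbit equivalence sufficiency}, is sufficient for the usual state-transition flow condition used in GFlowNet analysis.

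The third step is to fix the boundary condition at terminal states. By the usual DB convention $\tilde F(G_n) = \tilde R(G_n) = |\Aut(G_n)| R(G_n)$ for $G_n$ terminal, so that $F(s_n) = \tilde F(G_n)/|\Aut(G_n)| = R(G_n)$. Applying the standard GFlowNet consequence of DB---namely that the terminating probability under $p_\gA$ is proportional to $F$ at terminal states---we conclude $\bar p_\gA(x) \propto R(x)$, which is the claim.

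\textbf{Main obstacle.} The only delicate point is the translation between the graph- and state-level worlds: one must be careful that (i) $\tilde F$ descends to a well-defined function on $\gS$, (ii) the action ratio in \thmref{thm:Automorphism correction} matches the flow ratio coming from graph-level DB so that the $|\Aut|$ factors cancel cleanly, and (iii) the scaled reward $\tilde R(G) = |\Aut(G)| R(G)$ provides exactly the right boundary value so that, after dividing by $|\Aut(G_n)|$, the state flow equals $R$ rather than $\tilde R$. Once these bookkeeping steps are lined up, the remainder reduces to the standard DB-implies-reward-matching argument and requires no new machinery.
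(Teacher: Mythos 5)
Your proposal is correct, but it takes a genuinely different route from the paper's. The paper telescopes the graph-level detailed balance equalities along a complete trajectory $G_0,\dots,G_n$, uses the terminal boundary value $\tilde F(G_n)=|\Aut(G_n)|R(G_n)$, and thereby recovers exactly the corrected trajectory-balance identity of \corref{cor:TB-correction}; the conclusion then follows from Proposition~1 of \citet{malkin2022trajectory}. You instead stay at the per-transition level: you push the graph-level DB condition down to the quotient space $\gS$ via the explicit correspondence $F(s)=\tilde F(G)/|\Aut(G)|$, obtain the state-action DB constraint from \thmref{thm:Automorphism correction}, upgrade it to the state-transition constraint via \thmref{thm:orbit equivalence sufficiency}, and then invoke the standard DB soundness result. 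Your construction is in fact the converse direction of the existence lemma the paper proves immediately before the restatement (where $\tilde F(G)=|\Aut(G)|F(G)$ is introduced), so it meshes well with the paper's machinery; it also has the merit of making the well-definedness of the state-level flow explicit, a point the paper's telescoping argument silently bypasses, at the cost of needing the permutation-invariance of $\tilde F$ as an extra (but contextually assumed) hypothesis. One small algebraic slip: graph-level DB gives $p_\gE(G'|G)/q_\gE(G|G') = \tilde F(G')/\tilde F(G)$, not $\tilde F(G)/\tilde F(G')$ as written in the middle of your displayed chain; with that fixed, the chain lands consistently on $F(s')/F(s)$ and rearranges to the state-level DB constraint as you claim. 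This typo does not affect the validity of the argument.
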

\begin{proof}
    For a given complete trajectory $G_0, \dots, G_n$, we have:
    \begin{align*}
        \Tilde{F}(G_0)p_\gE(G_1|G_0) &= \Tilde{F}(G_1)q_\gE(G_0|G_1), \\
        &\cdots \\
        \Tilde{F}(G_{n-1})p_\gE(G_n|G_{n-1}) &= |\Aut(G_n)|R(G_n)q_\gE(G_{n-1}|G_n).
    \end{align*}
    Multiplying the left- and right-hand sides of all the equations, we get:
    \[
    \Tilde{F}(G_0)\prod_{t=0}^{n-1}p_\gE(G_{t+1}|G_t)
    = |\Aut(G_n)|R(G_n)\prod_{t=0}^{n-1}q_\gE(G_t|G_{t+1}).
    \]
    Defining $\Tilde{F}(G_0) = Z$, this reduces to the state-level trajectory balance condition with corrections as in \corref{cor:TB-correction}, which ensures $\bar p_\gA(x) \propto R(x)$, as shown by Proposition 1 of \citet{malkin2022trajectory}.
\end{proof}

\section{Discussion on the Flow-Matching Objective}
\label{app:Discussion on the Flow-Matching Objective}
The first GFlowNet training objective proposed by \citet{bengio2021flow} is the Flow-Matching (FM) objective, which requires the flow-matching condition to hold at every state $s'$:

\begin{align}
    \sum_{s:(s, s')\in\gA} F(s \rightarrow s') = 
    \sum_{s'':(s', s'')\in\gA} F(s' \rightarrow s'')
\end{align}

where $F(s \rightarrow s')$ denotes the edge flow. As with the DB and TB objectives, it suffices to scale the final rewards to remove the bias introduced by action equivalence.

\begin{corollary}[FM correction]
    Consider the node-by-node generation. Specifically, we restrict the set of graph actions to those defined in \appref{app:definitions of actions}, except for fragment-based actions. Define the graph-level flow-matching condition as:
    \[
        \sum_{G:(G, G')\in\gE} \tilde F(G\rightarrow G') 
        = \sum_{G'':(G',G'')\in\gE} \tilde F(G'\rightarrow G''),
    \]
    where $\tilde F(\cdot \rightarrow \cdot)$ denotes a permutation equivalent graph-level edge-flow function. If the rewards are given by $\tilde R(G) = |\Aut(G)|R(G)$ and the graph-level flow-matching condition holds for all states, then the forward policy induced by $\tilde F$ samples terminal states proportionally to the original reward $R$.
\end{corollary}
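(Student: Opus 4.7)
The plan is to reduce this corollary to the Detailed Balance correction (\thmref{thm:DB-correction}), whose proof is already in place. From the given edge-flow function I will derive a graph-level node flow together with forward and backward policies, verify that graph-level detailed balance is automatic for them, and then invoke \thmref{thm:DB-correction}.

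Concretely, I would define the node flow by aggregating outgoing edge flows,
\[
\tilde F(G) := \sum_{G''\in\gE(G)} \tilde F(G\to G''),
\]
and, following the standard Flow-Matching formulation with an added sink, identify this outflow with $\tilde R(G)=|\Aut(G)|R(G)$ whenever $[G]\in\gX$. The hypothesized graph-level FM equation states that the outflow equals the inflow at every non-terminal $G$, so $\tilde F$ is consistently defined. I then set
\[
p_\gE(G'|G) := \frac{\tilde F(G\to G')}{\tilde F(G)},\qquad q_\gE(G|G') := \frac{\tilde F(G\to G')}{\tilde F(G')}.
\]
These are valid conditional distributions, and they inherit permutation-equivariance from the assumed equivariance of $\tilde F(\cdot\to\cdot)$ because the normalizing denominators are invariant under automorphisms of the conditioning graph. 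One line of algebra then gives
\[
\tilde F(G)\,p_\gE(G'|G) \;=\; \tilde F(G\to G') \;=\; \tilde F(G')\,q_\gE(G|G'),
\]
which is precisely the graph-level detailed balance condition of \thmref{thm:DB-correction}. Since the terminal node flows equal $|\Aut(G_n)|R(G_n)$ by construction, applying \thmref{thm:DB-correction} yields $\bar p_\gA(x)\propto R(x)$.

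The main obstacle I expect is the terminal bookkeeping: the FM condition is implicitly stated relative to a sink, and the identification $\tilde F(G)=\tilde R(G)$ at terminal $G$ must be made rigorous so that the last edge of every complete trajectory carries the same $|\Aut(G_n)|R(G_n)$ factor that drives the telescoping argument inside the proof of \thmref{thm:DB-correction}. A secondary point is that the restriction to node-by-node generation in the statement is not needed for the reduction itself; it is inherited from \thmref{thm:DB-correction}, which in turn rests on \thmref{thm:Automorphism correction} being verified only for the action set listed in \appref{app:definitions of actions}.
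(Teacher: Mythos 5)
Your proposal is correct and follows essentially the same route as the paper: the paper likewise defines $\tilde F(G)=\sum_{G'\in\gE(G)}\tilde F(G\rightarrow G')$, sets $p_\gE(G'|G)=\tilde F(G\rightarrow G')/\tilde F(G)$ and $q_\gE(G|G')=\tilde F(G\rightarrow G')/\tilde F(G')$, observes that graph-level detailed balance then holds, and invokes \thmref{thm:DB-correction}. Your additional remarks on the terminal sink identification and on $q_\gE$ being a valid conditional (which uses the FM condition itself) are sound refinements of the same argument rather than a different approach.
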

\begin{proof}
    The detailed balance condition is satisfied whenever the flow-matching condition holds, via the relations:
    \begin{equation*}
    \begin{aligned}
    \tilde F(G) &= \sum_{G'\in\gE(G)}\tilde{F}(G\rightarrow G'), \\
    p_\gE(G'|G) &= \frac{\tilde{F}(G\rightarrow G')}{\tilde F(G)}, \\
    q_\gE(G|G') &= \frac{\tilde{F}(G \rightarrow G')}{\tilde F(G')}.
    \end{aligned}
    \end{equation*}
    Applying \thmref{thm:DB-correction}, the result follows.
\end{proof}

As with the DB objective, FM can be corrected at each state without scaling the reward. Consider a neural network that parameterizes the edge flow function and outputs all possible outgoing edge flows $\tilde{F}(G \rightarrow G')$ from a given graph $G$. Outflows, defined as $\sum_{G'':(G',G'')\in\gE} \tilde F(G'\rightarrow G'')$, can be computed easily by summing the outputs of the neural network. In contrast, computing the inflows for a given graph $G'$ involves two considerations: (1) enumerating parents of $G'$, which may include isomorphic duplicates. To account for this, the total inflow should be divided by the number of duplicates; (2) computing edge flows $\tilde F(G \rightarrow G')$ for each parent graph $G$, which may involve equivalence actions. To account for this, the total inflow should be multiplied by the number of equivalent actions. These considerations lead to the following result.

\begin{theorem}
    Let $G'$ be the given graph representing the current state. Assuming node-by-node generation, the state-level flow-matching condition can be expressed in terms of the graph-level flow-matching condition as follows:
    \[
        \sum_{G:(G, G')\in\gE} |\Aut(G)|\tilde F(G\rightarrow G') = |\Aut(G')|\sum_{G'':(G',G'')\in\gE(G)}  \tilde F(G'\rightarrow G''),
    \]
    where $\tilde F(\cdot \rightarrow \cdot)$ denotes a permutation-equivalent edge-flow function.
\end{theorem}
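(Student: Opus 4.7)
The plan is to rewrite the state-level flow-matching condition at $s'$ entirely in graph-level terms by exploiting permutation-equivariance of $\tilde F$ and $p_\gE$, so that orbit counts on the parent side become factors of $|\Aut|$. I would set $F(s):=\tilde F(G)$ for any representative $G\in s$ (well-defined because $\tilde F$ is constant on isomorphism classes), define $F(s\to s')=F(s)\,p_\gA(s'|s)$ with $p_\gA(s'|s)=\sum_{G''\in\gE(G)\cap s'}p_\gE(G''|G)$, and then express both sides of the identity $\sum_s F(s\to s')=\sum_{s''}F(s'\to s'')$ in terms of the graph-level edge flows $\tilde F(\cdot\to\cdot)$.

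For the outflow, fix $G'\in s'$ and use $\tilde F(G'\to G'')=\tilde F(G')\,p_\gE(G''|G')$ together with $\sum_{G''}p_\gE(G''|G')=1$ to obtain $\sum_{s''}F(s'\to s'')=\tilde F(G')=\sum_{G'':(G',G'')\in\gE}\tilde F(G'\to G'')$. Multiplying by $|\Aut(G')|$ already matches the RHS of the claimed identity, so the remaining work lies on the inflow side.

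For the inflow, for each parent state $s$ pick a representative $G^{(s)}\in s$ with $(G^{(s)},G')\in\gE$. Equivariance forces $p_\gE(G''|G^{(s)})$ to be constant as $G''$ ranges over the $\Aut(G^{(s)})$-orbit of $G'$, so in the generic setting where this orbit exhausts $\gE(G^{(s)})\cap s'$ one obtains $F(s\to s')=|\Orb(G^{(s)},e)|\cdot\tilde F(G^{(s)}\to G')$, where $e$ is the forward action class. The generalized version of \lemref{lem:orb-aut} then rewrites $|\Orb(G^{(s)},e)|=m_s\,|\Aut(G^{(s)})|/|\Aut(G')|$, where $m_s=|\Orb(G',e)|$ is the number of graph-level parents of $G'$ that lie in $s$. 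A key subsidiary claim is that $\tilde F(G\to G')$ is \emph{constant} across $G\in s\cap\{G:(G,G')\in\gE\}$: this follows because $\Aut(G')$ acts transitively on this set, since any two such parents correspond to orbit-equivalent backward removals at $G'$, so some $\sigma\in\Aut(G')$ carries $G_1$ to $G_2$ while fixing $G'$, and equivariance yields $\tilde F(G_1\to G')=\tilde F(\sigma G_1\to\sigma G')=\tilde F(G_2\to G')$. Using this together with $|\Aut(G^{(s)})|=|\Aut(G)|$ for $G\in s$, summing over parent states collapses the double sum to
\[
\sum_s F(s\to s')=\frac{1}{|\Aut(G')|}\sum_{G:(G,G')\in\gE}|\Aut(G)|\,\tilde F(G\to G'),
\]
and equating with the outflow expression and multiplying by $|\Aut(G')|$ is exactly the theorem.

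The main obstacle will be the $\Aut(G')$-transitivity argument on isomorphic parents of $G'$: this is precisely where the orbit-equivalence versus transition-equivalence gap (\appref{app:Example of Equivalent Actions}) matters, and in the pathological case one must partition $\gE(G^{(s)})\cap s'$ into several $\Aut(G^{(s)})$-orbits and $s\cap\{G:(G,G')\in\gE\}$ into the matching $\Aut(G')$-orbits, apply the lemma orbit by orbit, and sum. The bookkeeping is more involved in that regime but the resulting identity is unchanged.
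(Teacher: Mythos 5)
Your proposal is correct and follows essentially the same route as the paper's proof: the outflow side is computed directly, and the inflow side is converted from state-level to graph-level flows by scaling each term by the ratio of forward to backward orbit-equivalent action counts, which \lemref{lem:generalized-orb-aut} identifies with $|\Aut(G)|/|\Aut(G')|$. Your version simply makes explicit two points the paper leaves implicit---the $\Aut(G')$-transitivity argument showing $\tilde F(G\to G')$ is constant over isomorphic parents within one backward orbit, and the orbit-by-orbit bookkeeping needed when transition equivalence is strictly coarser than orbit equivalence---both of which are sound.
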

\begin{proof}
    The inflows are computed as follows:
    \begin{align*}
    \sum_{s:(s, s')\in\gA} F(s\rightarrow s')
    &=\sum_{G:(G, G')\in\gE} 
    \frac{\text{Number of forward orbit-equivalent actions from $G$ to $G'$}}{\text{Number of backward orbit-equivalent actions from $G'$ to $G$}}\tilde F(G\rightarrow G') \\
    &=\sum_{G:(G, G')\in\gE} \frac{|\Aut(G)|}{|\Aut(G')|}\tilde F(G\rightarrow G'),
    \end{align*}
    where the last equality holds in virtue of \lemref{lem:generalized-orb-aut}. The first equality follows from the two considerations mentioned: (1) dividing by the number of backward orbit-equivalent actions to account for duplicate parents; (2) multiplying inflows by the number of forward orbit-equivalent actions to account for duplicate actions.
    On the other hand, the outflows are directly computed as:
    \[
        \sum_{s'':(s', s'')\in\gA} F(s' \rightarrow s'') = \sum_{G'':(G',G'')\in\gE} \tilde F(G'\rightarrow G'').
    \]
    Rearranging terms, we have the result.
\end{proof}

\section{Discussion on the Fragment-based Generation}
\label{app:fragment-based}

\begin{figure}[ht]
\centering
    \begin{tikzpicture}[
    roundnode/.style={circle, draw, minimum size=3mm, inner sep=0pt}
]


\node[roundnode, fill=red!30] (g1n1) at (0, 0.5) {};
\node[font=\small] at (0, 1.6) {$|\Aut(G)| = 1$};
\node at (0, -0.8) {$G$};
\node (g1r1) at (0.6, -0.9) {};
\node (g1r2) at (0.6, -0.7) {};

\begin{scope}[xshift=4cm]

\node[roundnode, fill=red!30] (g2n1) at (-1, 0.5) {};

\node[roundnode, fill=yellow!30] (g2n1) at (1, 0) {};
\node[roundnode, fill=yellow!30] (g2n2) at (1, 1) {};
\node[roundnode, fill=yellow!30] (g2n3) at (1-0.866, 0.5) {};
\node[font=\small] at (0, 1.6) {$|\Aut(G')| = 6$};
\node at (0, -0.8) {$G'$};
\node (g2l1) at (-0.6, -0.9) {};
\node (g2l2) at (-0.6, -0.7) {};

\draw (g2n1) -- (g2n2);
\draw (g2n2) -- (g2n3);
\draw (g2n3) -- (g2n1);

\end{scope}

\draw[{Latex[length=2mm]}-, thick] (g1r1) -- (g2l1) node[midway, below, font=\tiny] 
    {$|\Orb(G, C)|=1$};
\draw[-{Latex[length=2mm]}, thick] (g1r2) -- (g2l2) node[midway, above, font=\tiny] 
    {$|\Orb(G', V)|=1$};
\end{tikzpicture}
    \caption{Transition representing the \texttt{AddFragment} action.}
    \label{fig:fragment-example}
\end{figure}
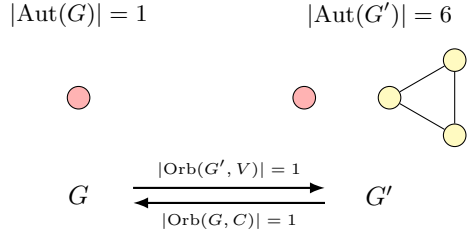

In the case the action \texttt{AddFragment(G, C)}, which adds a fragment $C$ to the existing graph $G$ resulting in $G' = G \cup C$, we must account for the additional symmetries introduced by the fragment. In \figref{fig:fragment-example}, there is only one way of adding and deleting the fragment from $G$ to $G'$, while the fragment induces 6 symmetries in $G'$.

We begin by defining the notion of orbits for actions \texttt{AddFragment(G, C)} and \texttt{RemoveFragment(G', C)}. The orbit of \texttt{AddFragment(G, C)} is defined as \(\Orb(G, V)\), whose cardinality is 1. This is because each fragment $C$ in the vocabulary leads to a unique next graph when added to $G$, so the number of forward transition-equivalent actions is 1. For the corresponding backward action \texttt{RemoveFragment(G', C)}, however, there may be multiple subgraphs of $G'$ that are isomorphic to $C$. Hence, we define the orbit as \texttt{RemoveFragment(G', C)} is $\Orb(G', C) = \{C' : \exists \pi \in \Aut(G'), \pi(C) = C'\}$. This set captures all subgraphs of $G'$ that are automorphic images of $C$, i.e., all valid candidates for removal that are equivalent under the symmetry of $G'$.

Next, we extend \lemref{lem:orb-aut} to accommodate the fragment-level actions, which account for the symmetries of both the existing graph and the fragment.

\begin{lemma}
\label{app:orb-aut-frag}
Let $G = (V_G, E_G)$ be a graph representing the current state. We consider augmenting the graph $G$ by adding a fragment $C = (V_C, E_C)$. Let $G \cup C = (V_G \cup V_C, E_G \cup E_C)$ denote the union of the two graphs (without any edges connecting $G$ and $C$). Then, we have:
$$
\frac{\text{Number of forward orbit-equivalent actions}}{\text{Number of backward orbit-equivalent actions}}
=\frac{|\Orb(G, V)|}{|\Orb(G \cup C, C)|}
= \frac{|\Aut(G)| \cdot |\Aut(C)|}{|\Aut(G \cup C)|}.
$$
\end{lemma}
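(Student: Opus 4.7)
The plan is to reduce the claim to the orbit-stabilizer theorem applied to the action of $\Aut(G\cup C)$ on the subgraph $C$. First, note that $|\Orb(G, V)|=1$, since every permutation of $V$ maps the entire vertex set to itself; the forward count of orbit-equivalent actions is thus trivially one, matching the intuition that, up to relabeling, there is only one way to place a given fragment $C$ alongside $G$. The claim then reduces to showing
\[
|\Aut(G\cup C)| \;=\; |\Orb(G\cup C, C)|\cdot|\Aut(G)|\cdot|\Aut(C)|.
\]

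By the orbit-stabilizer theorem applied to the orbit of the subgraph $C$ in the group $\Aut(G\cup C)$, I have $|\Aut(G\cup C)| = |\Orb(G\cup C, C)|\cdot|\Stab(G\cup C, C)|$, where the stabilizer consists of automorphisms of $G\cup C$ that send the subgraph $C$ to itself setwise. The core step is therefore to establish the identification
\[
\Stab(G\cup C, C) \;\cong\; \Aut(G)\times\Aut(C).
\]

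To prove this, I would construct the bijection explicitly. In the forward direction, given $\pi\in\Stab(G\cup C, C)$, the condition $\pi(V_C)=V_C$ forces $\pi(V_G)=V_G$, since $V_G$ and $V_C$ partition the vertex set of $G\cup C$. The restriction $\pi|_{V_C}$ preserves $E_C$ and is thus an element of $\Aut(C)$; the restriction $\pi|_{V_G}$ preserves $E_G$ (using that no edge of $G\cup C$ crosses between $V_G$ and $V_C$) and thus lies in $\Aut(G)$. In the reverse direction, given $(\sigma,\tau)\in\Aut(G)\times\Aut(C)$, define $\pi$ by $\pi|_{V_G}=\sigma$ and $\pi|_{V_C}=\tau$; the absence of cross-edges ensures $\pi$ preserves $E_G\cup E_C$, so $\pi\in\Stab(G\cup C, C)$. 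These two maps are mutually inverse, yielding the desired group isomorphism and hence $|\Stab(G\cup C, C)|=|\Aut(G)|\cdot|\Aut(C)|$. Combining this with the orbit-stabilizer identity and $|\Orb(G,V)|=1$ gives the lemma.

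The main subtlety is verifying that the decomposition $\Stab(G\cup C, C) \cong \Aut(G)\times\Aut(C)$ really holds as an equality, not merely an injection from the product into the stabilizer. This hinges crucially on the hypothesis that $G\cup C$ contains no edges between $V_G$ and $V_C$: without disjointness, an automorphism fixing $V_C$ setwise could impose coupled constraints across the two vertex blocks, and the component restrictions need not be independent automorphisms of $G$ and $C$. Fortunately, this disjointness is exactly how \texttt{AddFragment} is defined in the paper's generation scheme, where attachment edges are formed by subsequent \texttt{AddEdge} actions rather than by the fragment-addition step itself, so the lemma applies directly in the intended setting.
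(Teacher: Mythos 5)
Your proof is correct and follows essentially the same route as the paper's: both reduce the claim to the orbit-stabilizer theorem for the action of $\Aut(G\cup C)$ on $C$ and then identify $\Stab(G\cup C, C)$ with $\Aut(G)\times\Aut(C)$. You actually supply more detail than the paper at the key step, proving the stabilizer decomposition via an explicit restriction/gluing bijection where the paper merely asserts that the stabilizer ``acts independently on $G$ and $C$.''
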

\begin{proof}
    Since $|\Orb(G, V)|=1$, we only need to consider $|\Orb(G \cup C, C)|$. The stabilizer $\Stab(G \cup C, C)$ is the set of automorphisms in $\Aut(G \cup C)$ that does not mix the labels of $G$ and $C$; it acts independently on $G$ and $C$. Therefore, the order of $\Stab(G \cup C, C)$ is $|\Aut(G)| \cdot |\Aut(C)|$. Using the orbit-stabilizer theorem, we obtain:
    \begin{equation*}
    \begin{aligned}
        |\Orb(G\cup C, C)|
        &= \frac{|\Aut(G\cup C)|}{|\Stab(G\cup C, C)|} \\
        &= \frac{|\Aut(G\cup C)|}{|\Aut(G)|\cdot|\Aut(C)|}.
    \end{aligned}
    \end{equation*}
\end{proof}

Using \lemref{app:orb-aut-frag}, we obtain fragment correction formula in \thmref{thm:Fragment correction}. 

\begin{theorem}[Fragment correction]
\label{app:frag-correction}
Let $G$ represents a terminal state ($[G]\in\gX$) generated by connecting k fragments $\{C_1, \dots, C_k\}$. Then, the scaled rewards to offset the effects of equivalent actions are given by:
$$
\tilde R(G) = \frac{|\Aut(G)|R(G)}{\prod_{i=1}^k|\Aut(C_i)|}.
$$
\end{theorem}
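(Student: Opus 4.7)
The plan is to adapt the trajectory-balance argument used in Corollary \ref{cor:TB-correction} to the fragment-based setting, where a complete trajectory $\tau = (G_0, G_1, \dots, G_n = G)$ now consists of two action types: \texttt{AddFragment} and \texttt{AddEdge}. Starting from the empty graph $G_0$ so that $|\Aut(G_0)| = 1$, I would apply Theorem \ref{thm:Automorphism correction} transition-by-transition to form a product $\prod_t p_{\bar\gA}(a_t|s_t)/q_{\bar\gA}(a_t|s_{t+1})$. For each \texttt{AddEdge} transition, Lemma \ref{lem:orb-aut} gives ratio $|\Aut(G_t)|/|\Aut(G_{t+1})|$ multiplied by the graph-level policy ratio. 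For each \texttt{AddFragment} transition introducing a fragment $C_i$, Lemma \ref{app:orb-aut-frag} gives the refined ratio $|\Aut(G_t)|\cdot|\Aut(C_i)|/|\Aut(G_{t+1})|$ multiplied by the graph-level policy ratio, where the extra $|\Aut(C_i)|$ factor arises because the fragment introduces its own symmetries into the joint graph $G_t \cup C_i$.

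Next, I would collect these factors over the full trajectory. The $|\Aut(G_t)|$ terms in the numerator and $|\Aut(G_{t+1})|$ in the denominator telescope, leaving only $1/|\Aut(G_n)| = 1/|\Aut(G)|$, while each of the $k$ fragment-adding steps contributes an untelescoped factor of $|\Aut(C_i)|$. Hence
\begin{equation*}
\prod_{t=0}^{n-1}\frac{p_{\bar\gA}(a_t|s_t)}{q_{\bar\gA}(a_t|s_{t+1})}
= \frac{\prod_{i=1}^k|\Aut(C_i)|}{|\Aut(G)|}\cdot\prod_{t=0}^{n-1}\frac{p_\gE(G_{t+1}|G_t)}{q_\gE(G_t|G_{t+1})}.
\end{equation*}
Substituting into the state-action trajectory-balance identity $Z\prod_t p_{\bar\gA}(a_t|s_t)= R(G)\prod_t q_{\bar\gA}(a_t|s_{t+1})$, which by Theorem \ref{thm:orbit equivalence sufficiency} is sufficient to enforce $\bar p_\gA([G]) \propto R(G)$, I would rearrange to obtain the graph-level identity
\begin{equation*}
Z\prod_{t=0}^{n-1}p_\gE(G_{t+1}|G_t) = \tilde R(G)\prod_{t=0}^{n-1}q_\gE(G_t|G_{t+1}),\quad \tilde R(G) = \frac{|\Aut(G)|R(G)}{\prod_{i=1}^k|\Aut(C_i)|}.
\end{equation*}
This is precisely the standard TB condition on the graph-level policies with reward $\tilde R$, proving the claim. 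An essentially parallel argument via graph-level detailed balance (as in Theorem \ref{thm:DB-correction}) gives the same correction for the DB objective.

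The main subtlety I anticipate is the \texttt{AddFragment} ratio itself, not the telescoping. The asymmetry $|\Orb(G,V)| = 1$ versus $|\Orb(G\cup C, C)| = |\Aut(G\cup C)|/(|\Aut(G)|\,|\Aut(C)|)$ requires showing that the stabilizer $\Stab(G\cup C, C)$ factors as $\Aut(G)\times\Aut(C)$, which relies on the fact that at the moment of fragment insertion there are no edges bridging $G$ and $C$, so any automorphism setwise fixing $C$ acts independently on each component. This is exactly what Lemma \ref{app:orb-aut-frag} establishes via the orbit-stabilizer theorem, and once that is in hand, the rest is just bookkeeping through the telescoping sum.
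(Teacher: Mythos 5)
Your proposal is correct and follows essentially the same route as the paper's own proof: it applies Lemma~\ref{app:orb-aut-frag} to get the $|\Aut(G)|\cdot|\Aut(C_i)|/|\Aut(G')|$ ratio for \texttt{AddFragment}, Lemma~\ref{lem:orb-aut} for the \texttt{AddEdge} steps, and then telescopes through the trajectory-balance identity exactly as in Corollary~\ref{cor:TB-correction}, with the $|\Aut(C_i)|$ factors surviving as the denominator of $\tilde R$. You also correctly isolate the one genuinely nontrivial ingredient, namely that $\Stab(G\cup C, C)\cong\Aut(G)\times\Aut(C)$ because no edges bridge $G$ and $C$ at insertion time, which is precisely what the paper's Lemma~\ref{app:orb-aut-frag} supplies.
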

\begin{proof}
    Using \lemref{app:orb-aut-frag}, we first derive similar results for \thmref{thm:Automorphism correction}. If $p_\gE$ and $q_\gE$ are permutation-equivariant functions, we have
    \[
        \frac{p_{\bar\gA}(a|s)}{q_{\bar\gA}(a|s')} = 
            \frac{|\Aut(G)|\cdot|\Aut(C)|}{|\Aut(G')|} \cdot \frac{p_\gE(G'|G)}{q_\gE(G|G')}.
    \]
    We defined \texttt{AddFragment($G, C$)} as adding a fragment $C$, resulting in disconnected graph $G\cup C$. To connect between fragments, we use \texttt{AddEdge} and \lemref{lem:orb-aut:AddEdge}. Then, the results follows from the TB objective written in terms of graph transitions $p_\gE$ and $q_\gE$, and the telescoping sum, as in \corref{cor:TB-correction}.
\end{proof}

Unlike atom-based generation, the fragment terms $|\Aut(C)|$ do not cancel out through a telescoping sum. Therefore, these terms must be explicitly accounted for in the correction, both for reward scaling and estimating the model likelihood.

\begin{figure}[t]
\centering
    \begin{tikzpicture}[
    roundnode/.style={circle, draw, minimum size=3mm, inner sep=0pt}
]
\def\radius{0.6}
\def\highlightsize{1.5}

\draw[rotate=30] 
    (90:\radius) -- (30:\radius) -- (-30:\radius) -- (-90:\radius) -- (-150:\radius) -- (150:\radius) -- cycle;
\fill[orange!50, rotate=30] (90:\radius) circle(\highlightsize mm);
\fill[orange!50, rotate=30] (150:\radius) circle(\highlightsize mm);
\fill[orange!50, rotate=30] (-30:\radius) circle(\highlightsize mm);

\node (leftanchor) at (\radius+0.5, 0) {};

\begin{scope}[xshift=3.6cm, yshift=1.5cm]

\draw[rotate=30] 
    (90:\radius) -- (30:\radius) -- (-30:\radius) -- (-90:\radius) -- (-150:\radius) -- (150:\radius) -- cycle;
\draw[rotate=30] (-30:\radius) -- (-30:2*\radius);
\fill[orange!50, rotate=30] (-30:\radius) circle(\highlightsize mm);
\fill[orange!50, rotate=30] (30:\radius) circle(\highlightsize mm);
\fill[orange!50, rotate=30] (-150:\radius) circle(\highlightsize mm);

\node (middleanchor1) at (2*\radius+0.2, 0) {};
\end{scope}

\begin{scope}[xshift=3.6cm]

\draw[rotate=30] 
    (90:\radius) -- (30:\radius) -- (-30:\radius) -- (-90:\radius) -- (-150:\radius) -- (150:\radius) -- cycle;
\draw[rotate=30] (-30:\radius) -- (-30:2*\radius);
\fill[orange!50, rotate=30] (90:\radius) circle(\highlightsize mm);
\fill[orange!50, rotate=30] (150:\radius) circle(\highlightsize mm);
\fill[orange!50, rotate=30] (-30:\radius) circle(\highlightsize mm);
\node (middleanchor2left) at (-\radius-0.5, 0) {};
\node (middleanchor2) at (2*\radius+0.2, 0) {};
\end{scope}

\begin{scope}[xshift=3.6cm, yshift=-1.5cm]

\draw[rotate=30] 
    (90:\radius) -- (30:\radius) -- (-30:\radius) -- (-90:\radius) -- (-150:\radius) -- (150:\radius) -- cycle;
\draw[rotate=30] (-30:\radius) -- (-30:2*\radius);
\fill[orange!50, rotate=30] (-90:\radius) circle(\highlightsize mm);
\fill[orange!50, rotate=30] (-30:\radius) circle(\highlightsize mm);
\fill[orange!50, rotate=30] (-210:\radius) circle(\highlightsize mm);
\node (middleanchor3) at (2*\radius+0.2, 0) {};
\end{scope}

\begin{scope}[xshift=8cm]
\draw[rotate=30] 
    (90:\radius) -- (30:\radius) -- (-30:\radius) -- (-90:\radius) -- (-150:\radius) -- (150:\radius) -- cycle;
\draw[rotate=30] (-30:\radius) -- (-30:2*\radius);
\node (rightanchorleft) at (-\radius-0.5, 0) {};
\end{scope}

\draw[-{Latex[length=2mm]}, thick] (leftanchor) -- (middleanchor2left);
\draw[-{Latex[length=2mm]}, thick] (middleanchor1) -- (rightanchorleft);
\draw[-{Latex[length=2mm]}, thick] (middleanchor2) -- (rightanchorleft);
\draw[-{Latex[length=2mm]}, thick] (middleanchor3) -- (rightanchorleft);

\end{tikzpicture}
    \caption{A fragment with attachment points highlighted. Attachment points are designed such that they break symmetries of the fragment. Rightmost graph represent the terminal state where attachment points are removed.}
    \label{fig:fragment-failure}
\end{figure}
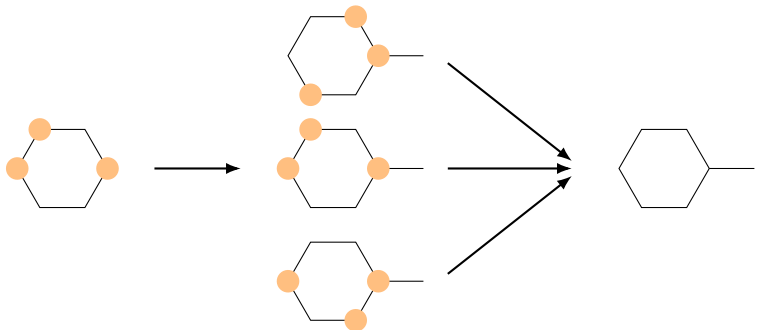

We need to exercise caution when applying \lemref{app:frag-correction}, as the validity of the result depends on the specifics of the action design. A common practice in the GFlowNet literature is to predefine a set of attachment points for each fragment, where attachment points refer to nodes where new edges can connect to other fragments. These attachment points should be treated as node attributes, even if they are artifacts of the generation process rather than intrinsic properties of the graph. The reason for this treatment is that attachment points constrain the set of allowable actions, including the set of equivalent actions. For example, even if two nodes $u$ and $v$ belong to the same orbit, they should be considered distinct if only one of them is marked as an attachment point.

The complication arises when terminal states are considered. Since attachment points are not intrinsic properties of the graph, conceptually, a graph receives its reward after the attachment points are removed as shown in \figref{fig:fragment-failure}. However, this removal introduces three distinct backward actions from the terminal state in the figure, which complicates the calculation of the backward probabilities. 

This issue does not arise, however, if we arrange attachment points in a fragment such that nodes in different orbits (i.e., orbits that consider attachment points as node attributes) remain different even after the attachment points not present. We observe that this holds for the fragments used in \citet{bengio2021flow}.

\section{Relation to Node Orderings}
\label{appendix:node-ordering}

Some previous work on graph generation uses a distribution over permutations (or node orderings) $\pi$, treating it as a random variable \citep{chen2021order, kong2023autoregressive,wang2025learning}. If the node (or edge) ordering $\pi$ is given, the generation path $s_{0:n}=(s_0, \dots, s_n)$ can be determined. However, the converse is not generally true: for each state sequence $s_{0:n}$, there can be multiple compatible orderings $\pi$, complicating the computation of variational lower bound. 

\citet{chen2021order} derived an exact formula for the number of node orderings consistent with a given trajectory $s_{0:n}$: it is given by the product of the sizes of the orbits encountered during the generative process, $\prod_{t=0}^n|\Orb(s_t,a_t)|$. To make this computation tractable, they approximate the orbit sizes using the color refinement algorithm. This estimate is then used to compute the joint probability $p(s_{0:n}, \pi)$.

However, the joint probability can be computed easily if we consider the corresponding graph sequence $G_{0:n}$, as $p(s_{0:n}, \pi) = \prod _{t=0}^{n-1}p_\gE(G_{t+1}|G_t)$. This holds because the number of different orderings that induce the same state sequence $s_{0:n}$ corresponds to the number of distinct paths generated by following transition-equivalent actions. In this view, different actions that are transition-equivalent can be interpreted as actions that induce different orderings while resulting in the same sequence of graph states. See \citet{chen2021order} for more details on using node orderings as a random variable.

\section{Computational Cost}
\label{app:Computational Cost}

\subsection{Computation Time for Counting Automorphisms}
While computing the exact \( |\text{Aut}(G)| \) has inherent complexity, this complexity is unavoidable for exact GFlowNets. In practice, fast heuristic algorithms computing \( |\text{Aut}(G)| \) often perform well, particularly for relatively small graphs. We provide computation time of \( |\text{Aut}(G)| \) for several molecular dataset.

\begin{table}[h]
\caption{Computational cost. ``Large'' dataset refers to the largest molecules in PubChem, which is used in the paper \citet{flam2022language}. Experiments were conducted on an Apple M1 processor.}
\label{tab:compute-time}
\begin{center}
\begin{tabular}{lllll}
\hline
\bf Dataset  &\bf Sample Size &\bf Num Atoms &\bf Compute time (\textit{bliss}) &\bf Compute time (\textit{nauty}) \\
\hline
QM9 & 133,885 & 8.8 ± 0.5 & 0.010 ms ± 0.008 & 0.019 ms ± 0.079\\
ZINC250k & 249,455 & 23.2 ± 4.5 & 0.022 ms ± 0.010 & 0.042 ms ± 0.032 \\
CEP & 29,978 & 27.7 ± 3.4 & 0.025 ms ± 0.014 & 0.050 ms ± 0.076\\
Large & 304,414 & 140.1 ± 49.4 & - & 0.483 ms ± 12.600 \\
\hline
\end{tabular}
\end{center}
\end{table}

Compared to sampling trajectories, which involves multiple forward passes through a neural network, the compute time for \( |\Aut(G)| \) is negligible. For comparison, we report the speed of molecular parsing algorithms measured using ZINC250k dataset: 0.06 ms ± 0.70 (SMILES → molecule) and 0.04 ms ± 0.05 (molecule → SMILES). The combination of two parsing steps is often used to check the validity of a given molecule in various prior works. In words, computing \( |\Aut(G)| \) is in an order of magnitude faster than validity checking algorithm.

We used the \textit{bliss} algorithm for our experiment. It is easy to use as it is included in the \texttt{igraph} package and is fast enough for our purposes. For large molecules, we can still count automorphisms in few milliseconds using the \textit{nauty} package \citep{mckay2013nauty} as can be seen in the table. We observed that the \texttt{pynauty} package does not natively support distinguishing between different edge types, requiring us to transform the input graphs by attaching virtual nodes to handle this. The reported time in the table reflects these preprocessing steps.

While we believe the computation time is already negligible for current applications, we outline two additional strategies to further reduce runtime:
\begin{enumerate}
    \item \textbf{Parallelization.} Data processing tasks can be parallelized across multiple CPUs. Since GFlowNet is an off-policy algorithm, \( |\Aut(G)| \) can be computed concurrently with the policy learning.
    \item \textbf{Approximate correction for large graphs.} For large graphs, fragment-based generation is highly likely to be employed. In such cases, we can apply an approximate correction scheme, as outlined in \appref{app:fragment-based}.
\end{enumerate}

\subsection{Training Time Comparison}

To evaluate the training time, we ran three separate training sessions for each method. \tabref{tab:wallclock-runtime} reports the total training time for each method in the synthetic environment.

\begin{table}[h]
\centering
\begin{tabular}{lccc}
\toprule
\textbf{Method} & \textbf{1000 Steps (s)} & \textbf{3000 Steps (s)} & \textbf{5000 Steps (s)} \\
\midrule
Transition Correction   & 1338 ± 79 & 4122 ± 168 & 6859 ± 80 \\
PE (Ma et al., 2024)    & 1322 ± 44 & 4001 ± 131 & 6604 ± 152 \\
Reward Scaling (Ours)   & 1178 ± 31 & 3584 ± 97  & 5999 ± 146 \\
Flow Scaling (Ours)     & 1176 ± 32 & 3577 ± 100 & 5987 ± 168 \\
\bottomrule
\end{tabular}
\caption{Wall-clock runtime (in seconds) for different methods over increasing training steps. Mean and standard deviation are computed over 3 runs.}
\label{tab:wallclock-runtime}
\end{table}

While PE is faster than Transition Correction---which performs multiple isomorphism tests per transition---our proposed methods, \emph{Reward Scaling} and \emph{Flow Scaling}, are even more efficient. All timing experiments were conducted using a single processor with a TITAN RTX GPU (24GB) and an Intel Xeon Silver 4216 CPU. These results demonstrate that our methods incur significantly less computational overhead while preserving correctness.

\subsection{Scalability Comparison}

\begin{wrapfigure}{tr}{0.45\textwidth}
    \vspace{-30pt}
    \centering
    \includegraphics[width=0.45\textwidth]{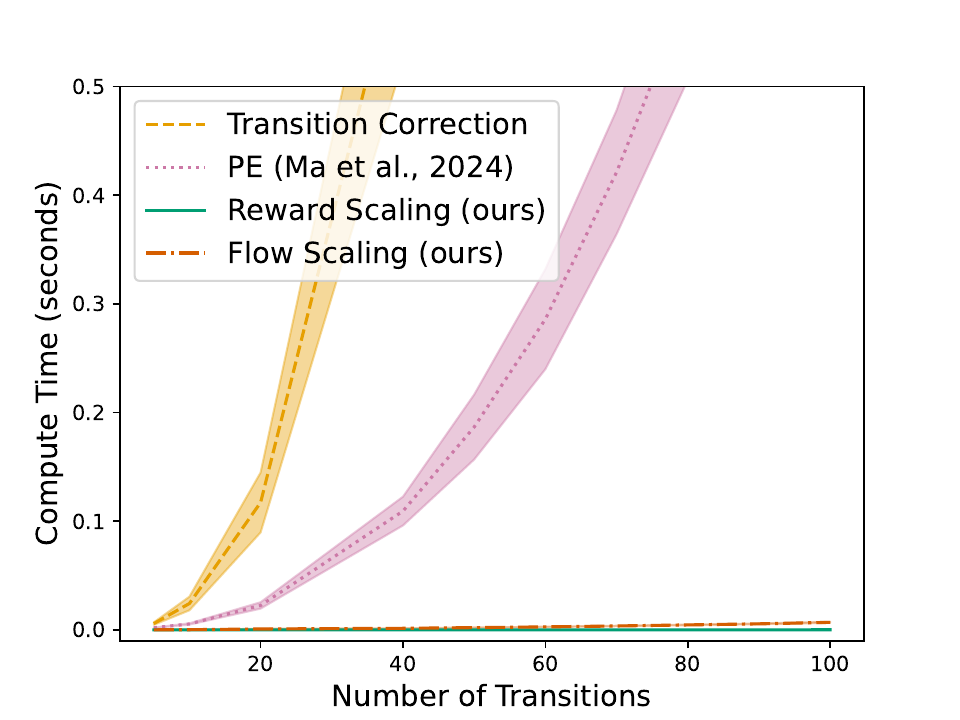}
    \caption{}
    \label{fig:uniform-GNN-failure}
    \vspace{-20pt}
\end{wrapfigure}

We evaluated the computation time for each method, varying the number of transitions per trajectory. We sampled 100 random graphs for each horizon length category. We measured the time spent on only the major components of each method. This includes:  
1) multiple isomorphism tests for \textbf{Transition Correction};  
2) computing positional encodings and matching encodings to identify isomorphic graphs for \textbf{PE};
3) computing automorphisms of the final graph for \textbf{Reward Scaling};  
4) computing automorphisms of all intermediate graphs for \textbf{Flow Scaling}.  

When computing PEs, we used $k=8$. Note that methods 1) and 2) must be applied for both forward and backward actions. The table below reports the additional computational cost incurred by each method per trajectory, where Vanilla GFlowNets is considered to incur zero additional cost.

\begin{table}[th]
\centering
\begin{tabular}{lccc}
\toprule
\textbf{Method} & \textbf{10 Transitions (ms)} & \textbf{50 Transitions (ms)} & \textbf{100 Transitions (ms)} \\
\midrule
Transition Correction & 24.32 ± 6.28 & 1148 ± 240.3 & 7354 ± 1288 \\
PE (Ma et al., 2024)  & 5.49 ± 0.58 & 186.7 ± 29.87 & 997.5 ± 133.9 \\
Reward Scaling (Ours) & 0.024 ± 0.002 & 0.063 ± 0.004 & 0.111 ± 0.008 \\
Flow Scaling (Ours)   & 0.215 ± 0.025 & 2.106 ± 0.116 & 6.975 ± 0.421 \\
\bottomrule
\end{tabular}
\caption{Runtime comparison for computing correction terms under different methods.}
\label{tab:runtime-comparison}
\end{table}

While the cost increases for all methods as the number of transitions grows, our method clearly scales better. It is important to note that the time differences accumulate over the entire training duration. Experiments were performed using Intel Xeon Silver 4216 CPU.

\section{Experimental Details}
\label{app:experimental-details}

\subsection{Graphs-building Environments}

For graph-building environments, both for illustrative example and synthetic graphs, we stacked 5 GPS layers with 256 embedding dimensions \citep{rampavsek2022recipe}. To increase representation power, we augmented node features to increase representation power. We augmented node features with one-hot node degree, clustering coefficient, and 8 dimensions of random-walk positional encoding \citep{dwivedi2021graph}. Importantly, edge features were computed by summing node features outputted from GNN layers, which, in turn, concatenated with shortest path length between two nodes connected by the edge. This is to prevent representation collapse between different orbits (see \appref{app:Expressive Power of GNNs}). 

\paragraph{Illustrative Example.} For the illustrative experiment, homogeneous graphs were constructed edge by edge, allowing only \texttt{AddEdge} and \texttt{Stop} actions. The initial state consisted of six disconnected nodes, and terminal states corresponded to connected graphs without isolated nodes. The number of terminal states, $|\gX|$, is $112$.

We trained the models for 30,000 updates using the TB objective. During the first 16,000 steps, each update used a batch of 128 trajectories, comprising 32 samples from the current policy and 96 samples drawn from the replay buffer. We used the Adam optimizer \citep{kingma2014adam} with the default parameters from \texttt{PyTorch} \citep{NEURIPS2019_9015} settings, except for the learning rates: 0.0001 for GNN layers and 0.01 for the normalizing constant $Z$. For the remaining steps, we increased batch size to 256 and annealed the learning rate to 0.00001. To encourage exploration, the policy selected actions uniformly with a probability of 0.9.

\paragraph{Synthetic Graphs.} For the Synthetic Graphs experiment, we followed the environment setup described in \citep{ma2024baking}. Each node can be one of two types, and graphs can contain up to 7 nodes, resulting in a total of 72,296 terminal states. The generation process starts from the empty graph, from which the policy incrementally adds nodes and edges. The reward function was also adopted from \citep{ma2024baking}. 

Models were updated 50,000 times using both the TB and DB objectives. For the TB objective, 64 trajectories were used for updates, with 16 sampled from the behavior policy. For the DB objective, 16 trajectories were sampled per step and converted into transitions, with each update utilizing 512 transitions. We used the Adam optimizer with a learning rate 0.0001 for GNN layers and 0.01 for the normalizing constant $Z$.

\paragraph{PE Implementation Details.} Following \citet{ma2024baking}, we implemented the positional embedding (PE) method as follows: First, the random walk matrix is computed as $(AD^{-1})^k$, where $A$ is the adjacency matrix, and $D$ is the degree matrix with node degrees on the diagonal. Next, we multiply the matrix by a vector $c$ representing node types. We constructed the vector $c$ by setting each element to $\log(t + 2)$, where $t \in \{0, 1, \dots\}$ denotes the node type ID. By varying $k$ from 0 to 7, we obtained 8-dimensional PEs for each node. Edge-level PEs were then computed by summing the embeddings of the corresponding node pairs. Finally, actions sampled from the policy were compared with other candidate actions based on their PEs to identify equivalence.

\subsection{Molecule Generation}
\label{app:experimental-details:Molecule Generation}
We conducted experiments on small molecule generation tasks following \cite{bengio2021flow, jain2023multi}. More detailed task descriptions can be found in these previous works. We used a open-source code for tasks.\footnote{\url{https://github.com/recursionpharma/gflownet}} We used a graph transformer architecture \citep{yun2019graph} with the hyperparameters summarized in \tabref{tab:hyperparameters-atom} and \tabref{tab:hyperparameters-fragment}. In GFlowNets, the reward exponent $\beta$ is used to focus sampling on high-reward regions in the state space. The correction is applied after rewards are exponentiated: $C(x)R(x)^{\beta}$, where $C(x)$ is the correction term.

\begin{table}[h]
\caption{Hyperparameters for atom-based experiments}
\label{tab:hyperparameters-atom}
\begin{center}
\begin{tabular}{lll}
\hline
& \bf Hyperparameters  &\bf Values \\
\hline
\multirow{7}{4em}{Training} & 
    Learning Rate ($p_\gE, Z$) &0.0005\\
    & Batch Size (Online)     &32 \\
    & Batch Size (Buffer)     &32 \\
    & Uniform Exploration $\epsilon$  &0.1 \\
    & Gradient Clipping (Layer-wise Norm) &10.0 \\
    & Reward Exponent $\beta$        &1 \\
    & Number of Updates  & 30,000 \\
\hline
\multirow{5}{4em}{Model} & 
    Architecture          & Graph Transformer  \\
    & Number of Layers     & 4 \\
    & Number of Heads     & 4 \\
    & Number of Embeddings  & 128 \\
    & Number of Final MLP Layers  & 1 \\
\hline
\end{tabular}
\end{center}
\end{table}

\begin{table}[h]
\caption{Hyperparameters for fragment-based experiments}
\label{tab:hyperparameters-fragment}
\begin{center}
\begin{tabular}{lll}
\hline
& \bf Hyperparameters  &\bf Values \\
\hline
\multirow{8}{4em}{Training} & 
    Learning Rate ($p_\gE$) &0.0001\\
    & Learning Rate ($Z$) &0.001\\
    & Batch Size (Online)     &32 \\
    & Batch Size (Buffer)     &32 \\
    & Exploration $\epsilon$  &0.1 \\
    & Gradient Clipping (Layer-wise Norm) &10.0 \\
    & Reward Exponent $\beta$        &16 \\
    & Number of Updates  & 30,000 \\
\hline
\multirow{5}{4em}{Model} & 
    Architecture          & Graph Transformer  \\
    & Number of Layers     & 5 \\
    & Number of Heads     & 4 \\
    & Number of Embeddings  & 256 \\
    & Number of Final MLP Layers  & 2 \\
\hline
\end{tabular}
\end{center}
\end{table}

\paragraph{Evaluation Metrics.} Evaluation metrics we presented in \tabref{tab:mol-results} are defined as follows:

\begin{itemize}
    \item \textbf{Diversity.} The average pairwise Tanimoto distance between molecules sampled from the trained policy.
    \item \textbf{Top $K$ diverse.} Diversity among the top $K$ reward molecules.
    \item \textbf{Top $K$ reward.} The average reward of the top $K$ molecules.
    \item \textbf{Diverse top $K$.} The average reward of the top $K$ molecules, ensuring that each pair has a Tanimoto distance greater than $0.7$.
    \item \textbf{Unique fraction.} The fraction of unique molecules in the generated samples.
\end{itemize}

We selected $K=50$, which corresponds to the top 10\% of molecules for our evaluation. When reporting rewards, we adjust them to remove the effects of reward scaling and reward exponents.

For the Pearson correlation evaluation presented in \figref{fig:mol-pearson}, terminal states were sampled by uniformly selecting random actions. The model likelihood was computed using \eqnref{eqn:model-likelihood}, with a modified correction term in \thmref{thm:Fragment correction}. We set $M=5$ and used 2,048 samples for the test set.

\subsection{Fragment Correction Method}

For fragment-based molecule generation, we used a predefined set of fragments and attachment points provided by \citet{bengio2021flow}. There are a total of 72 fragments, each with a varying number of attachment points. Our method requires pre-computing the number of automorphisms for each fragment. In \figref{fig:fragments72}, we present the number of automorphisms for each fragment used in our experiment. As discussed in \appref{app:fragment-based}, attachment points were treated as distinct attributes when counting automorphisms.

\begin{figure}[h]
\centering
    \includegraphics[width=1.0\textwidth]{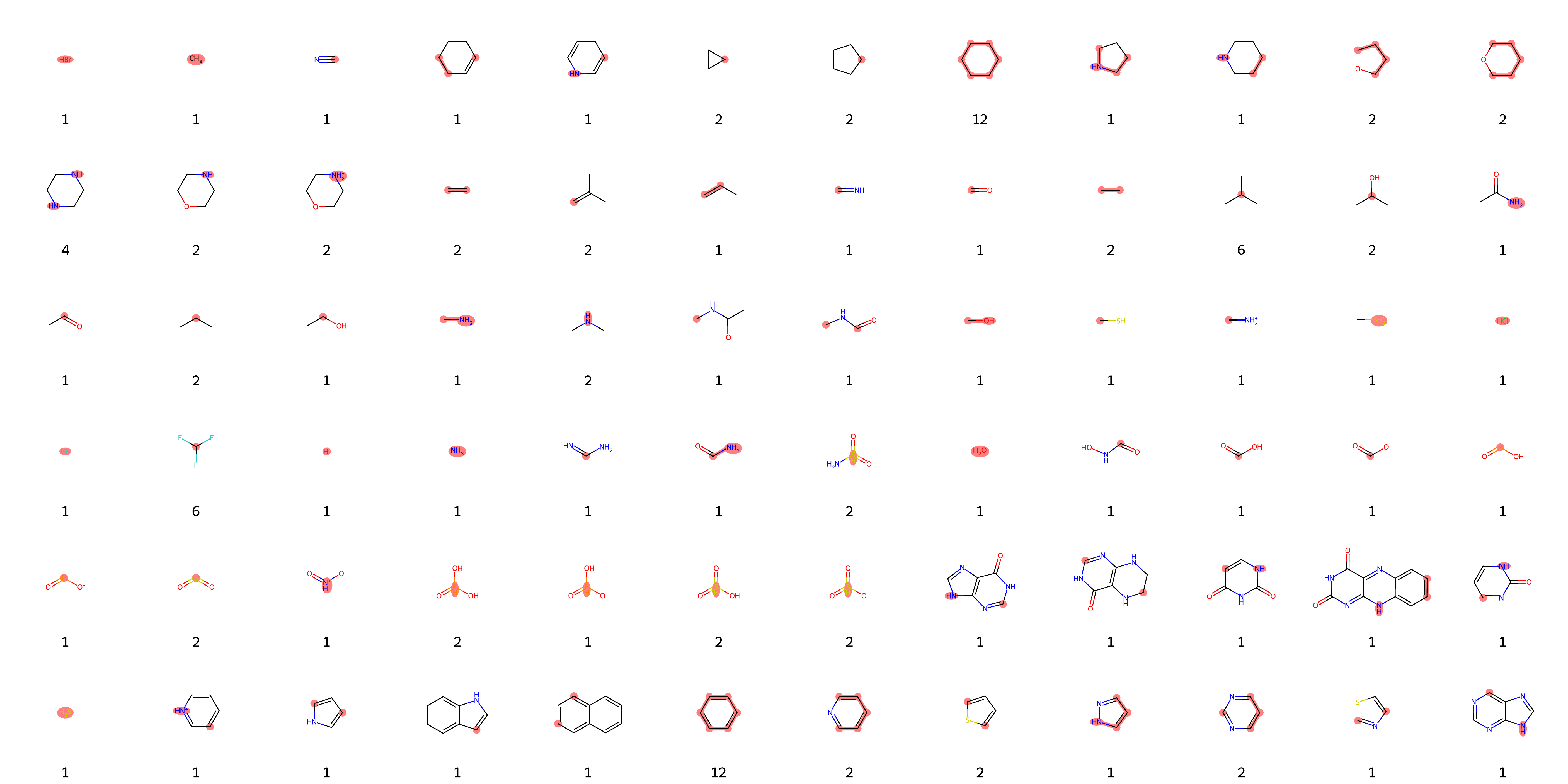}
    \caption{Predefined fragment set used for the fragment-based task. Attachment points, where a single bond can connect to another fragment, are highlighted in red. The numbers indicate the order of the automorphism group of each fragment, $|\Aut(C)|$.}
\label{fig:fragments72}
\end{figure}

\subsection{Approximate Correction Method}
\label{app:Approximate Correction Method}

Additionally, we experimented with a simplified version where the correction is applied approximately for the fragment-based task. While we can compute exact correction term as in \eqnref{eqn:frag-correction}, this approximation provides computational benefits, as it avoids counting automorphisms. Moreover, similar approximations can be easily implemented even for more complex generation schemes that do not fit into \eqnref{eqn:frag-correction}. The approximation works as follows: we assign a number to each fragment based on how many equivalent actions it is likely to incur during generation. We adjust the final rewards by dividing them by the product of the assigned numbers $N$ for the constituent fragments: $R(G)/\prod_{i=1}^kN(C_i)$.

We assigned the number $N$ to each fragment based on how likely it is to incur forward equivalent actions. This is because fragments that incur multiple forward equivalent actions are more likely to be selected if no adjustment is applied. For example, cyclohexane (\texttt{C1CCCCC1}) has six attachment points, all in the same orbit, so it will always incur at least six forward equivalent actions in subsequent steps. In contrast, even if a fragment is highly symmetric, if it has only one attachment point, it will incur no equivalent actions. We assigned $N=1$ to such fragments.

Assuming backward equivalent actions are relatively rare in fragment-based generation, this approximation should closely match the unbiased correction. These numbers were assigned through visual inspection of the fragments. The full set of fragments and their assigned numbers for the approximate correction is provided in \figref{fig:approx-fragments72}.

\begin{figure}[h]
\centering
    \includegraphics[width=1.0\textwidth]{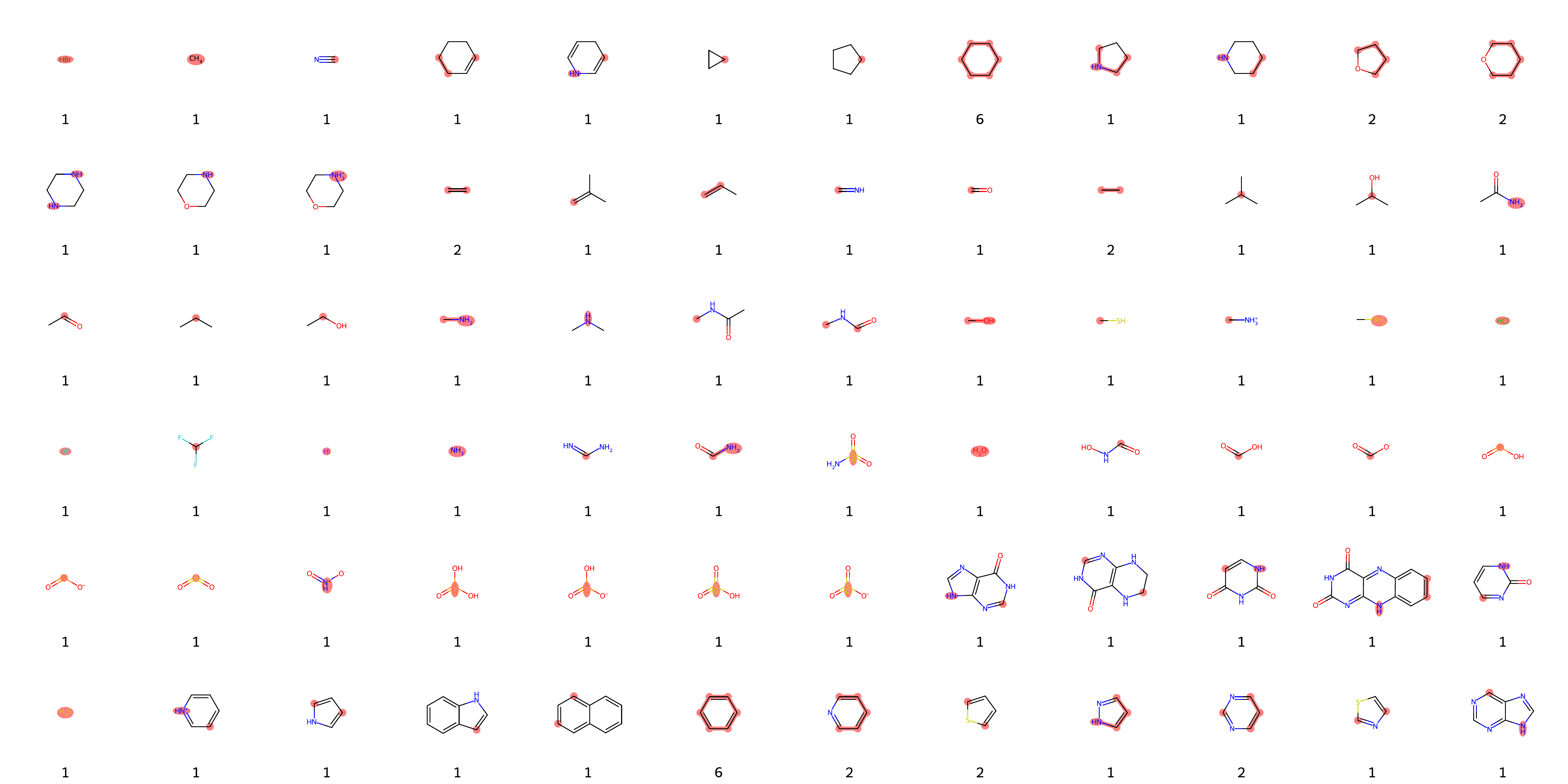}
    \caption{Predefined fragment set used for the fragment-based task. Attachment points are highlighted in red. The numbers below each molecule are used for approximate correction.}
    \label{fig:approx-fragments72}
\end{figure}

\section{Additional Experimental Results}
\label{app:Additional-Experimental-Results}

\subsection{Synthetic Graphs}

\begin{figure*}[t]
\begin{subfigure}{0.45\textwidth}
  \centering
  \includegraphics[width=\linewidth]{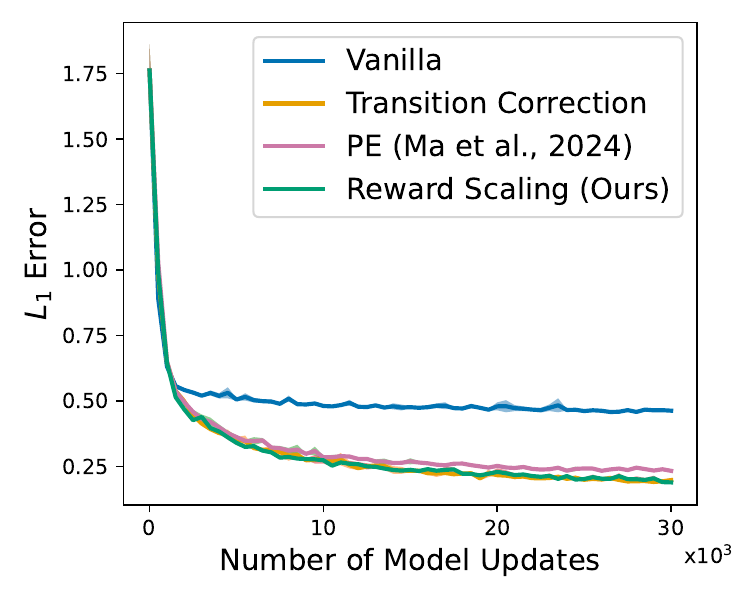}
  \caption{Synthetic (TB)}
  \label{appfig:sfig1}
\end{subfigure}
\hfill
\begin{subfigure}{0.45\textwidth}
  \centering
  \includegraphics[width=\linewidth]{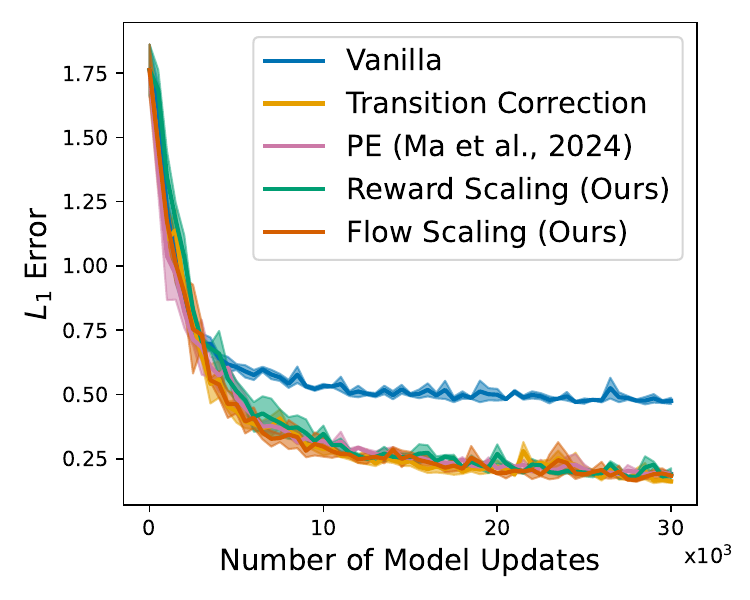}
  \caption{Synthetic (DB)}
  \label{appfig:sfig2}
\end{subfigure}
\caption{Results on synthetic experiments, where rewards are given based on the number of cycles in the graph.}
\label{appfig:synthetic results}
\vspace{-10pt}
\end{figure*}

In addition to the clique environment presented in the main text, we conducted similar experiments under a different reward structure. We restricted the maximum number of nodes and edges to 10, and limited the maximum node degree to 4, thereby constraining the state space to $|\gX| = 2,999$. Rewards were defined as 1 + (number of cycles). The hyperparameters are not tuned and remain the same as those explained in \appref{app:experimental-details}.

For the TB objective shown in \figref{appfig:synthetic results}~(a), the PE method slightly underperforms compared to Reward Scaling, which is consistent with the results in the clique environment. In contrast, for the DB objective shown in \figref{appfig:synthetic results}~(b), where intermediate probabilities play a crucial role, the PE method performs comparably to Reward Scaling. In all cases, Transition Correction and Flow Scaling perform the best.

\subsection{Molecule Generation}

We investigated which fragments were sampled by each method. We sampled 5,000 terminal states from trained models, resulting in 44,974 and 44,978 fragments sampled from vanilla and corrected model, respectively. Symmetric fragments were found to be sampled more frequently in vanilla model, which aligns with our projection, as the fragment correction in \eqnref{eqn:frag-correction} penalizes symmetric components. However, the proportions of fragments between the two methods are not exactly proportional to the magnitude of the corrections, as some fragments are more likely to occur together (they are not independent).

\begin{figure}[h]
\centering
    \includegraphics[width=0.8\textwidth,trim=4 4 4 4,clip]{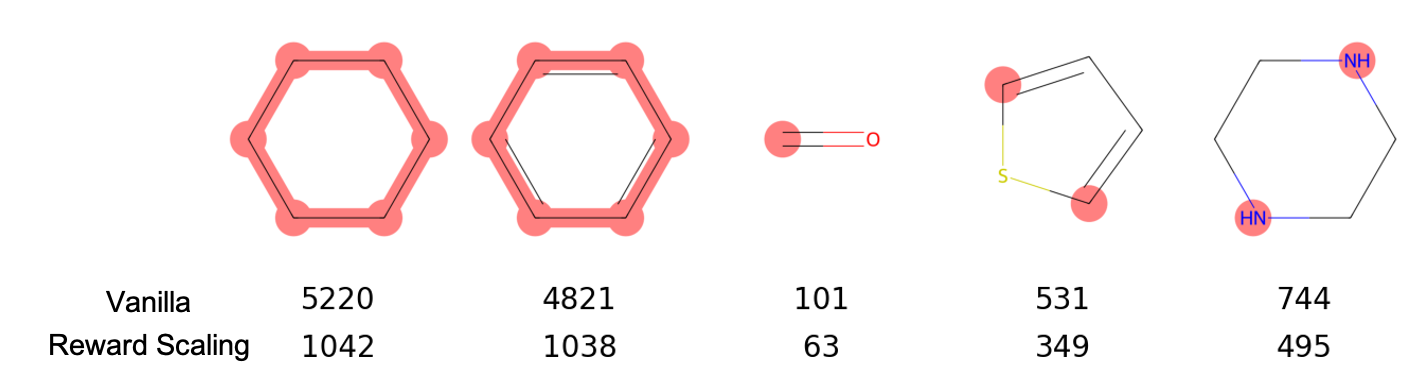}
    \caption{The number of sampled fragments from 5,000 terminal states for vanilla model and corrected model. We display the 5 fragments that were sampled most disproportionately. Attachment points are highlighted in red.}
\label{fig:sym_frags}
\end{figure}

\end{document}